\documentclass[twoside]{article}

\usepackage[preprint]{aistats2026}
\usepackage[
backend=biber,
style=authoryear,
natbib=true,
maxnames=2
]{biblatex}
\usepackage{our-style}
\usepackage{xfrac}
\begin{document}

%

%

\twocolumn[

\aistatstitle{A Unifying Framework for Unsupervised Concept Extraction}

\aistatsauthor{ Chandler Squires \And Pradeep Ravikumar }

\aistatsaddress{ Machine Learning Department, CMU \And Machine Learning Department, CMU} ]

\begin{abstract}
    Techniques for \emph{concept extraction}, such as sparse autoencoders and transcoders, aim to extract high-level symbolic concepts from low-level nonsymbolic representations.
    When these extracted concepts are used for downstream tasks such as model steering and unlearning, it is essential to understand their guarantees, or lack thereof.
    In this work, we present a unified theoretical framework for unsupervised concept extraction, in which we frame the task of concept extraction as identifying a generative model.
    We present a general meta-theorem for identifiability, which reduces the problem of establishing identifiability guarantees to the problem of characterizing the intersection of two sets.
    As we demonstrate on a range of widely-used approaches, this meta-theorem substantially simplifies the task of proving such guarantees, thus paving the way for the development of new, principled approaches for concept extraction.
    
\end{abstract}

\addtocontents{toc}{\protect\setcounter{tocdepth}{0}}

\section{INTRODUCTION}\label{sec:intro}

As large neural networks have become the de facto standard for many machine learning tasks, there has been growing interest in understanding the internal representations that they learn.
Such an understanding promises to enable several downstream capabilities, e.g. intervening on these representations can be an effective approach for post-training tasks such as model steering, unlearning, and representation alignment \citep{turner2023activation,ashuach2025crisp,sucholutsky2023getting}.

A key challenge to gaining such an understanding is that neural representations are often \emph{entangled} and \emph{distributed}, rather than \emph{disentangled} and \emph{local}.
More formally, consider a pre-trained encoder $\Phi\colon \cX \to \cZ$ mapping from a high-dimensional input space $\cX$ to a neural \emph{feature space} $\cZ$.
For example, $\Phi$ may be the first 10 layers of an image classification model, which maps an image $x \in \cX$ to a neural activation vector $z = \Phi(x) \in \cZ$.
Typically, the individual coordinates of $z$ are \emph{polysemantic}, i.e., each coordinate $z_i$ represents a combination of many human-interpretable features rather than a single one \citep{elhage2022toy}.

To address this challenge, a common approach is to extract high-level \emph{concepts} from these neural representations, e.g. by training a sparse autoencoder to reconstruct the neural representations \citep{cunningham2023sparse}.
Such approaches specify a \emph{concept space} $\cC$ and find some \emph{concept extractor} $h\colon \cZ \to \cC$.
The concept extractor is typically taken to be injective, so that an intervention on the concepts (i.e., a function $f_c\colon \cC \to \cC$) can be translated into a corresponding intervention $f_z = h^{-1} \circ f_c \circ h$ on the neural features.

Unfortunately, many existing approaches for concept extraction do not reliably recover a ``canonical" set of features, e.g. for different random seeds, the same sparse autoencoder can recover very different concept extractors $h$ and $h'$ from the same data \citep{leask2025sparse,paulo2025sparse}.
This ambiguity both hinders interpretability and casts doubt on the prospect of using the extracted concepts for downstream tasks, since the same concept-level intervention will result in different feature-level interventions \citep{meloux2025everything,song2025position}.
To avoid these issues, it is essential to treat the task of concept extraction more rigorously, and develop new methods with appropriate uniqueness guarantees.

In this work, we consider the task of concept extraction from a rigorous statistical viewpoint, using the tools of \emph{identifiability theory}.
We treat concepts as latent random variables $\bC$ sampled from some unknown distribution $Q$ belonging to some known class $\cQ$, and neural features as observed random variables $\bZ$ which depend on $\bC$ in an unknown fashion.

In the deterministic setting (i.e., $\bZ = g(\bC)$ for some unknown, injective $g$), the task of concept extraction is thus equivalent to identifying $h = g^{-1}$ up to simple symmetries, e.g. relabelings of the concepts.
In this setting, the choice of class $\cQ$ can be seen as an \textit{explicit} description of the \textit{implicit} assumptions imposed by different sparse autoencoders \citep{hindupur2025projecting}.
More generally, our framework permits \emph{stochastic} relationships between concepts and features, opening the door to a variety of more realistic cases.

\paragraph{Contributions}
Our contributions are four-fold:
\begin{enumerate}[itemsep=1pt,topsep=1pt]
    \item We introduce a new theoretical framework for unsupervised concept extraction based on \emph{latent concept generative models (LC-GMs)}, which are models that can be decomposed into a \emph{concept generator} $Q$ and a \emph{mixing kernel} $K$ (\Cref{sec:setup}).

    \item We present general meta-theorems for the identifiability of a class $\cM$ of LC-GMs (\Cref{sec:results}).
    Most importantly, our \emph{intersection theorem} (\Cref{thm:intersection}) characterizes identifiability via the intersection of two \emph{valid transition sets}, extending prior work by \citet{xi2023indeterminacy} to model classes which allow for stochastic mixing.

    \item To highlight the meta-theoretical utility of our framework, we show how our theorems can recover diverse existing identifiability results, e.g. in dictionary learning, independent component analysis, and mixture modeling (\Cref{sec:applications}, \Cref{appendix:finite-mixture-models}).

    \item To highlight the potential practical utility of our framework, we discuss its implications for method development, including connections between concept extraction, generative modeling, and amortized posterior inference (\Cref{sec:methodological}).
    
\end{enumerate}


\section{NOTATION}\label{sec:notation}

As is common in identifiability theory, our results are given in the \emph{population setting}, i.e., we work directly with probability distributions rather than random variables and samples.
As such, the mathematical context for this work is measure-theoretic probability, see \Cref{appendix:background} for key definitions and references.

In this paper, we assume all spaces are \defword{Polish Borel spaces}.
These spaces enjoy several useful properties, which we introduce as needed.
Two basic examples include the sets $\bbR^d$ and $\{ 1, 2, \ldots, d \}$ with their standard topologies and $\sigma$-algebras.
In this section, we let $\cU$, $\cV$, and $\cW$ be Polish Borel spaces.

\paragraph{Distributions}

For any measure $\mu$ on $\cV$ (including probability distributions), we define the \defword{support} of $\mu$, denoted $\supp(\mu)$, as the smallest closed set $F \subseteq \cV$ such that $\mu(\cV \setminus F) = 0$.
We let $\cP(\cV)$ denote the set of probability distributions on $\cV$.
We say that $Q$ is \defword{absolutely continuous} with respect to a measure $\mu$ if $\mu(A) = 0$ implies $Q(A) = 0$ for any measurable set $A$, written $Q \ll \mu$.
If $Q \ll \mu$ and $\mu \ll Q$, we write $Q \sim \mu$.
Finally, given a point $v \in \cV$, we let $\dirac_v \in \cP(\cV)$ denote the delta distribution at $v$.


\paragraph{Markov Kernels}
We let $\cK(\cU \to \cV)$ denote the set of Markov kernels from $\cU$ to $\cV$.
For a Markov kernel $K$ from $\cU$ to $\cV$, we let $K(\cdot \mid u)$ denote the associated probability distribution under $u \in \cU$.
Given two Markov kernels $K, K' \in \cK(\cU \to \cV)$ and a measure $\mu$ on $\cU$, we write $K \sim_\mu K'$ to denote that $K$ and $K'$ are equal $\mu$-almost everywhere (abbreviated $\mu$-a.e.).

For $K \in \cK(\cU \to \cV)$, we define the \defword{pushforward operator} $K_\sharp\colon \cP(\cU) \to \cP(\cV)$ as follows:
\begin{align*}
    K_\sharp&\colon Q \mapsto
    \textstyle{\int}_\cU K(\cdot \mid u) \cdot Q(du).
\end{align*}
A \defword{posterior kernel} of $K$ with respect to $Q$ is a Markov kernel $L \in \cK(\cV \to \cU)$ which satisfies the condition $K(dv \mid du) \cdot Q(du) = L(du \mid v) \cdot P(dv)$, where $P = K_\sharp(Q)$.
In Polish Borel spaces, a posterior kernel always exists and is unique $P$-a.e.; we denote any such kernel as $K^\dagger_Q$.

We say that $K$ is \defword{measure-separating} if its associated pushforward operator $K_\sharp$ is injective.
We let $\cK_\sep(\cU \to \cV)$ denote the set of measure-separating Markov kernels from $\cU$ to $\cV$.\footnote{More generally, we say $K$ is \defword{measure-separating on $\cP' \subseteq \cP$} if the restriction $K_\sharp|_{\cP'}$ is injective, and we denote the set of such kernels as $\cK_\sep(\cU \to \cV ; \cP')$.}
As a special case, we let $\MeasInj(\cU \to \cV)$ denote the set of injective measurable functions from $\cU$ to $\cV$, and we let $\ContMeasInj(\cU \to \cV)$ denote the set of continuous injective measurable functions from $\cU$ to $\cV$.
For a measurable function $g\colon \cU \to \cV$, we let $\Dirac_g \in \cK(\cU \to \cV)$ denote the \defword{Dirac kernel} associated with $g$, i.e., $\Dirac_g\colon u \mapsto \dirac_{g(u)}$.

Given two Markov kernels $K \in \cK(\cU \to \cV)$ and $L \in \cK(\cV \to \cW)$, we let $LK \in \cK(\cU \to \cW)$ denote their composition, defined as
\begin{equation*}
    (LK)\colon u \mapsto \textstyle{\int}_{\cV} L(\cdot \mid v) \cdot K(dv \mid u).
\end{equation*}
For Polish Borel spaces, pushforward and composition can be interchanged, i.e., $(LK)_\sharp = L_\sharp \circ K_\sharp$.
We refer to this property as \defword{functoriality}.
Since a probability distribution $Q \in \cP(\cU)$ can be treated as Markov kernel from the empty set to $\cU$, we also use the notation $KQ = K_\sharp(Q)$.
In particular, by functoriality, $(LK)(Q) = L(KQ)$.

\paragraph{Composition Notation}
For two sets of Markov kernels $\cK \subseteq \cK(\cU \to \cV)$ and $\cL \subseteq \cK(\cV \to \cW)$, we let
\begin{equation*}
    \cL \circ \cK \defeq \{ LK : L \in \cL, K \in \cK \}.
\end{equation*}
In \Cref{appendix:background}, we give an extension of this notation when $\cL$ or $\cK$ are replaced by sets of functions, or when $\cK$ is replaced by a set of probability distributions.

\paragraph{Matrix-Vector Notation}
For several examples, we use discrete spaces $\cU$ and $\cV$.
In such settings, distributions and Markov kernels can be conveniently represented by (labelled) vectors and matrices, respectively.
In this paper, we use column-stochastic convention, i.e., distributions are column vectors, and Markov kernels are column-stochastic matrices.
For example, consider $\cU = \{ a, b \}$ and $\cV = \{ u, v \}$.
Then
\vspace{-10pt}
\begin{equation*}
    Q = \bordermatrix{
    & \cr
    \mylabel{a} & \beta_a \cr
    \mylabel{b} & \beta_b \cr
    }\quad\text{and}\quad
    K = \bordermatrix{
      & \mylabel{a} & \mylabel{b} \cr
    \mylabel{u} & \beta_{au} & \beta_{bu} \cr
    \mylabel{v} & \beta_{av} & \beta_{bv} \cr
    }
\end{equation*}
are the distribution $Q = \beta_a \cdot \delta_a + \beta_b \cdot \delta_b$ and the Markov kernel $K$ with $K(\cdot \mid a) = \beta_{au} \cdot \delta_u + \beta_{av} \cdot \delta_v$ and $K(\cdot \mid b) = \beta_{bu} \cdot \delta_u + \beta_{bv} \cdot \delta_v$, respectively.



\section{LATENT CONCEPT GENERATIVE MODELS}\label{sec:setup}

Let $\cC$ and $\cZ$ be Polish Borel spaces, which we call a \defword{concept space} and \defword{feature space}, respectively.
Though we typically consider $\cC \subseteq \bbR^d$ or $\cC = \{ 1, 2, \ldots, d \}$, we note that the Polish Borel condition is quite weak.
Thus, our results can be applied to many complex spaces, e.g. some function spaces or spaces of structured objects such as graphs.
A \defword{concept distribution} is a distribution over $\cC$, and a \defword{mixing kernel} is a Markov kernel from $\cC$ to $\cZ$.

\subsection{Basic Definitions}
As described in \Cref{sec:intro}, we consider the task of concept extraction from a statistical viewpoint, in which the concepts are latent random variables which generate the observed neural features.
We formalize this generation process as a \defword{latent concept generative model (LC-GM)} $\M$ from $\cC$ to $\cZ$, which is a tuple $\M = (Q, K)$ with $Q \in \cP(\cC)$ and $K \in \cK(\cC \to \cZ)$.\footnote{
We note that LC-GMs are deep latent variable models (\textit{c.f.} \citet[Section 1.7]{diederik2019introduction}) where \emph{both} the ``prior" $Q$ and the ``stochastic decoder" $K$ are unknown.
}

Given a model $\M = (Q, K)$, the \defword{induced feature distribution} of $\M$, denoted $P^\M \in \cP(\cZ)$, is
\begin{equation*}
    P^\M 
    \defeq 
    KQ
\end{equation*}
i.e., $P^\M$ represents the marginal distribution over features $\bZ$ when first sampling concepts $\bC$ from $Q$, then sampling $\bZ$ from the distribution $K(\cdot \mid \bC)$.

Similarly, the \defword{induced concept extractor}, denoted $H^\M \in \cK(\cZ \to \cC)$, is defined as $H^\M = K^\dagger_Q$, where $K^\dagger_Q$ is the posterior kernel of $K$ with respect to $Q$.
We illustrate these two concepts with an example:


\begin{example}[Induced objects]\label{example:ifd}
    Consider the concept space $\cC = \bbR^d$ and the feature space $\cZ = \bbR^p$.

    For a left-invertible matrix $\matG \in \bbR^{p \times d}$, let $\M = (Q, K)$, where $Q = \cN(\bzero, \matI_d)$ and $K\colon \bc \mapsto \delta_{\matG \cdot \bc}$.
    The induced feature distribution of $\M$ is $P^\M = \cN(\bzero, \matG \cdot \matG^\top)$, and the induced concept extractor is $H^\M\colon \bz \mapsto \delta_{\matH \cdot \bz}$, where $\matH \in \bbR^{d \times p}$ is the left inverse of $\matG$.
\end{example}

We let $\cM_\all(\cC, \cZ)$ denote the set of latent concept generative models from $\cC$ to $\cZ$, i.e., 
\begin{equation*}
    \cM_\all(\cC, \cZ) \defeq \cP(\cC) \times \cK(\cC \to \cZ).
\end{equation*}
Throughout this section, let $\cC$ and $\cC'$ be concept spaces, let $\M = (Q, K)$ be a LC-GM from $\cC$ to $\cZ$, and let $\M' = (Q', K')$ be a LC-GM from $\cC'$ to $\cZ$.

\subsection{Feature Equivalence}
When performing concept extraction in the population setting, we observe a distribution $P^* \in \cP(\cZ)$ over features, and aim to find some LC-GM $\M$ such that $P^\M = P^*$.
Intuitively, such a model $\M$ represents a possible explanation for the observed distribution $P$, and a possible concept extractor $H^\M$.

However, multiple LC-GMs can induce the same feature distribution, but induce different concept extractors.
In particular, we say $\M$ and $\M'$ are \defword{feature equivalent} if $P^\M = P^{\M'}$.
As the following simple example shows, feature equivalence does not imply $H^\M = H^{\M'}$, even up to relabeling.

\begin{example}[Feature equivalence]\label{ex:feature-equivalence}
    Consider the spaces $\cC = \{ a, b \}$, $\cC' = \{ c, d \}$, and $\cZ = \{ u, v \}$.
    Let $\M = (Q, K)$, and let $\M' = (Q', K')$, for
    \vspace{-10pt}
    \begin{equation*}
    \begin{aligned}
        K &= \bordermatrix{
          & \mylabel{a} & \mylabel{b} \cr
        \mylabel{u} & \sfrac{2}{3} & \sfrac{1}{3} \cr
        \mylabel{v} & 0 & 1 \cr
        },
        &\qquad
        Q &= \bordermatrix{
        & \cr
        \mylabel{a} & \sfrac{3}{4}  \cr
        \mylabel{b} & \sfrac{1}{4}
        },
        \\
        K' &= \bordermatrix{
          & \mylabel{c} & \mylabel{d} \cr
        \mylabel{u} & 1 & 0 \cr
        \mylabel{v} & 0 & 1 \cr
        },
        &\qquad
        Q' &= \bordermatrix{
        & \cr
        \mylabel{c} & \sfrac{1}{2}  \cr
        \mylabel{d} & \sfrac{1}{2}
        }.
    \end{aligned}
    \end{equation*}
    Then $P^\M = P^{\M'} = \frac{1}{2}(\delta_u + \delta_v)$.
    However, $H^\M \neq H^{\M'}$:
    \vspace{-10pt}
    \begin{equation*}
        H^\M = \bordermatrix{
            & \mylabel{u} & \mylabel{v} \cr
          \mylabel{a} & 1 & \sfrac{1}{2} \cr
          \mylabel{b} & 0 & \sfrac{1}{2}
        }
        \quad
        H^{\M'} = \bordermatrix{
            & \mylabel{u} & \mylabel{v} \cr
          \mylabel{c} & 1 & 0 \cr
          \mylabel{d} & 0 & 1
        }.
    \end{equation*}
\end{example}

\subsection{Blackwell Coarsening and Equivalence}\label{sec:blackwell-coarsening}
In \Cref{sec:results}, we will consider restricted model classes $\cM \subseteq \cM_\all(\cC, \cZ)$, and our main interest will be developing \emph{necessary} conditions for feature equivalence.
To motivate these restrictions, we first examine an important \emph{sufficient} condition for feature equivalence.

Inspired by connections to Blackwell's theory of experiments \citep{blackwell1953equivalent}, we introduce the \defword{Blackwell coarsening relation} on LC-GMs.
Returning to the general setting with potentially distinct concept spaces $\cC$ and $\cC'$, consider a \defword{transition kernel} $T \in \cK(\cC \to \cC')$.
We say $\M$ is \defword{Blackwell coarser than $\M'$ under $T$} if the following conditions hold:
\begin{enumerate}[itemsep=1pt,topsep=1pt]
    \item \textbf{Measure refinement}: $Q' = TQ$.
    \item \textbf{Kernel coarsening}: $K \sim_Q K'T$.
\end{enumerate}
For convenience, we will often simply state that $\M$ is Blackwell coarser than $\M'$, denoted $\M \Bleq \M'$, if there exists some $T \in \cK(\cC \to \cC')$ such that these relations hold.\footnote{Using $\Bleq$ is justified since Blackwell coarsening is transitive: if $\M \Bleq \M'$ under $T$ and $\M' \Bleq \M''$ under $T'$, then $\M \Bleq \M''$ under $T' T$, see \Cref{proofs:blackwell-coarsening}.}
It is straightforward to show that Blackwell coarsening is sufficient for feature equivalence:
\begin{prop}\label{prop:blackwell-duality-sufficient}
    If $\M \Bleq \M'$, then $\M$ and $\M'$ are feature equivalent.
    In particular, if $Q' = T Q$ and $K \sim_Q K' T$, then $K Q = K' Q'$.
\end{prop}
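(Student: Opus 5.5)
The plan is a direct computation: rewrite $K'Q'$ using measure refinement, then use functoriality and kernel coarsening to reduce it to $KQ$. Concretely, suppose $\M \Bleq \M'$ under some $T \in \cK(\cC \to \cC')$, so that $Q' = TQ$ and $K \sim_Q K'T$. We want $P^{\M'} = K'Q' = KQ = P^\M$.

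The chain of equalities is
\begin{equation*}
    K'Q' = K'(TQ) = (K'T)Q = KQ.
\end{equation*}
The first equality substitutes measure refinement. The second is functoriality, $(K'T)_\sharp = K'_\sharp \circ T_\sharp$, as stated in \Cref{sec:notation}, with the distribution $Q$ viewed as a kernel from the empty set.

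The third equality is the only point needing care, since $K'T$ and $K$ agree only $Q$-a.e., not everywhere. I would handle it by writing out the pushforward. Let $N \subseteq \cC$ be a measurable $Q$-null set such that $K(\cdot \mid c) = (K'T)(\cdot \mid c)$ for all $c \notin N$. For any measurable $A \subseteq \cZ$,
\begin{equation*}
    (K'T)Q(A) = \textstyle{\int}_{\cC} (K'T)(A \mid c)\, Q(dc) = \textstyle{\int}_{\cC \setminus N} K(A \mid c)\, Q(dc) = KQ(A).
\end{equation*}
The integrands coincide off $N$, and $N$ contributes nothing because $Q(N) = 0$.

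I expect the main obstacle to be this measurability point: that ``equal $Q$-a.e.'' provides a measurable null set $N$, and that $c \mapsto K(A \mid c)$ is measurable so the integrals make sense. Both follow from the definition of Markov kernels and of $\sim_Q$. If $\sim_Q$ is instead defined setwise, with a null set $N_A$ for each $A$, the same computation goes through for each fixed $A$ separately. Since the identity holds for every measurable $A$, we get $KQ = K'Q'$ as measures, i.e., $P^\M = P^{\M'}$.
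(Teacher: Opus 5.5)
Your proof is correct and matches the paper's argument: substitute $Q' = TQ$, apply functoriality to get $K'(TQ) = (K'T)Q$, and use $K \sim_Q K'T$ to conclude $(K'T)Q = KQ$. Your explicit null-set computation for that last step is a detail the paper leaves implicit, and it is sound.
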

The proof follows directly from substitution and functoriality; see \Cref{proofs:blackwell-coarsening}.
Intuitively, the Blackwell coarsening relation captures a tradeoff between two explanations for randomness in a feature distribution $P^* = P^\M = P^{\M'}$.
When $\M \Bleq \M'$, $K$ is ``more stochastic" than $K'$ (a \emph{garbling} of $K$ in Blackwell's terms), whereas $Q'$ has higher entropy than $Q$.
Put differently, $\M$ emphasizes a kind of measurement uncertainty, while $\M'$ emphasizes natural variability.
This tradeoff is sharply illustrated by our example:

\begin{example}[Blackwell coarsening]
    Let $\M$ and $\M'$ be the models from \Cref{ex:feature-equivalence}.
    It is straightforward to check that $\M$ is Blackwell coarser than $\M'$ under
    \vspace{-6pt}
    \begin{equation*}
        T = \bordermatrix{
          & \mylabel{a} & \mylabel{b} \cr
        \mylabel{c} & 2/3 & 0 \cr
        \mylabel{d} & 1/3 & 1 \cr
        }.
    \end{equation*}
\end{example}

As this discussion and the example should make clear, Blackwell coarsening is not a symmetric relation, e.g., there is no Markov kernel $T' \in \cK(\cC' \to \cC)$ such that $K' = K T'$ for \Cref{ex:feature-equivalence}.
In the special case where the relation does hold in both directions (i.e., $\M \Bleq \M'$, and $\M' \Bleq \M$), we say that $\M$ and $\M'$ are \defword{Blackwell equivalent}, which we denote by $\M \Bequiv \M'$.

\paragraph{Converse}
At first pass, it may seem that Blackwell coarsening could serve as a necessary condition for feature equivalence, at least under conditions such as injectivity of $K_\sharp$ and $K'_\sharp$.
However, a simple modification of \Cref{ex:feature-equivalence}, given in \Cref{appendix:other-examples}, shows that this is not the case: two LC-GMs $\M$ and $\M'$ may be feature equivalent with neither $\M \Bleq \M'$ nor $\M' \Bleq \M$.

Despite this null result, we can already obtain a partial converse to \Cref{prop:blackwell-duality-sufficient}, which will be a key ingredient for our main results.
In particular, if the kernel coarsening condition already holds for $\M$ and $\M'$ which are feature equivalent, then the measure refinement condition automatically holds: 
\begin{lemma}\label{lemma:kernel-implies-measure}
    Assume that $K'$ is measure-separating, and that $K \sim_Q K' T$ for some $T \in \cK(\cC \to \cC')$.
    
    If $P^\M = P^{\M'}$, then $Q' = T Q$, i.e., $\M \Bleq \M'$.
\end{lemma}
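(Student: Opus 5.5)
The plan is to compute $P^\M$ in two ways and then use injectivity of $K'_\sharp$. On one hand, by definition, $P^\M = KQ$. On the other hand, the kernel coarsening hypothesis $K \sim_Q K'T$ should let us replace $K$ by $K'T$ inside the pushforward, so that
\begin{equation*}
    KQ = (K'T)Q = K'(TQ),
\end{equation*}
where the last equality is functoriality. Combined with feature equivalence $P^\M = P^{\M'} = K'Q'$, this gives $K'(TQ) = K'Q'$. Since $K'$ is measure-separating, i.e., $K'_\sharp$ is injective on $\cP(\cC')$, and both $TQ$ and $Q'$ lie in $\cP(\cC')$, we conclude $Q' = TQ$. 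Together with the assumed kernel coarsening, this is exactly the definition of $\M \Bleq \M'$ under $T$.

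The only step needing care is the replacement of $K$ by $K'T$ under $Q$-a.e.\ equality. I would justify it directly from the definition of the pushforward. For any measurable $A \subseteq \cZ$,
\begin{equation*}
    (KQ)(A) = \textstyle{\int}_\cC K(A \mid c)\, Q(dc) = \textstyle{\int}_\cC (K'T)(A \mid c)\, Q(dc) = ((K'T)Q)(A).
\end{equation*}
This holds because the integrands agree outside a $Q$-null set. The subtlety is that $K \sim_Q K'T$ means the kernels agree as measures for $Q$-a.e.\ $c$, with a single exceptional null set. That is already enough here, since for each fixed $A$ the integrands agree outside that set.

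This reasoning is essentially the same as in \Cref{prop:blackwell-duality-sufficient}, so I expect no genuine obstacle. The main point to state explicitly is that measure-separation is injectivity on all of $\cP(\cC')$, so it applies to $TQ$ without any further restriction.
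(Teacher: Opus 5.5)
Your proposal is correct and follows essentially the same route as the paper. You rewrite $KQ = (K'T)Q = K'(TQ)$ using the a.e.\ equality and functoriality, equate this with $K'Q'$ via feature equivalence, and conclude $Q' = TQ$ from injectivity of $K'_\sharp$. Your explicit justification of the a.e.\ substitution inside the pushforward integral is a small detail the paper leaves implicit.
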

The proof follows directly from injectivity of $K'$ and functoriality; see \Cref{proofs:blackwell-coarsening}.
\section{MAIN RESULTS}\label{sec:results}

We are finally ready to move on to our main results, which give \emph{necessary} conditions for the feature equivalence of two models $\M = (Q, K)$ and $\M' = (Q', K')$ in the same model class $\cM \subseteq \cM_\all(\cC, \cZ)$.
As is typical in most settings, we consider model class of the form $\cM = \cQ \times \cK$, where $\cQ \subseteq \cP(\cC)$ and $\cK \subseteq \cK(\cC \to \cZ)$, so that any constraints on the concept distributions and the mixing kernels are logically independent.

In \Cref{sec:blackwell-reducibility}, we define a natural condition on the kernel class $\cK$ under which Blackwell equivalence is indeed a necessary condition for feature equivalence.
We then use the result in \Cref{sec:intersection} to establish an easy-to-use \emph{intersection theorem} for identifiability.

\subsection{Blackwell Reducibility}\label{sec:blackwell-reducibility}

As seen in \Cref{sec:blackwell-coarsening}, a main obstacle to guaranteeing Blackwell equivalence of $\M$ and $\M'$ is the potential incomparability of the mixing kernels $K$ and $K'$.
To avoid this issue, we define a novel condition on $\cK$.

\begin{defn}\label{def:blackwell-reducibility}
    Let $\cI$ be a Polish Borel space and $\cR \defeq \MeasInj(\cC \to \cI) \circ \cP(\cC)$.
    Consider a \defword{base kernel} $B \in \cK_\sep(\cI \to \cZ)$.
    We say that $\cK \subseteq \cK(\cC \to \cZ ; \cR)$ is \defword{Blackwell reducible (BR) through $B$} if
    \begin{equation*}
        \cK \subseteq \{ B \} \circ \MeasInj(\cC \to \cI).
    \end{equation*}
    We say $\cK$ is \defword{continuous Blackwell reducible (CBR) through $B$} if 
    \begin{equation*}
        \cK \subseteq \{ B \} \circ \ContMeasInj(\cC \to \cI),
    \end{equation*}
\end{defn}
Put differently, Blackwell reducibility of $\cK$ means that, for all $K \in \cK$, we have $K = B \Dirac_{\phi}$ for some injective function $\phi\colon \cC \to \cI$.
By this definition, if $\cK$ is Blackwell reducible, then any subset $\cK' \subseteq \cK$ is also BR (and similarly for CBR).
As we show in \Cref{appendix:other-examples}, the BR condition rules out previous problematic cases: $\cK = \{ K, K' \}$ is not BR for the mixing kernels $K, K'$ in \Cref{ex:feature-equivalence}.
Assuming that $\cK$ is BR, we obtain a (specialized) converse of \Cref{prop:blackwell-duality-sufficient}, where Blackwell equivalence and feature equivalence coincide.

\begin{thm}\label{thm:feature-implies-blackwell}
    Let $\cM = \cP(\cC) \times \cK$ for some BR class of kernels $\cK$, and let $\M, \M' \in \cM$ with $\M = (Q, K)$ and $\M' = (Q', K')$.
    If $P^\M = P^{\M'}$, then $\M \Bequiv \M'$.

    If we also assume $\cK$ is a CBR class of kernels and $\supp(Q) = \supp(Q') = \cC$, then $\M$ is a Blackwell coarsening of $\M'$ under $\Dirac_\tau$ for some $\tau \in \MeasInj(\cC \to \cC)$.
\end{thm}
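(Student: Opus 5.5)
Write $K = B\Dirac_\phi$ and $K' = B\Dirac_{\phi'}$ with $\phi,\phi'\in\MeasInj(\cC\to\cI)$, as guaranteed by the BR assumption. The plan is to reduce feature equivalence to an equality of distributions on the intermediate space $\cI$, then transport it back to $\cC$ using injectivity of $\phi,\phi'$.

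\textbf{Step 1: reduce to an equality on $\cI$.} By functoriality, $P^\M = B(\Dirac_\phi Q) = B(\phi_\sharp Q)$, and likewise $P^{\M'} = B(\phi'_\sharp Q')$. Both $\phi_\sharp Q$ and $\phi'_\sharp Q'$ lie in $\cR = \MeasInj(\cC\to\cI)\circ\cP(\cC)$. The base kernel $B$ is measure-separating (at least on $\cR$, which is exactly what the definition is designed to supply). Hence feature equivalence gives
\[
\phi_\sharp Q = \phi'_\sharp Q' =: R .
\]

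\textbf{Step 2: build the transition kernel.} Since $\cC$ and $\cI$ are Polish Borel spaces, the Lusin--Souslin theorem shows that an injective measurable map has a Borel image and a Borel measurable inverse on that image. Thus $A' = \phi'(\cC)$ is Borel and $\psi' = (\phi')^{-1}\colon A' \to \cC$ is measurable. Extend $\psi'$ to all of $\cI$ by sending $\cI\setminus A'$ to an arbitrary fixed point $c_0$.

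Because $R(A') = Q'(\cC) = 1$, we get $R(\phi(\cC)\setminus A') = 0$. Hence $S = \phi^{-1}(A')$ satisfies $Q(S)=1$.

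Set $\tau = \psi'\circ\phi$ and $T = \Dirac_\tau$. For $c\in S$ we have $\phi'(\tau(c)) = \phi(c)$, so
\[
K'T(\cdot\mid c) = B(\cdot\mid \phi'(\tau(c))) = B(\cdot\mid\phi(c)) = K(\cdot\mid c).
\]
This is the kernel coarsening condition $K\sim_Q K'T$. Since $K'$ is measure-separating on $\cR$ (because $B$ is, and $\phi'$ is injective), \Cref{lemma:kernel-implies-measure} then gives $Q' = TQ$. It can also be checked directly: $\tau_\sharp Q = \psi'_\sharp R = Q'$.

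The construction is symmetric, so swapping the roles of the two models gives $\M'\Bleq\M$ as well, hence $\M\Bequiv\M'$. One caveat is that \Cref{lemma:kernel-implies-measure} is stated for measure-separating kernels in general. I would either restrict it to $\cR$, or simply use the direct pushforward computation, which avoids the issue entirely.

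\textbf{Step 3: the CBR case.} The only gap is that $\tau$ is injective only on $S$, and may fail to be injective on all of $\cC$. I expect this to be the main obstacle, and the plan is to show $S=\cC$. With $\phi,\phi'$ continuous and injective, and $\supp(Q)=\supp(Q')=\cC$, we have
\[
\supp(R) = \overline{\phi(\cC)} = \overline{\phi'(\cC)} .
\]
This equality of closures alone does not give $\phi(\cC)\subseteq\phi'(\cC)$, so more work is needed. I would first try the following argument. For $c\in\cC$, every neighborhood $U$ of $c$ has $Q(U)>0$, so $R(\phi(U))>0$ and therefore $\phi(U)\cap\phi'(\cC)\neq\emptyset$. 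This suggests defining $\tau(c)$ as a limit: take $c_n\to c$ in $S$ and use continuity of $\phi$ together with injectivity of $\phi'$.

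Establishing convergence of $\tau(c_n)$ seems to need properness of $\phi'$, or openness of images in the sense of invariance of domain. If the closure argument fails, the fallback is to redefine $\tau$ off the $Q$-null set $\cC\setminus S$ so that it becomes injective. This is possible by a Borel isomorphism argument: modify $\tau$ on a null set into the complement of $\tau(S)$. It keeps $\Dirac_\tau$ a valid Blackwell coarsening witness, since both defining conditions are only $Q$-a.e. That fallback yields $\tau\in\MeasInj(\cC\to\cC)$, provided $\cC\setminus\tau(S)$ is large enough; I would check that in the uncountable and finite cases separately. In the finite case full support forces $S=\cC$ directly.
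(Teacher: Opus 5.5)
Your Steps 1 and 2 are correct and already prove the first assertion. Restricting to $S=\phi^{-1}(\phi'(\cC))$ via Lusin--Souslin and checking $\tau_\sharp Q=\psi'_\sharp R=Q'$ directly is more careful than the paper. The paper instead restricts to $\phi^{-1}[\supp R]$ and $(\phi')^{-1}[\supp R]$, and asserts without justification that the restricted maps have matching images.

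\textbf{The gap is Step 3.} It is not a proof: the limit argument is left open, and the fallback is only sketched for finite and uncountable $\cC$. Those two cases can in fact be completed:
\begin{itemize}
\item \emph{Finite $\cC$:} $|\cC\setminus S|=|\cC\setminus\tau(S)|$ because $|S|=|\tau(S)|$, so any bijection between the complements extends $\tau$.
\item \emph{Uncountable $\cC$:} if $S$ is uncountable, delete from it a $Q$-null Cantor set. One exists, since a Cantor set contains continuum many disjoint Cantor sets. Then both $\cC\setminus S$ and $\cC\setminus\tau(S)$ are uncountable Borel sets, and Kuratowski's theorem gives a Borel bijection between them.
\end{itemize}
Neither continuity nor full support is needed in these cases. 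However, your case split omits countably infinite $\cC$, and there the second assertion of the theorem is false as stated. So no argument can close this step.

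\textbf{Counterexample.} Let $\cC=\{0\}\cup\{1/m:m\ge1\}\cup\{-j:j\ge1\}\subset\bbR$, which is closed and hence Polish. Let $\cI=\cZ=\bbR$ and $B\colon z\mapsto\delta_z$. Define $\phi,\phi'\colon\cC\to\bbR$ by:
\begin{itemize}
\item $\phi(0)=0$, $\phi(1/m)=1/m$, $\phi(-1)=2$, and $\phi(-j)=2+1/(j-1)$ for $j\ge2$;
\item $\phi'(0)=2$, $\phi'(1/m)=2+1/m$, and $\phi'(-j)=1/j$.
\end{itemize}
Both are continuous injections, since $0$ is the only non-isolated point. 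So $\{\Dirac_\phi,\Dirac_{\phi'}\}$ is CBR through $B$, and $\phi(\cC)=\phi'(\cC)\sqcup\{0\}$.

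Take a probability measure $R$ concentrated on the countable set $\phi'(\cC)$ that charges each of its points. Set $Q(\{c\})=R(\{\phi(c)\})$ and $Q'(\{c\})=R(\{\phi'(c)\})$. Then $\phi_\sharp Q=\phi'_\sharp Q'=R$. Also $\supp Q=\supp Q'=\cC$, because the only uncharged point, $0$ under $Q$, is a limit of charged points.

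The only nonempty $Q$-null set is $\{0\}$. Hence any $\tau$ with $\Dirac_\phi\sim_Q\Dirac_{\phi'}\Dirac_\tau$ must equal $(\phi')^{-1}\circ\phi$ on $\cC\setminus\{0\}$. That map is already a bijection onto $\cC$: $1/m\mapsto-m$, $-1\mapsto0$, and $-j\mapsto1/(j-1)$. So $\tau(0)$ collides with some other value, and no $\tau\in\MeasInj(\cC\to\cC)$ exists. Your limit idea visibly fails here too: $1/n\to0$ but $\tau(1/n)=-n$.

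\textbf{Relation to the paper's proof.} The paper's argument for this step has exactly the hole you flagged when you noted that equal closures do not give $\phi(\cC)\subseteq\phi'(\cC)$. From closedness of $\phi^{-1}[\supp R]$ it obtains $\cC_0=\cC'_0=\cC$, which only says $\phi(\cC),\phi'(\cC)\subseteq\supp R$. It then treats $(\phi')^{-1}\circ\phi$ as defined on all of $\cC$, which requires $\phi(\cC)\subseteq\phi'(\cC)$. The second assertion does hold for finite or uncountable $\cC$, which covers all of the paper's applications, but it needs an extra hypothesis in general.
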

See \Cref{proofs:blackwell-reducibility} for the proof, which leverages the shared base kernel $B$ to reduce the problem closer to the case of deterministic mixing.
Indeed, \Cref{thm:feature-implies-blackwell} extends Lemma 2.1 of \citet{xi2023indeterminacy}, which essentially shows that feature equivalence implies Blackwell equivalence for $\cK = \{ E_g : g \in \MeasInj(\cC \to \cZ) \}$, a prototypical example of a BR class:
\begin{example}[BR for injective measurable functions]
    The set $\cK = \{ \Dirac_g : g \in \MeasInj(\cC \to \cZ) \}$ is BR.
    Specifically, letting $\cI = \cZ$ and $B\colon \bz \mapsto \dirac_\bz$, any $\Dirac_g \in \cK$ can be written as $\Dirac_g = B \Dirac_g$.
    Similarly, the set $\cK = \{ E_g \colon g \in \ContMeasInj(\cC \to \cZ \}$ is CBR.
\end{example}

However, Blackwell reducibility is a much broader notion.
For example, in latent factor models, a common assumption is that of \emph{additive noise} after mixing, i.e., $\bZ = g(\bC) + \boldeps$ for some independent noise $\boldeps \sim N_\boldeps$.
In measure-theoretic terms, this assumption imposes $K(\cdot \mid \bc) = N_\boldeps * \dirac_{g(\bc)}$, where  $*$ denotes convolution.
When $N_\boldeps$ has a well-behaved form, e.g. multivariate Gaussian, this class of mixing kernels is BR.

\begin{example}[BR for additive Gaussian noise]\label{example:br-additive-gaussian}
    Take $\cZ = \bbR^p$ and $\Sigma \in \PSD^p$, where $\PSD^p$ denotes the positive semidefinite cone.
    Let $\cK$ denote the set of mixing kernels $K$ such that $K(\cdot \mid \bc) = \cN(\bzero, \Sigma) * \dirac_{g(\bc)}$ for some $g \in \MeasInj(\cC \to \cZ)$. 
    Then $\cK$ is BR.
    
    Specifically, let $\cI = \cZ$ and $B\colon \bz \mapsto \cN(0, \Sigma) * \dirac_\bz$.
    Then, any $K \in \cK$ can be written as $K = B \Dirac_g$.
\end{example}

In \Cref{appendix:measure-separating-additive-noise}, we show that $B$ is indeed measure-separating.
While both of these examples were closely based on injective functions from $\cC$ to $\cZ$, our final example demonstrates that this need not be the case.
In particular, Blackwell reducibility can also capture parametric mixture models, as long as each mixture component is associated with a distinct distribution.

\begin{example}[BR of Gaussian mixtures]\label{example:br-gaussian-mixture}
    For $\cC = [d]$ and $\cZ = \bbR^p$, let $\cK$ denote the set of mixing kernels $K$ such that $K(\cdot \mid c) = \cN(\mu_c, \Sigma_c)$, where $\mu_c \neq \mu_{c'}$ for $c \neq c'$.
    Then $\cK$ is BR.
    
    In particular, letting $\cI = \bbR^p \times \PSD^p$ and $B\colon (\mu, \Sigma) \mapsto \cN(\mu, \Sigma)$, any mixing kernel $K \in \cK$ can be written as $K = B \Dirac_\phi$, where $\phi\colon c \mapsto (\mu_c, \Sigma_c)$.
\end{example}

As seen in \Cref{appendix:measure-separating-mixture}, the main result of \citet{yakowitz1968identifiability} can be viewed as a proof that $B$ is measure-separating on the appropriate domain.

\subsection{Intersection Theorem}\label{sec:intersection}

\Cref{thm:feature-implies-blackwell} is a powerful tool for characterizing feature equivalence, but is not directly stated in the typical language of identifiability theory.
We now bridge this gap by giving a general definition of identifiability and giving a more direct, easy-to-use version of \Cref{thm:feature-implies-blackwell} based on the intersection of two \emph{valid transition sets}, extending Theorem 2.2 of \cite{xi2023indeterminacy}.

\paragraph{Identifiability}
Let $\cM \subseteq \cM_\all(\cC, \cZ)$ be a model class, and let $\sim$ be an equivalence relation on $\cM_\all(\cC, \cZ)$.
We say that \defword{$\M \in \cM$ is identifiable up to $\sim$ in $\cM$}, denoted $\ID(\cM, \sim, \M)$, if $P^\M = P^{\M'}$ implies $\M \sim \M'$.
As this definition makes clear, identifiability is in general a \emph{ternary} relation, depending crucially on the choice model class $\cM$, the choice of equivalence relation $\sim$, and the specific model $\M \in \cM$.
However, in many practical scenarios, the same identifiability result will hold for all models $\M \in \cM$, i.e., we will have $\ID(\cM, \sim, \M)$ for all $\M \in \cM$.
In this case, we simply say that \defword{$\cM$ is identifiable up to $\sim$}, denoted $\ID(\cM, \sim)$.


Notably, Blackwell equivalence is precisely an equivalence relation on $\cM_\all(\cC, \cZ)$, i.e., under this terminology, \Cref{thm:feature-implies-blackwell} shows that $\M$ is identifiable up to Blackwell equivalence in $\cM$ whenever $\cM = \cP(\cC) \times \cK$ for some Blackwell reducible class $\cK$.
Our next goal is to characterize Blackwell equivalence more explicitly.

\paragraph{Valid transition sets}
The definition of Blackwell coarsening nearly suggests that Blackwell equivalence can be split into two separate relations: one over the concept distributions, and one over the mixing kernels.
However, the kernel coarsening condition depends on $Q$, which is somewhat unsatisfactory for the practical development of identifiability results.

To avoid this issue, we introduce a final restriction on our model classes.
Again considering a model class of the form $\cM = \cQ \times \cK$, this new restriction now applies to the set $\cQ$ of concept distributions:

\begin{defn}
    $\cQ \subseteq \cP(\cC)$ is a \defword{full support measure class} if there is a \defword{base measure} $\mu$ on $\cC$ with $\supp(\mu) = \cC$ such that for all $Q \in \cQ$, $Q \sim \mu$.
\end{defn}
Under this condition, $K \sim_Q K'$ if and only if $K \sim_\mu K'$; thus, we can discuss kernel coarsening without reference to a specific $Q \in \cQ$.
To capture the restrictions implied by the measure refinement condition, we define the set of \defword{valid measure transitions} of $\cQ$ at $Q$, denoted $\cT_Q(\cQ)$, as
\begin{equation*}\label{eqn:transition-set-Q}
    \cT_Q(\cQ)
    \defeq
    \{
    \tau \in \MeasInj(\cC \to \cC)
    :
    E_\tau Q \in \cQ
    \}.
\end{equation*}
Hence, $\cT_Q(\cQ)$ captures the set of transition functions $\tau$ which stay inside the set $\cQ$ when applied to $Q$, i.e., for which $Q'= E_\tau Q$ is still a valid concept distribution according to the model class.
Similarly, to capture the restrictions implied by the kernel coarsening condition, we define the set of \defword{valid kernel transitions} of $\cK$ at $K$, denoted $\cT_K(\cK)$, as
\begin{equation*}\label{eqn:transition-set-K}
    \cT_K(\cK)
    \defeq
    \{ \tau \in \MeasInj(\cC \to \cC) : \exists\ K' \in \cK,\ K \sim_\mu K' \Dirac_\tau \}.
\end{equation*}
Here, $\cT_K(\cK)$ denotes the set of transition functions which produce $K$ upon coarsening some allowable mixing kernel $K'$.
For ease of notation, the dependence of $\cT_K(\cK)$ on the base measure $\mu$ is left implicit.


\paragraph{Identifiability of Models}
Now that we have defined valid transition sets and introduced the measure class assumption on $\cQ$, we are ready to adapt \Cref{thm:feature-implies-blackwell} into a more usable form.
To relate the sets $\cT_Q(\cQ)$ and $\cT_K(\cK)$ to equivalence relations, we consider one of the most common forms of equivalence relations: those induced by a group of functions.
In particular, a nonempty set $G$ of bijective measurable functions from $\cC$ to $\cC$ is called a \defword{transition group} if, for all $\tau, \tau' \in G$, we have $\tau^{-1} \in G$ and $\tau \circ \tau' \in G$.

A group of functions $G$ defines a natural equivalence relation $\sim_G$ on $\cM_\all(\cC, \cZ)$, where $\M \sim_G \M'$ if and only if $\M$ is Blackwell coarser than $\M'$ under $E_\tau$ for some $\tau \in G$.
As we show in \Cref{proofs:intersection}, the group structure of $G$ ensures that $\sim_G$ is indeed an equivalence relation.
We can directly relate valid transition sets to identifiability for such relations:
\begin{thm}\label{thm:intersection}
    Let $\cM = \cQ \times \cK$, where $\cQ$ is a full support measure class and $\cK$ is CBR.
    Let $\M = (Q, K) \in \cM$, and let $G$ be a transition group.
    
    If $\cT_Q(\cQ) \cap \cT_K(\cK) \subseteq G$, then $\M$ is identifiable up to $\sim_G$ in $\cM$, i.e., $\ID(\cM, \sim_G, \M)$.
\end{thm}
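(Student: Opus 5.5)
The plan is to reduce the statement to the second half of \Cref{thm:feature-implies-blackwell}, and then check that the resulting transition function lies in both valid transition sets.

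Take any $\M' = (Q', K') \in \cM$ with $P^\M = P^{\M'}$. Since $\cQ$ is a full support measure class with base measure $\mu$, we have $Q \sim \mu$ and $Q' \sim \mu$. Hence $\supp(Q) = \supp(Q') = \supp(\mu) = \cC$: mutually absolutely continuous measures have the same null sets, and so the same support. We have $\cQ \subseteq \cP(\cC)$, and $\cK$ is CBR, so $\M, \M'$ also lie in $\cP(\cC) \times \cK$. The second part of \Cref{thm:feature-implies-blackwell} then yields $\tau \in \MeasInj(\cC \to \cC)$ such that $\M$ is Blackwell coarser than $\M'$ under $\Dirac_\tau$, that is,
\begin{equation*}
Q' = \Dirac_\tau Q \quad\text{and}\quad K \sim_Q K' \Dirac_\tau .
\end{equation*}

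Next I show $\tau \in \cT_Q(\cQ) \cap \cT_K(\cK)$.
\begin{enumerate}
\item For measure refinement: $\Dirac_\tau Q = Q' \in \cQ$, so $\tau \in \cT_Q(\cQ)$.
\item For kernel coarsening: $Q \sim \mu$, so $Q$-a.e.\ equality is the same as $\mu$-a.e.\ equality. This is the remark after the definition of full support measure class, and it holds because the exceptional set is $Q$-null iff it is $\mu$-null. So $K \sim_\mu K'\Dirac_\tau$ with $K' \in \cK$, which means $\tau \in \cT_K(\cK)$.
\end{enumerate}
The hypothesis then gives $\tau \in G$. By definition of $\sim_G$, $\M$ being Blackwell coarser than $\M'$ under $\Dirac_\tau$ with $\tau \in G$ means exactly $\M \sim_G \M'$. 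Since $\M'$ was arbitrary, $\ID(\cM, \sim_G, \M)$ holds.

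The main obstacle is entirely inside \Cref{thm:feature-implies-blackwell}, which I take as given. Producing a \emph{deterministic} injective transition $\tau$, rather than just some kernel $T$, needs both the CBR and the full-support hypotheses, and this proof only has to verify that those hypotheses hold. The remaining care points are small. One is that the a.e.\ qualifier in $\cT_K(\cK)$ is with respect to $\mu$, not $Q$, which is handled by $Q \sim \mu$. The other is that $\sim_G$ is only defined through the single direction ``$\M$ coarser than $\M'$ under $\Dirac_\tau$''; its symmetry, which comes from the group structure of $G$, is not needed for identifiability as defined.
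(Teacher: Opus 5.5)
Your proof is correct and follows the paper's argument essentially line for line: full support gives $\supp(Q)=\supp(Q')=\cC$, the CBR case of \Cref{thm:feature-implies-blackwell} yields an injective $\tau$ with $Q'=\Dirac_\tau Q$ and $K\sim_Q K'\Dirac_\tau$, and these place $\tau$ in $\cT_Q(\cQ)\cap\cT_K(\cK)\subseteq G$. Your explicit passage from $Q$-a.e.\ to $\mu$-a.e.\ equality via $Q\sim\mu$ is a detail the paper leaves implicit.
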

\begin{proof}
    Let $\M = (Q, K)$ and $\M' = (Q', K')$ be feature equivalent models, with $\M, \M' \in \cM$.
    By the assumption that $\cQ$ is a full support measure class, $\supp(Q) = \supp(Q') = \cC$.
    Thus, by the premise that $\cK$ is CBR, \Cref{thm:feature-implies-blackwell} gives $Q' = \Dirac_\tau Q$ and $K \sim_\mu K' \Dirac_\tau$ for some $\tau \in \MeasInj(\cC \to \cC)$.
    Finally, $Q' \in \cQ$ implies $\tau \in \cT_Q(\cQ)$, and $K' \in \cK$ implies $\tau \in \cT_K(\cK)$.
    Thus, by the premise of the theorem, we have $\tau \in G$, so $\M$ is identifiable up to $\sim_G$ in $\cM$, as desired.
\end{proof}

\paragraph{Identifiability of Model Classes}

Alternatively, when we are interested in identifiability of an entire model class $\cM$, rather than identifiability of a specific model $\M = (Q, K)$, we can define transition sets which do not depend on $Q$ and $K$.
Specifically, we let $\cT(\cQ) = \bigcup_{Q \in \cQ} \cT_Q(\cQ)$ and $\cT(\cK) = \bigcup_{K \in \cK} \cT_K(\cK)$, i.e., $\tau \in \cT(\cQ)$ if it is a valid measure transition for some distribution $Q \in \cQ$, and similarly for $\tau \in \cT(\cK)$.

\begin{corollary}\label{cor:intersection-class}
    Let $\cM = \cQ \times \cK$, where $\cQ$ is a full support measure class and $\cK$ is CBR, and let $G$ be a transition group.
    If $\cT(\cQ) \cap \cT(\cK) \subseteq G$, then $\M$ is identifiable up to $\sim_G$, i.e., $\ID(\cM, \sim_G)$.
\end{corollary}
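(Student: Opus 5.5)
The plan is to derive \Cref{cor:intersection-class} directly from \Cref{thm:intersection}, applied separately at each model $\M \in \cM$. By definition, $\ID(\cM, \sim_G)$ means $\ID(\cM, \sim_G, \M)$ for every $\M \in \cM$. So it suffices to fix an arbitrary $\M = (Q, K) \in \cM$ and check the hypothesis of \Cref{thm:intersection} at that model, namely $\cT_Q(\cQ) \cap \cT_K(\cK) \subseteq G$.

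The key step is a monotonicity observation. Since $Q \in \cQ$, we have $\cT_Q(\cQ) \subseteq \bigcup_{Q'' \in \cQ} \cT_{Q''}(\cQ) = \cT(\cQ)$. Likewise, since $K \in \cK$, we have $\cT_K(\cK) \subseteq \cT(\cK)$. Intersecting gives
\begin{equation*}
    \cT_Q(\cQ) \cap \cT_K(\cK) \subseteq \cT(\cQ) \cap \cT(\cK) \subseteq G,
\end{equation*}
where the last inclusion is the premise of the corollary.

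The remaining hypotheses of \Cref{thm:intersection} are inherited unchanged: $\cM = \cQ \times \cK$, $\cQ$ is a full support measure class, $\cK$ is CBR, and $G$ is a transition group. \Cref{thm:intersection} then yields $\ID(\cM, \sim_G, \M)$. Since $\M$ was arbitrary, this gives $\ID(\cM, \sim_G)$.

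I do not expect a real obstacle here. The only point needing care is that the sets $\cT_K(\cK)$ and $\cT(\cK)$ are defined relative to the same base measure $\mu$ of $\cQ$. This is consistent, because $\mu$ is fixed once for the class $\cQ$, so the unions defining $\cT(\cQ)$ and $\cT(\cK)$ refer to the same $\mu$ used in \Cref{thm:intersection}.
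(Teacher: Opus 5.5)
Your proposal is correct and is the argument the paper intends. The corollary is given without a separate proof because it follows by applying \Cref{thm:intersection} at each $\M = (Q, K) \in \cM$, using $\cT_Q(\cQ) \subseteq \cT(\cQ)$ and $\cT_K(\cK) \subseteq \cT(\cK)$. Your remark about the shared base measure is a sensible extra check: when $\cQ$ is nonempty, any two valid base measures are mutually absolutely continuous, so the choice of $\mu$ does not affect $\cT(\cK)$.
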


As an important case, $\cT(\cK)$ can be described quite simply for deterministic mixing functions.
In particular, let $\cK = \{ \Dirac_g : g \in \cG \}$ for some $\cG \subseteq \MeasInj(\cC \to \cZ)$.
Then, $\cT(\cK)$ is precisely the \emph{indeterminacy set} defined in \citet{xi2023indeterminacy} (see \Cref{proofs:intersection}):
\begin{equation*}
    \cT(\cK) = \{ \tau : \tau \sim_\mu g^{-1} \circ g'\ \text{for some}\ g, g' \in \cG \}.
\end{equation*}


\section{APPLICATIONS}\label{sec:applications}

In this section, we highlight the utility of \Cref{thm:intersection} by recovering classic identifiability results for two model classes relevant to unsupervised concept extraction and representation learning.
To describe these model classes, we first introduce a clean new notation that makes all modeling assumptions highly transparent.

\paragraph{Predicate notation}
To compactly express model classes of the form $\cM = \cQ \times \cK$, we introduce \defword{predicate notation}.
We encode constraints on the concept distributions $Q \in \cQ$ via \defword{concept predicates}, which are Boolean-valued functions $\Pred\colon \cP(\cC) \to \{ \True, \False \}$.
For a set $\{ \Pred_j \}_{j=1}^J$ of concept predicates, we let $\cP(\cC ; \Pred_1, \ldots, \Pred_J)$ denote the set of concept distributions for which all $J$ predicates are true:
\begin{equation*}
    \cP(\cC ; \Pred_1, \ldots, \Pred_J)
    \defeq \bigg\{ Q \in \cP(\cC) : \bigwedge_{j=1}^J \Pred_j(Q) \bigg\},
\end{equation*}
Similarly, a \defword{mixing predicate} is a Boolean-valued function $\Pred\colon \cK(\cC \to \cZ) \to \{ \True, \False \}$, and for a set of mixing predicates $\{ \Pred_j \}_{j=1}^J$, we let $\cK(\cC \to \cZ ; \Pred_1, \ldots, \Pred_J)$ denote the set of mixing kernels $K$ for which all $J$ predicates are true.

This notation plays very nicely with the notion of valid transition sets.
Specifically, let $\cQ_1 = \cP(\cC ; \Pred_1)$, $\cQ_2 = \cP(\cC ; \Pred_2)$, and $\cQ = \cP(\cC ; \Pred_1, \Pred_2)$ for some concept predicates $\Pred_1$ and $\Pred_2$.
Then a simple application of the definitions shows that $\cT(\cQ) = \cT(\cQ_1) \cap \cT(\cQ_2)$; an analogous result holds for mixing kernels.

\paragraph{Equivalence Relations}

We let $\Relabelings(d)$ denote the set of \defword{relabeling functions} on $\bbR^d$, i.e., $\tau \in \Relabelings(d)$ if and only if $\tau\colon c \mapsto \matS \cdot c$ for some permutation matrix $\matS$.
Similarly, we let $\Scale(d)$ denote the set of \defword{scaling functions} on $\bbR^d$, i.e., $\tau \in \Scale(d)$ if and only if $\tau\colon c \mapsto \matD \cdot c$ for some diagonal matrix with nonzero diagonal entries.


For $\cC = \bbR^d$, $\sim_\lbl$ denotes the equivalence relation $\sim_G$ for $G = \Relabelings(d)$, i.e., $\M \sim_\lbl \M'$ if the two models are equivalent up to relabeling the concepts.
Similarly, $\sim_\scale$ denotes the equivalence relation $\sim_G$ for $G = \{ g \circ g', g \in \Relabelings(d), g' \in \Scale(d) \}$.

\subsection{SAEs and Dictionary Learning}\label{sec:dictionary-learning}
Sparse autoencoders (SAEs) and their variants are some of the most widely-used methods for concept extraction from neural features.
Typically, sparse autoencoders assume continuous concept and feature spaces $\cC$ and $\cZ$ and high-dimensional concept spaces, i.e., $\cC = \bbR^d$ and $\cZ = \bbR^p$ for $d > p$.
As suggested by the name, SAEs aim for sparsity over the concept space, often by a soft constraint such as $\ell_1$ regularization.
Although these approaches are not typically framed from the perspective taken here, they are closely related to the classical field of \emph{dictionary learning}, also known as \emph{sparse coding} \citep{olshausen1997sparse,aharon2006k,arora2014new}.
To illustrate our results, we use \Cref{thm:intersection} to derive a classic result in dictionary learning.

To formalize a strict (rather than soft) sparsity constraint, we let $\SparseVectors^d_s$ denote the set of $s$-sparse vectors in $\bbR^d$, i.e., $\SparseVectors^d_s \defeq \{ \bc \in \bbR^d : \| \bc \|_0 \leq s \}$.
We endow $\cC = \SparseVectors^d_s$ with the \defword{canonical stratified measure} $\lambda^d_s$ defined in \Cref{appendix:canonical-stratified-measure}, noting $\supp(\lambda^d_s) = \SparseVectors^d_s$.
To employ \Cref{thm:intersection}, we use the concept predicate $\MeasureClass_\mu$, where $\MeasureClass_\mu(Q)$ holds if and only if $Q$ is in the measure class of $\mu$, i.e., $Q \ll \mu$ and $\mu \ll Q$.

In most mainstream SAEs and in the traditional dictionary learning literature, one assumes a \emph{deterministic} and \emph{linear} relationship between concepts and features, a form of the well-known \emph{linear representation hypothesis} \citep{park2024linear,jiang2024origins}.
This assumption can be encoded by the mixing predicate $\LinearMix$, where $\LinearMix(K)$ holds if and only if $K\colon \bc \mapsto \dirac_{\matG \cdot \bc}$ for some matrix $\matG \in \bbR^{p \times d}$.

Since $d > p$, the matrix $\matG$ is not necessarily injective on $\Sigma_s^d$.
To ensure injectivity, it is essential to further restrict the matrices $\matG$.
The classical approach to ensure injectivity is to assume a lower bound on the \defword{spark} of $\matG$.
In particular, we let $\spark(\matG)$ denote the smallest $k \in \bbN$ such that some $k$ columns of $\matG$ are linearly dependent.
Then, when used with the $\LinearMix$, the mixing predicate $\Spark_\sigma(K)$ holds if and only if $K\colon \bc \mapsto \delta_{\matG \cdot \bc}$ for some $\matG \in \bbR^{p \times d}$ with $\spark(\matG) \geq \sigma$.

Taking these assumptions together, we can define the model class for \emph{spark-constrained dictionary learning}:
\begin{equation}\label{eqn:m-spark}
\begin{aligned}
    \cM_\spark(d, p ; s, \sigma) 
    &\defeq \cQ_\spark \times \cK_\spark,\ \text{for}
    \\
    \cQ_\spark 
    &\defeq \cP(\Sigma^d_s ; \MeasureClass_{\lambda^d_s}),\ \text{and}
    \\
    \cK_\spark 
    &\defeq \cK(\Sigma^d_s \to \bbR^p ; \LinearMix, \Spark_\sigma).
\end{aligned}
\end{equation}

To apply \Cref{cor:intersection-class}, we first note that $\cQ_\spark$ is a full support measure class.
We also need to verify that $\cK_\spark$ is CBR.
As noted in \Cref{sec:blackwell-reducibility}, the set of continuous injective measurable functions is CBR, and any subset of a CBR class is also CBR.
Thus, to ensure the second condition, it suffices to ensure that $\cK_\spark$ contains only injective functions.

For general $\sigma$ and $s$, this condition need not be true.
However, it is well-known that taking $\sigma > 2s$ ensures injectivity.\footnote{For completeness, we give a proof in \Cref{proofs:dictionary-learning}.}
The following result further characterizes $\cT(\cK_\spark)$ when $\sigma > 2s$.

\begin{claim}\label{claim:spark}
    Let $K, K' \in \cK_\spark$ defined in \Cref{eqn:m-spark}, and assume $\sigma > 2s$.
    If $K \sim_{\lambda^d_s} K' E_\tau$ for some $\tau \in \MeasInj(\SparseVectors^d_s \to \SparseVectors^d_s)$, then  $\tau \in \Relabelings(d) \circ \Scale(d)$.

    More concisely, $\cT(\cK_\spark) = \Relabelings(d) \circ \Scale(d)$.
\end{claim}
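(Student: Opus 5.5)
We want to show: if $\matG \bc = \matG' \tau(\bc)$ for $\lambda^d_s$-almost every $\bc \in \SparseVectors^d_s$, then $\tau$ agrees $\lambda^d_s$-a.e.\ with a map $\bc \mapsto \matS\matD\bc$. The plan has three steps: get a pointwise identity on a dense set, show that $\tau$ preserves supports up to a fixed relabeling, and then show that it acts by a single diagonal rescaling. The converse inclusion $\Relabelings(d)\circ\Scale(d) \subseteq \cT(\cK_\spark)$ is immediate. Given $\matG$ with $\spark(\matG) > 2s$ and $\tau = \matS\matD$, take $\matG' = \matG \matD^{-1}\matS^{-1}$. Permuting and rescaling columns preserves the spark, so $\matG' \in \cK_\spark$ and $\matG = \matG'\matS\matD$.

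\textbf{Step 1: a pointwise identity.} By the construction of the canonical stratified measure, $\lambda^d_s$ restricts on each coordinate $s$-subspace $V_S = \{\bc : \supp(\bc)\subseteq S\}$, $|S| = s$, to (a multiple of) Lebesgue measure. So the identity $\matG\bc = \matG'\tau(\bc)$ holds on a full-measure, hence dense, subset $A_S \subseteq V_S$. Since $\matG$ is injective on $\SparseVectors^d_s$, for $\bc \in A_S$ the point $\bc' = \tau(\bc)$ is the unique $s$-sparse solution of $\matG'\bc' = \matG\bc$. Uniqueness holds because two $s$-sparse solutions differ by a $2s$-sparse vector in $\ker\matG'$, and $\spark(\matG') > 2s$ rules this out. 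Thus on $A_S$, $\tau$ coincides with the set-valued-free map $\psi(\bc) = (\matG'|_{\SparseVectors})^{-1}(\matG\bc)$. This removes all measurability and a.e.\ issues, and it suffices to show that $\psi$ restricted to $V_S$ is of the form $\matS\matD$.

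\textbf{Step 2: supports are preserved, with a consistent relabeling.} For generic $\bc \in A_S$ (full support $S$), $\matG\bc$ lies in $W_S = \operatorname{span}\{\matG_j\}_{j\in S}$, an $s$-dimensional space. The image $\psi(\bc)$ lies in some $V_{S'}$. The sets $\{\bc \in A_S : \psi(\bc)\in V_{S'}\}$ partition $A_S$ into finitely many pieces, so one of them has positive Lebesgue measure in $V_S$. Its image under $\matG$ spans $W_S$ and lies in $W'_{S'} = \operatorname{span}\{\matG'_j\}_{j\in S'}$, which forces $W_S \subseteq W'_{S'}$ and hence $W_S = W'_{S'}$ by dimension. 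Now vary $S$. For two $s$-sets $S_1, S_2$ sharing $s-1$ indices, $W_{S_1}\cap W_{S_2} = W_{S_1\cap S_2}$, since the union has $s+1 \le 2s < \sigma$ columns and is therefore independent. The same intersection identity holds on the $\matG'$ side. I would use this to show that each single column $\matG_j$ spans $\bigcap_{S\ni j} W_S = \bigcap W'_{S'}$, which is a line spanned by a single $\matG'_{\pi(j)}$. This is the combinatorial heart of the proof and the step I expect to be the main obstacle. One must check that $S\mapsto S'$ is induced by a bijection $\pi$ of $[d]$, which requires intersecting enough $s$-sets to isolate one index; this is possible when $d > s$. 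It also requires handling the degenerate case $d = s$ separately, where only a single $V_S$ exists and the claim needs $\tau$ linear directly. If this lattice argument gets messy, my fallback is to work with $s=1$ slices: the points $t e_j$ are $\lambda$-null, so instead I would take limits along the dense set and use continuity of $\psi$, which follows because $\psi$ is linear on each piece.

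\textbf{Step 3: diagonal scaling.} Once $\matG_j = d_j \matG'_{\pi(j)}$ with $d_j \neq 0$ for all $j$, we have $\matG = \matG'\matS\matD$. Injectivity on sparse vectors then gives $\psi(\bc) = \matS\matD\bc$ for all $\bc\in\SparseVectors^d_s$. Hence $\tau \sim_{\lambda^d_s} \matS\matD$, which establishes $\tau \in \Relabelings(d)\circ\Scale(d)$ in the a.e.\ sense used by $\cT$, and therefore $\cT(\cK_\spark) = \Relabelings(d)\circ\Scale(d)$.
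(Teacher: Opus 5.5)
Your argument is correct for $d>s$, but it follows a different route from the paper.

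\textbf{The paper's route.} The paper stays in concept space. It asserts in one line that $\tau$ is linear, because $\matG,\matG'$ are linear and injective on $\SparseVectors^d_s$. It then writes $\tau$ as an invertible $\matT$ with $\matT(\SparseVectors^d_s)\subseteq\SparseVectors^d_s$. Finally it proves a standalone lemma, for $s\le d-1$, that such a $\matT$ is a generalized permutation. The lemma works as follows: every $s$-dimensional subspace inside $\SparseVectors^d_s$ is a coordinate subspace $S_J$, so $\matT$ permutes the $S_J$. Intersecting all $S_J$ with $J\ni i$ then shows that each axis is sent to an axis.

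\textbf{Your route.} You work in feature space with the column spans $W_S$ and $W'_{S'}$. You show that each $W_S$ equals some $W'_{S'}$, isolate single columns by intersection, and conclude $\matG=\matG'\matS\matD$. Only then do you recover $\tau$, from injectivity of $\matG'$ on $\SparseVectors^d_s$.

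\textbf{What each buys.} Your Step 2 supplies exactly what the paper's linearity sentence skips. Since $\tau$ is only controlled a.e.\ on a nonconvex set, one needs your finite-cover/positive-measure argument to see that a full-measure part of each $s$-stratum lands in a single $s$-stratum. Your route also yields the right a.e.\ conclusion; the literal pointwise statement fails, since $\tau$ may be altered on a $\lambda^d_s$-null set of nonzero points. It also identifies the dictionary columns themselves. The paper's route is more modular: its intersection step happens among coordinate subspaces, where $S_A\cap S_B=S_{A\cap B}$ is trivial. Yours needs the spark bound to get the analogous identity for the $W$'s.

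\textbf{Closing the step you flagged.} The step you called the main obstacle closes easily. By induction, $\bigcap_i W'_{T_i}=W'_{\bigcap_i T_i}$ for any finite family with $|T_i|\le s$, because each pairwise step involves at most $2s<\spark(\matG')$ columns. So if $S'(S)$ is any choice with $W_S=W'_{S'(S)}$, the identity $\bigcap_{S\ni j}W'_{S'(S)}=\bigcap_{S\ni j}W_S=\operatorname{span}(\matG_j)$ forces $\bigcap_{S\ni j}S'(S)$ to be a singleton $\{\pi(j)\}$. The map $\pi$ is injective because any two columns of $\matG$ are independent. You need neither a bijection on $s$-sets nor your fallback. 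The fallback's premise is also off: $\lambda^d_s$ charges each axis through $\lambda_{\{j\}}$.

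\textbf{The case $d=s$.} This case cannot be ``handled separately''; the claim is false there. In that case $\SparseVectors^d_s=\bbR^d$. With the usual convention $\spark=\infty$ for full column rank, $\matG'=\matG\matT^{-1}$ is admissible and satisfies $\matG=\matG'\matT$ for every invertible $\matT$. So $d>s$ must be assumed, as the paper's lemma does with $s\le d-1$. This is automatic when $d>p$, since then $2s<\spark(\matG)\le p+1\le d$.
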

See \Cref{proofs:dictionary-learning} for the proof of this claim: intuitively, linearity of $\tau \in \cT(\cK_\spark)$ follows from linearity of $K$ and $K'$, while the further constraints are induced by a combinatorial argument over $s$-dimensional subspaces.
Thus, by applying \Cref{claim:spark} and \Cref{cor:intersection-class}, we obtain an adaptation of Theorem 3 in \citet{aharon2006uniqueness} to the population setting:

\begin{claim}\label{claim:dictionary-learning-id}
    Let $\cM = \cM_\spark(d, p ; s, \sigma)$ for $\sigma > 2s$.
    Then $\cM$ is identifiable up to $\sim_{\textnormal{scale}}$.
\end{claim}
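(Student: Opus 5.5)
The plan is to apply \Cref{cor:intersection-class} with $\cQ = \cQ_\spark$, $\cK = \cK_\spark$, and $G = \{ g \circ g' : g \in \Relabelings(d), g' \in \Scale(d) \}$, the group defining $\sim_\scale$. Three hypotheses need checking.

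\emph{Step 1.} $\cQ_\spark$ is a full support measure class. This is immediate from the definition of the predicate $\MeasureClass_{\lambda^d_s}$: every $Q \in \cQ_\spark$ satisfies $Q \sim \lambda^d_s$, and $\supp(\lambda^d_s) = \SparseVectors^d_s$ is recorded in \Cref{appendix:canonical-stratified-measure}. So we take $\mu = \lambda^d_s$.

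\emph{Step 2.} $\cK_\spark$ is CBR. Each $K \in \cK_\spark$ is $\Dirac_g$ with $g(\bc) = \matG \bc$, and $g$ is continuous. It is injective on $\SparseVectors^d_s$ when $\sigma > 2s$. Indeed, if $\matG \bc = \matG \bc'$ with $\bc, \bc' \in \SparseVectors^d_s$, then $\bc - \bc'$ is supported on at most $2s < \spark(\matG)$ columns. Those columns are linearly independent, so $\bc = \bc'$. Hence $\cK_\spark \subseteq \{\Dirac_g : g \in \ContMeasInj(\SparseVectors^d_s \to \bbR^p)\}$, which is CBR by the example in \Cref{sec:blackwell-reducibility}. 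Since subsets of CBR classes are CBR, $\cK_\spark$ is CBR.

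\emph{Step 3.} $\cT(\cQ_\spark) \cap \cT(\cK_\spark) \subseteq G$. By \Cref{claim:spark}, $\cT(\cK_\spark) \subseteq \Relabelings(d) \circ \Scale(d)$ (stated as an equality there). Since $\Relabelings(d) \circ \Scale(d) = G$, the intersection lies in $G$ regardless of what $\cT(\cQ_\spark)$ is. One small point to confirm is that $G$ is a transition group. This holds because $\matD \matS = \matS (\matS^{-1} \matD \matS)$ and $\matS^{-1}\matD\matS$ is again diagonal with nonzero entries, so products and inverses of elements of the form $\matS\matD$ stay of that form. These maps also send $\SparseVectors^d_s$ bijectively to itself, since they preserve support size. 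Therefore restricting them to $\cC = \SparseVectors^d_s$ is legitimate.

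With all three hypotheses verified, \Cref{cor:intersection-class} gives $\ID(\cM_\spark(d,p;s,\sigma), \sim_\scale)$. The substantive work is entirely in \Cref{claim:spark}, which we may assume. The only delicate point here is the bookkeeping: $\sim_\scale$ is defined for $\cC = \bbR^d$, but the group must act on $\SparseVectors^d_s$. Step 3 handles this.
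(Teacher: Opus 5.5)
Your proposal is correct and follows the paper's own argument: $\cQ_\spark$ is a full support measure class with base measure $\lambda^d_s$, $\cK_\spark$ is CBR because $\sigma > 2s$ makes each $c \mapsto \matG c$ continuous and injective on $\SparseVectors^d_s$, and \Cref{claim:spark} places $\cT(\cK_\spark)$ inside $\Relabelings(d)\circ\Scale(d)$, so \Cref{cor:intersection-class} applies. Your extra check that generalized permutation matrices form a transition group acting bijectively on $\SparseVectors^d_s$ is a detail the paper leaves implicit, and you handle it correctly.
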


\subsection{Independent Component Analysis}\label{sec:ica}

Although not commonly used for concept extraction, several unsupervised representation learning approaches fit into our latent concept generative modeling framework.
These approaches are also designed with the goal of \emph{disentangling} their inputs into high-level concepts \citep{higgins2017beta,bengio2013representation,chen2018isolating,alemi2017deep}.
Again, these approaches are not always framed from a statistical angle, and often lack identifiability guarantees \citep{locatello2019challenging}.
However, many approaches are closely related to the classical statistics problem of \emph{independent component analysis (ICA)}, which fits directly into our framework.
As suggested by the name, the core assumption in ICA is that the concept variables are statistically independent.
We encode this with the concept predicate $\Ind$, where $\Ind(Q)$ holds if and only if $Q$ is a product distribution.

As in \Cref{sec:dictionary-learning}, we focus on the common setting of continuous concepts, continuous features, and deterministic mixing.
Hence, we let $\cC = \bbR^d$ and $\cZ = \bbR^p$, where $\cC$ is endowed with the Lebesgue measure $\lambda$.
In contrast to \Cref{sec:dictionary-learning}, ICA usually considers $d \leq p$, and does not assume sparsity on $\cC$.
For convenience, we also assume a unit variance constraint.
When used with $\Ind$, the concept predicate $\UnitVariance(Q)$ holds if and only if $\Cov(Q) = \matI_d$; this constraint removes scaling ambiguity and yields a simpler proof.

It is well-known that the independence constraint and deterministic mixing are not sufficient for any meaningful identifiability guarantee \citep{hyvarinen1999nonlinear}.
In our parlance, with $\cQ^d_\indQ \defeq \cP(\bbR^d ; \Ind, \UnitVariance)$ the set $\cT(\cQ^d_\indQ)$ is far too large, since several non-trivial functions preserve independence.
Hence, identifiability results in ICA must rely on additional assumptions.

Here, we focus on the classical setting of \emph{linear} ICA, which assumes a linear and injective mixing function.
We encode this assumption by the mixing predicate $\LinearInj$, and let $\cK_{\text{lin}} = \cK(\bbR^d \to \bbR^p ; \LinearInj)$.
This restriction reduces the space of possible transition kernels: since the composition of two linear maps is linear, we have $\cT(\cK_{\text{lin}}) = \GL(d)$, where $\GL(d)$ denotes the \emph{general linear group} on $\bbR^d$.

However, linear mixing still does not provide enough constraint.
In \Cref{proofs:ica}, we recount the fact that $\cT(\cK_{\text{lin}}) \cap \cT(\cQ^d_\indQ) = O(d)$, where $O(d)$ denotes the \emph{orthogonal group} on $\bbR^d$.
This ambiguity arises from a symmetry of Gaussian distributions: if $Q = \cN(\bzero, \matI_d)$, then the pushforward distribution $Q' = E_\tau Q$ also equals $\cN(\bzero, \matI_d)$ for any $\tau \in O(d)$.
To avoid this problem, the seminal work of \citet{comon1994independent} shows that this symmetry disappears when we enforce non-Gaussianity.
To state their result in our framework, we define the concept predicate $\NonGaussian$, which holds for $Q$ if and only if $Q \sim \lambda$ and at most one component of $Q$ is Gaussian.
Letting $\cQ^d_\indng \defeq \cP(\bbR^d ; \Ind, \UnitVariance, \NonGaussian)$, the next result is a minor adaptation of Theorem 11 in \citet{comon1994independent}:
\begin{claim}\label{claim:nongaussianity}
    $\cT(\cQ^d_\indng) \cap O(d) = \Relabelings(d)$.
\end{claim}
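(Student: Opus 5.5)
We prove the two inclusions separately. The inclusion $\Relabelings(d) \subseteq \cT(\cQ^d_\indng) \cap O(d)$ is direct. A permutation matrix $\matS$ is orthogonal. Take any $Q \in \cQ^d_\indng$, so $Q = \bigotimes_i Q_i$. Then $\Dirac_\tau Q$ for $\tau\colon \bc \mapsto \matS \bc$ is the law of the permuted coordinates, which is again a product distribution. Its covariance is $\matS \matI_d \matS^\top = \matI_d$, and the set of Gaussian marginals is merely permuted, so at most one marginal is still Gaussian. Finally, $\tau$ is a linear bijection, so it preserves Lebesgue null sets, and hence $\Dirac_\tau Q \sim \lambda$. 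Thus $\tau \in \cT_Q(\cQ^d_\indng)$. The class is nonempty (e.g. take uniform marginals rescaled to unit variance), so $\tau \in \cT(\cQ^d_\indng)$.

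For the reverse inclusion, fix $\tau\colon \bc \mapsto \matA \bc$ with $\matA \in O(d)$, and suppose $Q$ and $Q' = \Dirac_\tau Q$ both lie in $\cQ^d_\indng$. In random-variable terms, $\bC$ has independent coordinates, at most one of which is Gaussian, and $\bC' = \matA \bC$ also has independent coordinates. The key step is the Darmois--Skitovich theorem. Suppose some column $j$ of $\matA$ had two nonzero entries $A_{ij}, A_{kj}$ with $i \neq k$. The linear forms $C'_i = \sum_l A_{il} C_l$ and $C'_k = \sum_l A_{kl} C_l$ are independent, so Darmois--Skitovich forces every $C_l$ with $A_{il} A_{kl} \neq 0$ to be Gaussian. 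If two distinct columns each had two nonzero entries, we would obtain two Gaussian components, contradicting $\NonGaussian$.

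The main obstacle is the case where exactly one column, say the one indexed by the possibly Gaussian component $C_{j_0}$, has multiple nonzero entries. To handle it I would use a counting argument. Every other column has exactly one nonzero entry, since $\matA$ is invertible and so no column is zero. Orthogonality of the columns then forces these single nonzero entries to lie in distinct rows; with unit norm they are $\pm 1$. These $d-1$ columns therefore occupy $d-1$ distinct rows, each row of an orthogonal matrix has unit norm, and so the remaining column can only be supported on the single leftover row. This contradicts its having two nonzero entries. Hence every column has exactly one nonzero entry equal to $\pm 1$, and $\matA$ is a signed permutation matrix. This is the spot where Comon's Theorem 11 is usually invoked, and I would cite it (or Darmois--Skitovich) rather than reprove it.

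It remains to address the sign flips $c_i \mapsto -c_i$. These are orthogonal and clearly preserve $\cQ^d_\indng$, so the equality as literally stated would fail unless either $\Relabelings(d)$ is read as allowing signed permutations, or the relevant $\tau$ are combined with $\Scale(d)$ as in $\sim_\scale$. I would state the conclusion as $\tau \in \Relabelings(d) \circ \Scale(d)$ with $\pm1$ scalings, and note that this is exactly what the subsequent identifiability result up to $\sim_\scale$ requires.
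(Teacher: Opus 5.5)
Your argument follows the paper's proof: Darmois's theorem applied to a column of $\matT$ with two nonzero entries, plus the observation that two such columns give two Gaussian components. Your orthogonality counting step is a useful addition. The $d-1$ single-entry columns carry entries $\pm 1$ in distinct rows, and unit row norms then confine the remaining column to the last row. The paper simply asserts that an orthogonal matrix that is not a permutation matrix has at least two columns with two nonzero entries, without justification.

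Your sign caveat is correct, and it exposes a genuine error in the paper's statement and proof, not a matter of convention: the asserted step fails for signed permutations. For instance, $\tau\colon \bc \mapsto \operatorname{diag}(-1, 1, \ldots, 1)\cdot \bc$ is orthogonal and has a single nonzero entry in every column. It maps every $Q \in \cQ^d_\indng$ back into $\cQ^d_\indng$, yet $\tau \notin \Relabelings(d)$; this already fails for $d = 1$. So the true intersection is the group of signed permutations, which is exactly what you prove.

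The paper's linear ICA identifiability claim is stated up to $\sim_\lbl$, not $\sim_\scale$ as you suggest, so it inherits the same error. The correct conclusion is identifiability up to signed relabelings, hence up to $\sim_\scale$, since $\Scale(d)$ allows negative diagonal entries.

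One small fix: your nonemptiness witness does not work. $\NonGaussian$ requires $Q \sim \lambda$, and a product of uniforms on a bounded box fails $\lambda \ll Q$. Use marginals with everywhere-positive densities instead, e.g.\ unit-variance Laplace.
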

See \Cref{proofs:ica} for the proof.
Gathering these assumptions, we define the model class for linear ICA as
\begin{align*}
    \cM_\linica(d, p) \defeq \cQ^d_\indng \times \cK(\bbR^d \to \bbR^p ; \LinearInj).
\end{align*}
By construction, $\cQ^d_\indng$ is a full support measure class, and $\cK = \cK(\cC \to \cZ ; \LinearInj)$ is CBR, so we can apply \Cref{cor:intersection-class}.
Simple set manipulation shows that $\cT(\cQ^d_\indng) \cap \cT(\cK) = \Relabelings(d)$, which gives our version of Corollary 13 in \citet{comon1994independent}:
\begin{claim}
    Let $\cM = \cM_\linica(d, p)$.
    Then $\cM$ is identifiable up to $\sim_\lbl$.
\end{claim}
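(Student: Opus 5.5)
The plan is to apply \Cref{cor:intersection-class} with $G = \Relabelings(d)$. By construction $\cQ^d_\indng$ is a full support measure class with base measure the Lebesgue measure $\lambda$: the predicate $\NonGaussian$ forces $Q \sim \lambda$, and $\supp(\lambda) = \bbR^d$. Next I would check that $\cK = \cK(\bbR^d \to \bbR^p ; \LinearInj)$ is CBR. Every such kernel has the form $\Dirac_g$ with $g\colon \bc \mapsto \matG \bc$ for an injective matrix $\matG$, so $g$ is continuous and injective. Hence $\cK \subseteq \{ \Dirac_g : g \in \ContMeasInj(\bbR^d \to \bbR^p)\}$, which is CBR by the example in \Cref{sec:blackwell-reducibility}, and subsets of CBR classes are CBR. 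I would also confirm that $\Relabelings(d)$ is a transition group: permutation matrices are closed under inversion and composition, and each relabeling is a measurable bijection.

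The remaining work is the set identity $\cT(\cQ^d_\indng) \cap \cT(\cK) = \Relabelings(d)$; in fact only the inclusion $\subseteq$ is needed. Take $\tau$ in the intersection.
\begin{enumerate}
\item Since $\tau \in \cT(\cK)$, the characterization of $\cT(\cK)$ for deterministic mixing gives $\tau \sim_\lambda \matG^{+}\matG'$, where $\matG^{+}$ is a left inverse of $\matG$. So $\tau$ agrees $\lambda$-a.e.\ with some map in $\GL(d)$, as the paper already notes ($\cT(\cK_{\text{lin}}) = \GL(d)$).
\item Since $\tau \in \cT(\cQ^d_\indng)$, there is some $Q \in \cQ^d_\indng$ with $\Dirac_\tau Q \in \cQ^d_\indng$. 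Because $Q \sim \lambda$, changing $\tau$ on a $\lambda$-null set does not change $\Dirac_\tau Q$, so I may assume $\tau = \matA \in \GL(d)$ outright.
\item The unit variance constraint on both $Q$ and $\matA Q$ gives $\matA \matA^\top = \Cov(\matA Q) = \matI_d$, so $\matA \in O(d)$.
\item Thus $\tau \in \cT(\cQ^d_\indng) \cap O(d) = \Relabelings(d)$ by \Cref{claim:nongaussianity}.
\end{enumerate}

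For the reverse inclusion, a relabeling is in $\GL(d)$, so it lies in $\cT(\cK)$ via $\matG' = \matG\matS^{-1}$. Permuting coordinates also preserves independence, unit covariance, absolute continuity and the number of Gaussian components, so it lies in $\cT(\cQ^d_\indng)$ as well.

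The main obstacle is the a.e.\ subtlety in step 2. Elements of $\cT(\cK)$ are only defined up to $\lambda$-null sets, whereas $\Relabelings(d)$ consists of exact maps. I expect to resolve this by working with the representative $\matA$ throughout: $\Dirac_\tau Q = \Dirac_{\matA} Q$ and $K \sim_\lambda K'\Dirac_\tau \iff K \sim_\lambda K'\Dirac_{\matA}$. So the Blackwell coarsening produced by \Cref{thm:feature-implies-blackwell} inside the proof of \Cref{thm:intersection} holds equally under $\Dirac_{\matA}$ with $\matA \in \Relabelings(d)$. With this understood, \Cref{cor:intersection-class} yields $\ID(\cM, \sim_\lbl)$.
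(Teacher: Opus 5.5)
\textbf{Where the proposal breaks.} Your route is the paper's own. You verify full support and CBR, pass from $\GL(d)$ to $O(d)$ via unit variance, and then invoke \Cref{claim:nongaussianity}. Your handling of the null-set issue is more careful than the paper's: you replace $\tau$ by its linear representative inside the proof of \Cref{thm:intersection}. The paper's literal identity $\cT(\cQ^d_\indng)\cap\cT(\cK)=\Relabelings(d)$ fails for null-set modifications of a permutation. The genuine problem is step 4, which you inherit from the paper: \Cref{claim:nongaussianity} is false as stated. Take the sign flip $\matD=\mathrm{diag}(-1,1,\ldots,1)$. It is orthogonal, and it maps any $Q\in\cQ^d_\indng$ to a distribution that is still a product, still has identity covariance, is still equivalent to $\lambda$, and has the same number of Gaussian components. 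So $\matD\in\cT(\cQ^d_\indng)\cap\cT(\cK)$, yet $\matD\notin\Relabelings(d)$. The paper's proof of that claim breaks at the assertion that an orthogonal non-permutation matrix has two columns with two nonzero entries; signed permutations are counterexamples.

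\textbf{The statement itself fails.} Identify matrices with their linear maps, and take any $Q\in\cQ^d_\indng$ and injective $\matG$. The models $\M=(Q,\Dirac_{\matG})$ and $\M'=(\Dirac_{\matD}Q,\Dirac_{\matG\matD})$ both lie in $\cM_\linica(d,p)$. They are feature equivalent because $\matD^2=\matI_d$. Now suppose $\matG c=\matG\matD\,\tau(c)$ for $\lambda$-a.e.\ $c$. Then $\tau=\matD$ $\lambda$-a.e., and two linear maps agreeing a.e.\ agree everywhere. Since no permutation matrix equals $\matD$, we get $\M\not\sim_\lbl\M'$.

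\textbf{How to repair it.} Your steps 1--3 together with \Cref{cor:darmois} actually show that every column of the orthogonal matrix $\matT$ has exactly one nonzero entry. Suppose some column had two. The other columns cannot all be single-entry: such columns would occupy distinct rows by orthogonality, and a column orthogonal to all of them has at most one nonzero. So a second multi-entry column exists, and Darmois then gives two Gaussian components, contradicting $\NonGaussian$. Hence $\matT$ is a generalized permutation with entries $\pm1$. This matches Comon's Corollary 13: unit variance removes scale magnitudes but not signs. The correct conclusion is identifiability up to $\sim_G$, where
$G=\{c\mapsto\matS\matD c:\matS\text{ a permutation matrix},\ \matD\text{ diagonal with entries }\pm1\}$ is a transition group. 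With step 4 and the target group changed accordingly, the rest of your argument, including the a.e.\ workaround, goes through unchanged.
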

\section{METHODOLOGICAL IMPLICATIONS}\label{sec:methodological}
\newcommand{\sae}{\textnormal{sae}}

By design, a latent concept generative model (LC-GM) is a type of generative model over the feature space $\cZ$.
Thus, to learn an LC-GM, one may apply many of the same tools as for learning other kinds of generative models.
Here, we briefly discuss one natural workflow for algorithm development suggested by our framework, and how this workflow relates to a common model class for sparse autoencoders:
\begin{equation*}
    \cM_\sae(d, p ; s) \defeq \cP(\SparseVectors^d_s) \times \cK(\SparseVectors^d_s \to \bbR^p ; \LinearMix),
\end{equation*}
a relaxation of the model class in \Cref{eqn:m-spark}.\footnote{
Note that $\cM_\sae(d, p ; s)$ does not satisfy the same identifiability guarantees as $\cM_\spark(d, p ; s, \sigma)$.
However, one may perform post-hoc identifiability checks; see \Cref{appendix:post-hoc-checks}.
}
As shorthand, we let $\cM_\sae = \cM_\sae(d, p ; s)$.
In this section, we are slightly more informal, e.g. we assume all distributions have strictly positive densities.

\subsection{Architecture Design}

Our framework suggests a \emph{decomposed} parameterization of the model class $\cM$, similar to deep latent variable models \citep{diederik2019introduction}.

In particular, let $\Theta = \Omega \times \Psi$ for parameter spaces $\Omega$ and $\Psi$, where each $\omega \in \Omega$ yields a concept distribution $Q_\omega$ (i.e., a learnable latent ``prior"), and each $\psi \in \Psi$ yields a mixing kernel $K_\psi$ (i.e., a stochastic decoder).
Then, each $\theta = (\omega, \psi)$ yields a latent concept generative model $\M_\theta \defeq (Q_\omega, K_\psi)$, with the joint distribution $J_\theta(c,z) \defeq K_\psi(z \mid c) \cdot Q_\omega(c)$ and the induced feature distribution $P_\theta \defeq (K_\psi)_\sharp(Q_\omega)$.

One may use architectural design to exactly enforce the constraints, i.e., $Q_\omega \in \cQ$ and $K_\psi \in \cK$, or may softly enforce these constraints through regularization.
For the $\cM_\sae$ example, we require $Q_\omega$ to output $s$-sparse vectors, and we require $K_\psi$ to be linear.
The linearity constraint can be easily enforced by taking $\Psi = \bbR^{p \times d}$ and letting $K_\psi\colon c \mapsto \matG_\psi \cdot c$ be a linear layer.
Meanwhile, the sparsity constraint can be encoded into the architecture via a discrete operation, as in TopK SAEs \citep{makhzani2013k,gao2024scaling}, or can be softly enforced by $\ell_1$-regularization, as in traditional SAEs \citep{elhage2022toy}.

\subsection{Training}
After designing the architecture, one needs to train $\theta = (\omega, \psi)$ on a consistent loss function $\ell\colon \Theta \times \cZ \to \bbR$, i.e., a loss function such that the population loss
\begin{equation*}
    \cL(\theta, P^*)
    \defeq
    \bbE_{\bZ \sim P^*}[\ell(\theta, \bZ)]
\end{equation*}
is minimized if and only if $P_\theta = P^*$, assuming the model class is well-specified.\footnote{We briefly discuss misspecification in \Cref{sec:discussion}.}
In this approach, the choice of $\ell$ is constrained by the nature of $Q_\omega$ and $K_\psi$.
For example, if $Q_\omega$ provides evaluation access, and $K_\psi = E_{g_\psi}$ for some diffeomorphism $g_\psi$ with a tractable inverse $h_\psi \defeq g_\psi^{-1}$, then the negative log likelihood $\ell(\theta, z) \defeq -\log P_\theta(z)$ can be evaluated using the change of measure formula with $c = h_\psi(z)$:
\begin{equation*}
    -\log P_\theta(z) = -\log Q_\omega(c) + \log |\matJ_{g_\psi}(c)|,
\end{equation*}
where the Jacobian $\matJ_{g_\psi}(z)$ can be computed via autodifferentiation.
However, if $Q_\omega$ and $K_\psi$ are implicit models, then other choices of $\ell$, e.g. an $f$-GAN divergence \citep{nowozin2016f}, may be necessary.

\subsection{Concept Extraction as Inference}\label{sec:concept-extraction}
As described in \Cref{sec:setup}, a latent concept generative model $\M_\thetafix = (Q_\omegafix, K_\psifix)$ induces a concept extractor
\begin{equation*}
    H_\thetafix(c \mid z)
    =
    \frac{K_\psifix(z \mid c) \cdot Q_\omegafix(c)}{P_\thetafix(z)}.
\end{equation*}
Again, in the simplest setting where $K_\psi = \Dirac_{g_\psi}$ for some injective $g_\psi$, the concept extractor has a simple form: namely, $H_{\thetafix} = \Dirac_{h_\psifix}$.
For example, in $\cM_\spark$, the exact solution $c$ such that $\matG_\psifix \cdot c = z$ can be computed via orthogonal matching pursuit under additional conditions on $\matG_\psifix$ \citep{tropp2004greed}.

In the more general stochastic mixing setting, concept extraction requires \emph{posterior inference}, and the form of $Q_\omega$ and $K_\psi$ dictate the available inference procedures.
For example, if $Q_\omega$ and $K_\psi$ both provide evaluation access, then $H_\thetafix(c \mid z) \propto K_\psifix(z \mid c) \cdot Q_\omegafix(c)$ can be treated as an energy-based model, allowing for the use of inference procedures such as Hamiltonian Monte Carlo \citep{neal2011mcmc}.
However, if $Q_\omega$ and $K_\psi$ are implicit models, then one must turn to simulation-based inference \citep{cranmer2020frontier}.

If concept extraction (rather than generation) is the main goal, one may instead wish to \emph{amortize} posterior inference by learning a parameterized concept extractor, as is done for traditional SAEs.
In \Cref{appendix:amortized-posterior-inference}, we further discuss this option, and \Cref{appendix:hindupur-comparison} discusses connections between our framework and the \emph{projection encoders} framework of \citet{hindupur2025projecting}.
\section{DISCUSSION}\label{sec:discussion}

In this paper, we have introduced a unified framework which formalizes the task of concept extraction from the lens of identifiability theory.
Our two main theorems, \Cref{thm:feature-implies-blackwell} and \Cref{thm:intersection}, give powerful tools for establishing identifiability results, with our two applications in \Cref{sec:applications} showing how these theorems can be applied.
In \Cref{appendix:finite-mixture-models}, we cover a third application to a model class with stochastic mixing, which is a major extension beyond similar previous work \citep{xi2023indeterminacy}.
We expect these contributions to pave the way for a number of directions:

\textbf{Bridging the theory-practice gap.} By focusing on concept extraction as our motivating application, we have hopefully made identifiability theory accessible to a broader audience that greatly wants and needs these tools \citep{cui2025theoretical,meloux2025everything,song2025position}.
Such theory can usefully inform method development and usage, but several issues remain, e.g. reasoning about identifiability in misspecified model classes (where $P^\M \neq P^*$ for any $\M \in \cM$) and reasoning about identifiability under additional constraints on the concept extractor $H^\M$ (as brought up in \Cref{appendix:hindupur-comparison}).

\textbf{Extensions to weak supervision.} In this paper, we have considered concept extraction from the perspective of unsupervised representation learning.
However, several works on identifiability can be interpreted as providing \emph{weak supervision} on the concepts, e.g. auxiliary variables in nonlinear ICA \citep{hyvarinen2019nonlinear}, interventions in causal representation learning \citep{zhang2023identifiability,buchholz2023learning,varici2025score}, or even downstream labels in traditional representation learning \citep{roeder2021linear,reizinger2025cross,zhai2025contextures}.
Often, such supervision allows one to consider much more general latent concept generative models.
Thus, it would be very useful to extend the framework here to incorporate weak supervision.

\subsubsection*{Acknowledgements}
We would like to thank Jerry Huang, Che-Ping Tsai, and Jason Hartford for insightful discussions on the overall framework, and Yizhou Lu and Soheun Yi for careful proofreading and discussions about methodology.
We thank our anonymous reviewers for the encouragement to add \Cref{sec:methodological} and the comparisons in \Cref{appendix:hindupur-comparison}.
This research was developed with funding from the Defense Advanced Research Projects Agency (DARPA) via HR0011-25-3-0239, FA8750-23-2-1015, ONR via N00014-23-1-2368, and NSF via IIS-1909816.

\defbibheading{mybibhead}[\bibname]{
\subsubsection*{#1}
}
\printbibliography[heading=mybibhead,title={References}]

\clearpage
\onecolumn
{
\hypersetup{linkcolor=blue}
\renewcommand{\contentsname}{Contents of Appendix}
\tableofcontents
}
\addtocontents{toc}
{\protect\setcounter{tocdepth}{2}}
\newpage
\appendix
\thispagestyle{empty}


\aistatstitle{A Unifying Framework for Unsupervised Concept Extraction}

\crefalias{section}{appendix}
\crefalias{subsection}{appendix}
\crefalias{subsubsection}{appendix}

\begin{refsection}
\section{BACKGROUND}\label{appendix:background}

In \Cref{sec:notation}, we described our notation for distributions and Markov kernels.
This section reintroduces that notation and provides more explicit definitions for the unfamiliar reader.

\subsection{Borel Spaces and Distributions}
Let $\cU$ be a topological space.
The \defword{Borel space} associated with $\cU$ is the pair $(\cU, \cB)$, where $\cB$ is the \defword{Borel $\sigma$-algebra} of $\cU$, i.e., the $\sigma$-algebra generated by the open sets of $\cU$.
We define a \defword{Polish Borel space} as a Polish space (i.e., a separable completely metrizable topological space) endowed with its Borel $\sigma$-algebra.
Ignoring its topological data, such a space is commonly called a \defword{standard Borel space}, see  \citet[Definition 12.5]{kechris2012classical}.
Here, we include \emph{Polish} in the name to indicate that the topological structure is kept.

For any measurable space $\cV$, we let $\cP(\cV)$ denote the set of probability distributions on $\cV$.
We say that $Q \in \cP(\cV)$ is \defword{absolutely continuous} with respect to a measure $\mu$ on $\cV$ if $\mu(A) = 0$ implies $Q(A) = 0$ for any $\cV$-measurable set $A$.

\subsection{Markov Kernels}
A \defword{Markov kernel} from $\cU$ to $\cV$ is a measurable function from $\cU$ to $\cP(\cV)$.
We denote the set of Markov kernels from $\cU$ to $\cV$ as $\cK(\cU \to \cV)$, and given a Markov kernel $K \in \cK(\cU \to \cV)$, we let $K(\cdot \mid u)$ denote its output for $u \in \cU$.
Given a measure $\mu$ on $\cU$, we say that two kernels $K$ and $K'$ are \defword{equal $\mu$-almost everywhere} if $\mu(N) = 0$ for the set $N = \{ u \in \cU : K(\cdot \mid u) \neq K'(\cdot \mid u) \}$.
We write $K \sim_\mu K'$ to denote that $K$ and $K'$ are equal $\mu$-almost everywhere.

\paragraph{Pushforwards and Posterior Kernels}
For a Markov kernel $K \in \cK(\cU \to \cV)$, we define an associated operator $K_\sharp\colon \cP(\cU) \to \cP(\cV)$, which maps the distribution $Q \in \cP(\cU)$ to the distribution $P = K_\sharp(Q)$ such that
\begin{equation}\label{eqn:kernel-composition}
    P(\cdot) = \int_\cU K(\cdot \mid u) \cdot Q(du).
\end{equation}

Given a Markov kernel $K \in \cK(\cU \to \cV)$ and a distribution $Q \in \cP(\cU)$, a \defword{posterior kernel} of $K$ with respect to $Q$ is a Markov kernel $L \in \cK(\cV \to \cU)$ such that, for all measurable sets $A \subseteq \cU$ and $B \subseteq \cV$,
\begin{equation}\label{eqn:bayesian-inverse}
    \int_A K(B \mid du) \cdot Q(du)
    =
    \int_B L(A \mid dv) \cdot P(dv),
\end{equation}
where $P = K_\sharp(Q)$.
Assuming that $\cU$ and $\cV$ are standard Borel spaces, a posterior kernel always exists, and is unique $P$-almost everywhere (for reference, see Example 11.6 in \citet{fritz2020synthetic}).
We let $K^\dagger_Q$ denote any such posterior kernel.

\paragraph{Composition}
Given two Markov kernels $K \in \cK(\cU \to \cV)$ and $L \in \cK(\cV \to \cW)$, the \defword{composition} of these Markov kernels, denoted $LK$, is a Markov kernel from $\cU$ to $\cW$, where for all $u \in \cU$,
\begin{equation*}
    (LK)(\cdot \mid u) = \int_\cV L(\cdot \mid v) \cdot K(dv \mid u)
\end{equation*}
A basic but important fact is that the pushforward and composition operators can be interchanged, i.e., $(LK)_\sharp = L_\sharp \circ K_\sharp$.
This property is actually a special case of the associativity of Markov kernels, since $Q \in \cP(\cV)$ can be viewed as a Markov kernel from the empty set to $\cV$, in which case $K_\sharp(Q) = KQ$.
For a general proof of associativity, see Proposition 3.2 of \citet{panangaden1999category}.

\paragraph{Composition Notation}
As defined in \Cref{sec:notation}, for two sets of Markov kernels $\cK \subseteq \cK(\cU \to \cV)$ and $\cL \subseteq \cK(\cV \to \cW)$, we let
\begin{equation*}
    \cL \circ \cK \defeq \{ LK : L \in \cL, K \in \cK \} \subseteq \cK(\cU \to \cW).
\end{equation*}
For a set of probability distributions $\cP \subseteq \cP(\cU)$, we let 
\begin{equation*}
    \cK \circ \cP \defeq \{ KP : K \in \cK, P \in \cP \} \subseteq \cP(\cV).
\end{equation*}
For a set of functions $\cF \subseteq \cF(\cU \to \cV)$, we let 
\begin{equation*}
    \cL \circ \cF \defeq \{ LE_f : L \in \cL, f \in \cF \}\ \subseteq \cK(\cU \to \cW).
\end{equation*}
and
\begin{equation*}
    \cF \circ \cP \defeq \{ E_f P : f \in \cF, P \in \cP \} \subseteq \cP(\cV).
\end{equation*}

\subsection{Technical Results for Composition}
\newcommand{\Equalizer}{\textnormal{Equalizer}}

Normal composition (on the left) respects almost-everywhere equality, i.e., if $K \sim_\mu K'$, then $LK \sim_\mu LK'$.
In several proofs, we need to \emph{pre-compose} two kernels which are equal almost everywhere with respect to some measure.
Thus, we will show that pre-composition also respects a.e.-equality in the necessary sense.
First, we prove two auxiliary claims:
\begin{claim}\label{claim:equalizer-set-measurable}
    Let $\cU$ and $\cV$ be standard Borel spaces.
    For any measurable functions $f, g\colon \cU \to \cV$, the equalizer set
    \begin{equation*}
        \Equalizer(f, g) \defeq \{ u \in \cU : f(u) = g(u) \}
    \end{equation*}
    is measurable.
\end{claim}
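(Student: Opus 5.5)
The plan is to write the equalizer set as a preimage of the diagonal under a measurable map into the product space. Define $F\colon \cU \to \cV \times \cV$ by $F(u) = (f(u), g(u))$. Then
\begin{equation*}
    \Equalizer(f, g) = F^{-1}(\Delta_\cV), \qquad \Delta_\cV \defeq \{ (v, v) : v \in \cV \}.
\end{equation*}
It therefore suffices to show two things: that $F$ is measurable with respect to the product $\sigma$-algebra $\cB(\cV) \otimes \cB(\cV)$, and that $\Delta_\cV$ lies in this $\sigma$-algebra.

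Measurability of $F$ is routine. The product $\sigma$-algebra is generated by rectangles $A \times B$, and
\begin{equation*}
    F^{-1}(A \times B) = f^{-1}(A) \cap g^{-1}(B),
\end{equation*}
which is measurable because $f$ and $g$ are. Since preimages of generators are measurable, $F$ is measurable.

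The only real point is the diagonal, and here I would use that $\cV$ is standard Borel. Fix a Polish metric $d$ compatible with the topology, so that $\Delta_\cV$ is closed in $\cV \times \cV$ and hence Borel in the product topology. Because $\cV$ is second countable, the Borel $\sigma$-algebra of $\cV \times \cV$ coincides with $\cB(\cV) \otimes \cB(\cV)$, so $\Delta_\cV$ is product-measurable.

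Alternatively, avoiding the topology, I would take a countable separating family $\{B_n\}$ of Borel sets, which exists in any standard Borel space, for instance a countable base. Then
\begin{equation*}
    \cV \times \cV \setminus \Delta_\cV = \bigcup_n \bigl( B_n \times (\cV \setminus B_n) \bigr) \cup \bigl( (\cV \setminus B_n) \times B_n \bigr),
\end{equation*}
a countable union of measurable rectangles.

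Combining the two steps, $\Equalizer(f, g)$ is the preimage of a measurable set under a measurable map, and is therefore measurable. The main (mild) obstacle is the identification $\cB(\cV \times \cV) = \cB(\cV) \otimes \cB(\cV)$, which is where separability is needed. The countable-separating-family argument sidesteps this identification entirely, so I would present that version.
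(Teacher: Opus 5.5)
Your proposal is correct and uses the same argument as the paper: write the equalizer as the preimage of the diagonal $\Delta_\cV$ under the measurable map $u \mapsto (f(u), g(u))$. The only difference is that the paper cites the measurability of the diagonal in a Borel space, whereas you prove it directly via a countable separating family, which is a valid self-contained justification.
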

\begin{proof}
    Defining $h\colon \cU \to \cV \times \cV$ with $h\colon u \mapsto (f(u), g(u))$, we have that $\Equalizer(f, g) = h^{-1}[\Delta_\cV]$, where $\Delta_\cV = \{ (v, v) : v \in \cV \}$.
    As noted by \cite{kallenberg2011invariant}, the diagonal $\Delta_\cV$ is measurable in $\cV \times \cV$ when $\cV$ is Borel, and $h$ is measurable by construction.
    Thus, $h^{-1}[\Delta_\cV]$ is a measurable set.
\end{proof}

\begin{claim}\label{claim:precomposition-full-measure}
    Take $R \in \cP(\cU)$ and $J \in \cK(\cU \to \cV)$, and let $R' = JR \in \cP(\cV)$.
    Let $\cV_1 \subseteq \cV$ be a measurable set with $R'(\cV_1) = 1$.
    Define
    \begin{equation*}
        \cU_1 = \{ u \in \cU : J(\cV_1 \mid u) = 1 \}.
    \end{equation*}
    Then $\cU_1$ is measurable and $R(\cU_1) = 1$.
\end{claim}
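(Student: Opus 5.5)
Measurability of $\cU_1$ comes first. The map $u \mapsto J(\cV_1 \mid u)$ is measurable from $\cU$ to $[0,1]$, since $J$ is a Markov kernel and hence a measurable map into $\cP(\cV)$. Equivalently, for each fixed measurable set the evaluation $u \mapsto J(\cdot \mid u)$ of that set is measurable, which is the standard characterization of Markov kernels on standard Borel spaces. Therefore $\cU_1$ is the preimage of the closed set $\{1\}$ under a measurable real-valued function, and so it is measurable.

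To show $R(\cU_1) = 1$, I use the definition of the pushforward in \Cref{eqn:kernel-composition}:
\begin{equation*}
    1 = R'(\cV_1) = \int_\cU J(\cV_1 \mid u) \cdot R(du).
\end{equation*}
Define $f(u) \defeq 1 - J(\cV_1 \mid u)$. This is a nonnegative measurable function, and the display gives $\int_\cU f \, dR = 1 - 1 = 0$. A nonnegative function with zero integral vanishes $R$-almost everywhere. Concretely, for each $n$ Markov's inequality gives $R(\{ f \geq 1/n \}) \leq n \int f\,dR = 0$, and $\{ f > 0 \} = \bigcup_n \{ f \geq 1/n \}$ is a countable union of $R$-null sets, so it is $R$-null. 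Since $\{ f = 0 \} = \cU_1$, we conclude $R(\cU_1) = 1$.

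There is no real obstacle here. The only point needing care is the measurability of $u \mapsto J(\cV_1 \mid u)$, which I will justify by appealing to the definition of Markov kernels in the background section: measurability into $\cP(\cV)$ with its usual $\sigma$-algebra, generated by the evaluation maps, is equivalent to measurability of every evaluation map.
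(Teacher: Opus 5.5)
Your proof is correct and takes the same approach as the paper. Measurability of $u \mapsto J(\cV_1 \mid u)$ makes $\cU_1$ measurable: the paper calls it an equalizer set with the constant function $1$, you call it a preimage of $\{1\}$. Then $\int_\cU (1 - J(\cV_1 \mid u))\, R(du) = 0$ with a nonnegative integrand forces $R(\cU_1) = 1$; your Markov-inequality step just spells out the ``nonnegative with zero integral vanishes a.e.'' fact the paper uses without proof.
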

\begin{proof}
    Define the measurable function $f\colon \cU \to \bbR$ with $f(u) \colon u \mapsto J(\cV_1 \mid u)$, and let $g\colon \cU \to \bbR$ with $g(u) = 1$.
    Then $\cU_1$ is the equalizer set of $f$ and $g$, so it is measurable by \Cref{claim:equalizer-set-measurable}.
    Then, by definition of the pushforward,
    \begin{equation*}
        R'(\cV_1) 
        =
        (JR)(\cV_1)
        =
        \int_\cU f(u) \cdot R(du)
        =
        1.
    \end{equation*}
    By linearity of the Lebesgue integral, we have $\int_\cU (1 - f(u)) \cdot R(du) = 0$.
    Since $1 - f(u) \geq 0$, we must have $1 - f(u) = 0$ almost surely on $R$. Thus, $f(u) = J(\cV_1 \mid u) = 1$ almost surely on $R$; i.e., $R(\cU_1) = 1$.
\end{proof}

\begin{claim}[Left composition respects a.e.-equality]\label{claim:composition-almost-everywhere}
    Take $R \in \cP(\cU)$, $J \in \cK(\cU \to \cV)$, and $L_1, L_2 \in \cK(\cV \to \cW)$.
    
    Let $R' = JR \in \cP(\cV)$. If $L_1 \sim_{R'} L_2$, then $L_1 J \sim_R L_2 J$.
\end{claim}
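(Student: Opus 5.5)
The plan is to use the good set of the almost-everywhere equality and transfer it back to $\cU$ via \Cref{claim:precomposition-full-measure}. Since $L_1 \sim_{R'} L_2$, there is a set of full $R'$-measure on which $L_1$ and $L_2$ agree. We take
\[
\cV_1 \defeq \{ v \in \cV : L_1(\cdot \mid v) = L_2(\cdot \mid v) \}.
\]
First we must check that $\cV_1$ is measurable. The kernels $L_1, L_2$ are measurable maps $\cV \to \cP(\cW)$, and $\cP(\cW)$ is standard Borel because $\cW$ is. So $\cV_1 = \Equalizer(L_1, L_2)$ is measurable by \Cref{claim:equalizer-set-measurable}. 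Its complement is the null set in the definition of $L_1 \sim_{R'} L_2$, so $R'(\cV_1) = 1$.

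Next we apply \Cref{claim:precomposition-full-measure} with this $\cV_1$. It gives a measurable set $\cU_1 = \{u : J(\cV_1 \mid u) = 1\}$ with $R(\cU_1) = 1$. Fix $u \in \cU_1$ and a measurable $A \subseteq \cW$. Because $J(\cdot \mid u)$ puts all its mass on $\cV_1$, we can restrict the integral to $\cV_1$, use $L_1 = L_2$ there, and extend back:
\[
(L_1J)(A \mid u) = \int_{\cV_1} L_1(A \mid v)\, J(dv \mid u) = \int_{\cV_1} L_2(A \mid v)\, J(dv \mid u) = (L_2J)(A \mid u).
\]
This holds for every $A$, so $(L_1J)(\cdot\mid u) = (L_2J)(\cdot\mid u)$ for all $u \in \cU_1$. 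Hence the disagreement set of $L_1J$ and $L_2J$ lies in $\cU \setminus \cU_1$, which is $R$-null. That is exactly $L_1J \sim_R L_2J$.

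The only delicate step is the measurability of $\cV_1$, which relies on $\cP(\cW)$ being standard Borel with the usual $\sigma$-algebra generated by the evaluation maps. If we prefer to avoid citing that fact, there is a direct route. Fix a countable $\pi$-system $\{A_n\}$ that generates the Borel sets of $\cW$ (this exists since $\cW$ is Polish). By the $\pi$-$\lambda$ theorem, $\cV_1 = \bigcap_n \{ v : L_1(A_n\mid v) = L_2(A_n \mid v)\}$. Each set in this intersection is the equalizer of two measurable real-valued functions, so it is measurable by \Cref{claim:equalizer-set-measurable}, and therefore $\cV_1$ is measurable as a countable intersection.
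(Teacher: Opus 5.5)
Your proof is correct and essentially matches the paper's argument: take the equalizer set $\cV_1$, which is measurable because $\cP(\cW)$ is standard Borel; transfer it to a full-measure set $\cU_1$ via \Cref{claim:precomposition-full-measure}; and show $L_1J$ and $L_2J$ agree on $\cU_1$ by restricting the integral to $\cV_1$. The paper also checks that the disagreement set of $L_1J$ and $L_2J$ is measurable (it is the complement of an equalizer set of two maps $\cU \to \cP(\cW)$), and you should say this too, since $\sim_R$ is defined by evaluating $R$ on that set. Your countable $\pi$-system argument for the measurability of $\cV_1$ is valid and more elementary than citing that $\cP(\cW)$ is Polish.
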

\begin{proof}
    \newcommand{\eq}{\text{eq}}
    Define the sets
    \begin{align*}
        \cV_\eq &\defeq \{ v \in \cV : L_1(\cdot \mid v) = L_2(\cdot \mid v) \},
        \\
        \cU_\eq &\defeq \{ u \in \cU : (L_1 J)(\cdot \mid u) = (L_2 J)(\cdot \mid u) \},\ \text{and}
        \\
        \cU_1 &\defeq \{ u \in \cU : J(\cV_\eq \mid u) = 1 \}.
    \end{align*}
    We briefly check these sets are measurable:
    \begin{itemize}
        \item $\cV_\eq$ is the equalizer set of the measurable functions $f, g\colon \cV \to \cP(\cW)$ with $f\colon v \mapsto L_1(\cdot \mid v)$ and $g\colon v \mapsto L_1(\cdot \mid v)$, and is thus measurable by \Cref{claim:equalizer-set-measurable}.
        \item $\cU_\eq$ is the equalizer set of the measurable functions $f, g\colon \cU \to \cP(\cW)$ with $f\colon u \mapsto (L_1 J)(\cdot \mid u)$ and $g\colon u \mapsto (L_2 J)(\cdot \mid u)$, and is thus measurable by \Cref{claim:equalizer-set-measurable}.
        \item $\cU_1$ is the equalizer set of the measurable functions $f, g\colon \cU \to \bbR$ with $f\colon u \mapsto J(\cV_\eq \mid u)$ and $g\colon u \mapsto 1$, and is thus measurable by \Cref{claim:equalizer-set-measurable}.
    \end{itemize}

    Notably, we use that $\cP(\cW)$ is a Polish Borel space, which follows from the fact that $\cP(\cW)$ is a Polish space (under the topology of weak convergence) for any Polish space $\cW$, see e.g. \citet[Theorem 8.9.5]{bogachev2007measure}.
    
    By the assumption that $L_1 \sim_{R'} L_2$, we have $R'(\cV_\eq) = 1$.
    By \Cref{claim:precomposition-full-measure}, we have $R(\cU_1) = 1$.

    Finally, we aim to show that $\cU_1 \subseteq \cU_\eq$.
    For $u \in \cU_1$, we have
    \begin{align*}
        (L_1 J)(\cdot \mid u)
        &=
        \int_\cV L_1(\cdot \mid v) \cdot J(dv \mid u)
        \tag{By definition}
        \\
        &=
        \int_{\cV_\eq} L_1(\cdot \mid v) \cdot J(dv \mid u)
        \tag{Since $J(\cV_1 \mid u) = 1$ for $u \in \cU_1$}
        \\
        &=
        \int_{\cV_\eq} L_2(\cdot \mid v) \cdot J(dv \mid u)
        \tag{Since $L_1(\cdot \mid v) = L_2(\cdot \mid v)$ for $v \in \cV_\eq$}
        \\
        &=
        \int_{\cV} L_2(\cdot \mid v) \cdot J(dv \mid u)
        \tag{Since $J(\cV_1 \mid u) = 1$ for $u \in \cU_1$}
        \\
        &=
        (L_2 J)(\cdot \mid u)
        \tag{By definition}
    \end{align*}
    Thus $\cU_1 \subseteq \cU_\eq$, hence $R(\cU_\eq) \geq R(\cU_1) = 1$.
    Hence, by the definition of $\cU_\eq$, we have $L_1 J \sim_R L_2 J$.
\end{proof}

\section{ADDITIONAL EXAMPLES}\label{appendix:other-examples}

\subsection{Feature Equivalence Does Not Imply Blackwell Comparability}

In \Cref{sec:blackwell-coarsening}, we mentioned that a minor modification of \Cref{ex:feature-equivalence} shows that two models $\M$ and $\M'$ may be feature equivalent, but may not be comparable under the Blackwell relation, i.e., we may have neither $\M \Bleq \M'$ nor $\M' \Bleq \M$.
We now furnish the promised example.

\begin{example}[Feature equivalence]
    Consider the spaces $\cC = \{ a, b \}$, $\cC' = \{ c, d \}$, and $\cZ = \{ u, v \}$.
    
    Let $\M = (Q, K)$ for
    \begin{equation*}
        Q = \bordermatrix{
          & \cr
        a & 3/4 \cr
        b & 1/4
        }
        \quad
        K = \bordermatrix{
          & a & b \cr
        u & 2/3 & 0 \cr
        v & 1/3 & 1 \cr
        }.
    \end{equation*}

    Meanwhile, let $\M' = (Q', K')$ for
    \begin{equation*}
        Q' = \bordermatrix{
        & \cr
        c & 1/2 \cr
        d & 1/2
        }
        \quad
        K' = \bordermatrix{
          & c & d \cr
        u & 3/4 & 1/4 \cr
        v & 1/4 & 3/4 \cr
        }.
    \end{equation*}
    Then $P^\M = P^{\M'} = \frac{1}{2}(\delta_u + \delta_v)$.
    We do not have $K \sim_Q K' T$ for any Markov kernel $T \in \cK(\cC \to \cC')$.
    In particular, since $Q$ has full support, this condition becomes an equality, i.e., $K = K'T$.
    This equality induces a well-posed system of linear equations:
    \begin{equation}
        \begin{bmatrix}
            2/3 & 0
            \\
            1/3 & 1
        \end{bmatrix}
        =
        \begin{bmatrix}
            T_{ac} & T_{bc}
            \\
            T_{ad} & T_{bd}
        \end{bmatrix}
        \cdot 
        \begin{bmatrix}
            3/4 & 1/4
            \\
            1/4 & 3/4
        \end{bmatrix}.
    \end{equation}
    The only solution to this system is
    \begin{equation*}
        \begin{bmatrix}
            T_{ac} & T_{bc}
            \\
            T_{ad} & T_{bd}
        \end{bmatrix}
        =
        \begin{bmatrix}
            5/6 & -1/2
            \\
            1/6 & 3/2
        \end{bmatrix},
    \end{equation*}
    which is not a valid Markov kernel (since, e.g., $T_{bc} < 0$).
    
    Similarly, we do not have $K' \sim_{Q'} KT'$ for any Markov kernel $T' \in \cK(\cC' \to \cC)$.
    Setting up a similar system of linear equations, the only solution is
    \begin{equation*}
        \begin{bmatrix}
            T'_{ca} & T'_{da}
            \\
            T'_{cb} & T'_{db}
        \end{bmatrix}
        =
        \begin{bmatrix}
            9/8 & 3/8
            \\
            -1/8 & 5/8
        \end{bmatrix},
    \end{equation*}
    which is again not a valid Markov kernel.
\end{example}

\subsection{A Blackwell Irreducible Class of Kernels}

In \Cref{sec:blackwell-reducibility}, we mentioned that the Blackwell reducibility condition rules out cases like \Cref{ex:feature-equivalence}.
We now describe this statement in more detail.

To adapt \Cref{ex:feature-equivalence} to the case where $\cC = \cC'$, we let $\cC = \{ a, b\}$ and $\cZ = \{ u, v \}$.
We consider the kernel class $\cK = \{ K, K' \} \subseteq \cK_\sep(\cC \to \cZ)$, where
\begin{equation}
    K = \bordermatrix{
      & a & b \cr
    u & 2/3 & 0 \cr
    v & 1/3 & 1 \cr
    }\quad \text{and}\quad
    K' = \bordermatrix{
      & a & b \cr
    u & 1 & 0 \cr
    v & 0 & 1 \cr
    }.
\end{equation}

Then, we can show the following.
\begin{claim}
    The above kernel class $\cK$ is not Blackwell reducible.
\end{claim}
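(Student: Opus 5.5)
We argue by contradiction. Suppose $\cK = \{K, K'\}$ is Blackwell reducible through some base kernel $B \in \cK_\sep(\cI \to \cZ)$. Then there are injective functions $\phi, \phi'\colon \cC \to \cI$ with $K = B \Dirac_\phi$ and $K' = B \Dirac_{\phi'}$. Spelled out pointwise, this says
\[
K(\cdot \mid a) = B(\cdot \mid \phi(a)), \quad K(\cdot \mid b) = B(\cdot \mid \phi(b)),
\]
and similarly $K'(\cdot \mid c) = B(\cdot \mid \phi'(c))$ for $c \in \{a, b\}$. Since $\cC = \{a, b\}$, the kernel $K$ is determined by the two distributions
\[
K(\cdot \mid a) = \tfrac{2}{3}\delta_u + \tfrac{1}{3}\delta_v, \qquad K(\cdot \mid b) = \delta_v,
\]
and $K'$ by $K'(\cdot \mid a) = \delta_u$ and $K'(\cdot \mid b) = \delta_v$. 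So $B$ must output the three distinct distributions $\tfrac{2}{3}\delta_u + \tfrac{1}{3}\delta_v$, $\delta_u$ and $\delta_v$ at the three points $\phi(a)$, $\phi'(a)$ and $\phi(b)$ (or $\phi'(b)$). These points are distinct, because $B$ takes distinct values there.

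The key step is to contradict measure-separation of $B$ on the domain $\cR = \MeasInj(\cC \to \cI) \circ \cP(\cC)$. I would use the two distributions on $\cI$
\[
R = \delta_{\phi(a)} \quad\text{and}\quad R' = \tfrac{2}{3}\delta_{\phi'(a)} + \tfrac{1}{3}\delta_{\phi(b)}.
\]
Both lie in $\cR$. For $R$, it is the pushforward of $\delta_a$ under $\phi$. For $R'$, the map $\psi$ with $\psi(a) = \phi'(a)$ and $\psi(b) = \phi(b)$ is injective from $\cC$ to $\cI$, since the two points are distinct. Then $R' = \Dirac_\psi(\tfrac{2}{3}\delta_a + \tfrac{1}{3}\delta_b)$.

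By linearity of the pushforward,
\[
B R = \tfrac{2}{3}\delta_u + \tfrac{1}{3}\delta_v = \tfrac{2}{3}\delta_u + \tfrac{1}{3}\delta_v = B R'.
\]
However, $R \neq R'$, since their supports differ. This contradicts injectivity of $B_\sharp$ on $\cR$, and indeed on all of $\cP(\cI)$.

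The main obstacle is interpreting "measure-separating" correctly. Definition~\ref{def:blackwell-reducibility} requires $B \in \cK_\sep(\cI \to \cZ)$, which means full injectivity, and then the argument above is immediate. If only separation on $\cR$ were intended, the crux is exactly the point handled above. We must check that $R'$ is the image of a distribution on $\cC$ under an injective map, and this needs the distinctness of $\phi'(a)$ and $\phi(b)$. That distinctness follows from $\delta_u \neq \delta_v$.

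Put intuitively, the convex-combination structure of $K(\cdot\mid a)$ in terms of the outputs of $K'$ forces any common base kernel to confuse a point mass with a mixture.
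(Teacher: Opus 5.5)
Your proof is correct, but it uses a different pair of colliding measures than the paper does.

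\textbf{The paper's route.} It takes $R = E_\phi Q$ and $R' = E_{\phi'} Q'$, where $Q = (3/4, 1/4)$ and $Q' = (1/2, 1/2)$ are the concept distributions that make the two models feature equivalent. Then $BR = KQ = K'Q' = BR'$. Also $R \neq R'$ because their atom weights $\{3/4, 1/4\}$ and $\{1/2, 1/2\}$ differ. So nothing about where the atoms of $R$ and $R'$ sit in $\cI$ is needed, and membership in $\cR$ is immediate.

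\textbf{Your route.} You instead exploit the identity $K(\cdot \mid a) = \frac{2}{3} K'(\cdot \mid a) + \frac{1}{3} K(\cdot \mid b)$. That is, one output of $B$ used by the class is a nontrivial mixture of two others. This costs some extra bookkeeping, all of which you handle correctly:
\begin{itemize}
\item you splice $\phi$ and $\phi'$ into a new injective map $\psi$ so that $R'$ lies in $\cR$;
\item you deduce that $\phi(a)$, $\phi'(a)$ and $\phi(b)$ are distinct from the distinctness of the corresponding outputs of $B$.
\end{itemize}

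\textbf{What each buys.} Your argument does not need a feature-equivalent pair of models. It isolates a structural obstruction: a point mass and a genuine mixture in $\cR$ collide under $B$ whenever some output of $B$ used by the class is a nontrivial convex combination of other outputs used by the class (with few enough atoms that the mixture still lies in $\cR$). The paper's argument instead ties the counterexample directly to the feature-equivalence example. It also mirrors the mechanism of \Cref{thm:feature-implies-blackwell}, which would force $E_\phi Q = E_{\phi'} Q'$ for feature-equivalent models.
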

\begin{proof}
Suppose there exists a Polish Borel space $\cI$, a base kernel $B \in \cK(\cI \to \cZ)$, and functions $\phi, \phi' \in \MeasInj(\cC \to \cZ)$ such that $K = B E_\phi$ and $K' = B E_{\phi'}$.
We will show that the base kernel $B$ cannot be measure-separating.

Let $R = E_\phi Q$ and $R' = E_{\phi'} Q'$ for
\begin{equation}
    Q = \bordermatrix{
        & \cr
        a & 3/4 \cr
        b & 1/4
    }\quad \text{and}\quad
    Q' = \bordermatrix{
        & \cr
        a & 1/2 \cr
        b & 1/2
    }.
\end{equation}
Then we have $R = \frac{3}{4}\delta_{\phi(a)} + \frac{1}{4} \delta_{\phi(b)}$ and $R' = \frac{1}{2} \delta_{\phi'(a)} + \frac{1}{2} \delta_{\phi'(b)}$.
Since $\phi$ and $\phi'$ are injective, we must have $R \neq R'$, since (for example), $R$ has lower entropy than $R'$.
However, we have $B R = K Q = \frac{1}{2} (\delta_u + \delta_v)$ and $B R' = K' Q' = \frac{1}{2} (\delta_u + \delta_v)$, i.e., $B R = BR'$ for $R \neq R'$, so $B$ is not measure-separating.
\end{proof}
\section{DEFEFFED PROOFS (MAIN RESULTS)}

\subsection[Proofs from Section 3.3]{Proofs from \Cref{sec:blackwell-coarsening}}\label{proofs:blackwell-coarsening}

As mentioned in \Cref{sec:blackwell-coarsening}, the Blackwell coarsening relation is transitive, justifying the use of the ordering symbol $\Bleq$.
We now state this formally and give a short proof.
\begin{prop}[Transitivity of the Blackwell coarsening relation]\label{prop:transitivity-blackwell}
    Let $\M = (Q, K)$, $\M' = (Q', K')$, and $\M'' = (Q'', K'')$, where $\M \in \cM_\all(\cC, \cZ)$, $\M' \in \cM_\all(\cC', \cZ)$, and $\M'' \in \cM_\all(\cC'', \cZ)$.
    
    If $\M \Bleq \M'$ and $\M' \Bleq \M''$, then $\M \Bleq \M''$. In particular, if $\M$ is Blackwell coarser than $\M'$ under $T$, and $\M'$ is Blackwell coarser than $\M''$ under $T'$, then $\M$ is Blackwell coarser than $\M''$ under $T'' = T' T$.
\end{prop}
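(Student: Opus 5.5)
We verify the two defining conditions of Blackwell coarsening for $\M$ and $\M''$ under $T'' = T'T$, using only functoriality and \Cref{claim:composition-almost-everywhere} (left composition respects a.e.-equality). The hypotheses are that $Q' = TQ$ and $K \sim_Q K'T$ (for $\M \Bleq \M'$ under $T$), and that $Q'' = T'Q'$ and $K' \sim_{Q'} K''T'$ (for $\M' \Bleq \M''$ under $T'$).

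\textbf{Measure refinement.} Substitution and functoriality give
\begin{equation*}
    Q'' = T'Q' = T'(TQ) = (T'T)Q = T''Q.
\end{equation*}
This step is routine.

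\textbf{Kernel coarsening.} We need $K \sim_Q K''T''$. We already have $K \sim_Q K'T$, so by transitivity of $Q$-a.e. equality it suffices to show
\begin{equation*}
    K'T \sim_Q K''T'T.
\end{equation*}
This is where the one subtle point lies: we must \emph{pre-compose} an a.e.-equality with $T$, but the equality $K' \sim_{Q'} K''T'$ holds only $Q'$-a.e. on $\cC'$. The a.e.-equality is not with respect to $Q$.

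We handle this with \Cref{claim:composition-almost-everywhere}. Take $R = Q$ and $J = T$, so that $R' = TQ = Q'$ by the measure refinement condition for $\M \Bleq \M'$. Take $L_1 = K'$ and $L_2 = K''T'$, both in $\cK(\cC' \to \cZ)$. The claim's hypothesis $L_1 \sim_{R'} L_2$ is exactly $K' \sim_{Q'} K''T'$, so the claim yields $K'T \sim_Q (K''T')T$.

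Associativity of kernel composition (Proposition 3.2 of \citet{panangaden1999category}) gives $(K''T')T = K''(T'T) = K''T''$. Chaining with $K \sim_Q K'T$ then gives $K \sim_Q K''T''$. Transitivity of $\sim_Q$ holds because the union of two $Q$-null sets is $Q$-null, and the relevant sets are measurable by \Cref{claim:equalizer-set-measurable}.

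Together, the two conditions show that $\M$ is Blackwell coarser than $\M''$ under $T''$, and hence $\M \Bleq \M''$.
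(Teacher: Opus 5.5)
Your proposal is correct and follows essentially the same route as the paper. You get measure refinement by substitution and functoriality. You get kernel coarsening by applying \Cref{claim:composition-almost-everywhere} with $R = Q$, $J = T$, $L_1 = K'$, $L_2 = K''T'$ to obtain $K'T \sim_Q (K''T')T$, then chain this with $K \sim_Q K'T$ using transitivity of $\sim_Q$ and associativity.
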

\begin{proof}
    From the Blackwell coarsening relation, we have (1) $Q' = TQ$, (2) $K \sim_Q K' T$, (3) $Q'' = T' Q'$, and (4) $K' \sim_{Q'} K'' T'$.
    We must check the measure refinement and kernel coarsening conditions.

    \textbf{Measure refinement:}
    The proof follows from substitution and functoriality:
    \begin{align*}
        Q'' &= T' Q'
        \tag{Fact (3)}
        \\
        &= T'(TQ)
        \tag{Fact (1)}
        \\
        &= (T' T)(Q)
        \tag{Functoriality}
        \\
        &= T''Q
        \tag{Since $T'' \defeq T' T$},
    \end{align*}
    as desired.

    \textbf{Kernel coarsening:}
    By \Cref{claim:composition-almost-everywhere}, we have that (4) $K' \sim_{Q'} K'' T'$ implies that (5) $K' T \sim_Q (K''T') T$.\footnote{
    In particular, we use \Cref{claim:composition-almost-everywhere} with $R = Q$, $J = T$, $L_1 = K'$, $L_2 = K''T'$, and $R' = Q' = TQ$.
    }
    The proof follows from transitivity of $\sim_Q$ and functoriality:
    \begin{align*}
        K &\sim_Q K'T
        \tag{Fact (1)}
        \\
        &\sim_Q (K''T')T
        \tag{Fact (5) and transitivity of $\sim_Q$}
        \\
        &= K'' (T' T)
        \tag{By functoriality}
        \\
        &= K''T'',
        \tag{Since $T'' \defeq T' T$}
    \end{align*}
    as desired.
\end{proof}

Our first result in \Cref{sec:blackwell-coarsening} shows that Blackwell coarsening is sufficient for feature equivalence.
We restate this result here and provide the proof.

\begin{prop*}[Restatement of \Cref{prop:blackwell-duality-sufficient}]
    If $\M \Bleq \M'$, then $\M$ and $\M'$ are feature equivalent.
    In particular, if $Q' = T Q$ and $K \sim_Q K' T$, then $K Q = K' Q'$.
\end{prop*}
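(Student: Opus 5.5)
The plan is to write $P^\M = KQ$ and $P^{\M'} = K'Q'$, and transform one into the other in three moves: substitute the measure refinement, use functoriality, and then use the kernel coarsening. The only non-trivial ingredient is that $\mu$-a.e.\ equal kernels have the same pushforward of any measure $\mu$. Concretely, I will establish
\begin{equation*}
    K'Q' = K'(TQ) = (K'T)Q = KQ,
\end{equation*}
where the first equality is measure refinement ($Q' = TQ$) and the second is functoriality. The third equality needs an auxiliary fact, proved next.

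\textbf{Auxiliary fact.} If $K_1 \sim_Q K_2$ for $K_1, K_2 \in \cK(\cC \to \cZ)$, then $K_1 Q = K_2 Q$.

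I will prove this directly from the definition of the pushforward. Let $N = \{ c : K_1(\cdot \mid c) \neq K_2(\cdot \mid c) \}$. This set is measurable by \Cref{claim:equalizer-set-measurable}, since $\cP(\cZ)$ is Polish Borel, and by assumption $Q(N) = 0$. For any measurable $A \subseteq \cZ$,
\begin{equation*}
    (K_1 Q)(A) = \int_{\cC \setminus N} K_1(A \mid c)\, Q(dc) = \int_{\cC \setminus N} K_2(A \mid c)\, Q(dc) = (K_2 Q)(A).
\end{equation*}
The outer equalities hold because the integrand is bounded by $1$ and $Q(N) = 0$; the middle one holds because $K_1$ and $K_2$ agree off $N$.

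Alternatively, the auxiliary fact follows from \Cref{claim:composition-almost-everywhere}. Take $R$ to be the unique distribution on a one-point space, $J$ the kernel sending that point to $Q$, and $L_1 = K$, $L_2 = K'T$. The claim then gives $KJ \sim_R K'TJ$, and on a one-point space this is exactly $KQ = (K'T)Q$. I will use whichever of the two routes reads more cleanly.

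Applying the auxiliary fact with $K_1 = K$ and $K_2 = K'T$ justifies the third equality in the chain, which gives $P^\M = KQ = K'Q' = P^{\M'}$.

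The main obstacle is just this a.e.\ step, specifically measurability of the disagreement set $N$. That is already handled by \Cref{claim:equalizer-set-measurable}, so I expect no real difficulty.
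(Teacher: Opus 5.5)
Your proposal is correct and follows essentially the same route as the paper: substitute $Q' = TQ$, apply functoriality to get $K'(TQ) = (K'T)Q$, and use $K \sim_Q K'T$ to conclude $(K'T)Q = KQ$. The only difference is that you explicitly prove that $Q$-a.e.\ equal kernels have the same pushforward of $Q$, a step the paper states without proof.
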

\begin{proof}
    Since $K \sim_Q K'T$, we have $K Q = (K'T)Q$.
    Since $Q' = TQ$, we have $K'Q' = K'(TQ)$.
    
    Thus, $P^\M = P^{\M'}$ directly follows from functoriality, since $(K'T)Q = K'(TQ)$.
\end{proof}

Our second result in \Cref{sec:blackwell-coarsening} gives a result in the reverse direction: under an injectivity assumption, feature equivalence and the kernel coarsening condition imply the measure refinement condition.
Again, we restate this result and provide the proof.
\begin{lemma*}[Restatement of \Cref{lemma:kernel-implies-measure}]
    Assume that $K'$ is measure-separating, and that $K \sim_Q K' T$ for some $T \in \cK(\cC \to \cC')$.
    
    If $P^\M = P^{\M'}$, then $Q' = T Q$, i.e., $\M \Bleq \M'$.
\end{lemma*}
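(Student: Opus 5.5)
The plan is to show that the two distributions $Q'$ and $TQ$ on $\cC'$ have the same image under $K'_\sharp$, and then cancel $K'_\sharp$ using the measure-separating hypothesis. The argument takes three steps.

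\textbf{Step 1 (kernel coarsening).} I will first show that $K \sim_Q K'T$ gives $KQ = (K'T)Q$. This is the same observation used in the proof of \Cref{prop:blackwell-duality-sufficient}. For any measurable $A \subseteq \cZ$, $KQ(A) = \int_\cC K(A \mid c)\, Q(dc)$. The integrand agrees with $(K'T)(A \mid c)$ outside a $Q$-null set, so the two integrals coincide.

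\textbf{Step 2 (functoriality).} Functoriality gives $(K'T)Q = K'(TQ)$. Combining this with Step 1 and the hypothesis $P^\M = P^{\M'}$ yields the chain
\begin{equation*}
    K'(TQ) = (K'T)Q = KQ = P^\M = P^{\M'} = K'Q'.
\end{equation*}

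\textbf{Step 3 (cancellation).} Both $TQ$ and $Q'$ lie in $\cP(\cC')$, and $K'_\sharp$ is injective by the measure-separating assumption. Hence $TQ = Q'$, which is the measure refinement condition. Together with the assumed kernel coarsening $K \sim_Q K'T$, this gives $\M \Bleq \M'$ under $T$.

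There is no real obstacle. The one point to watch is the domain of injectivity. If $K'$ were only measure-separating on a restricted class $\cP'$ (as in the footnote's generalization), one would also have to check that both $TQ$ and $Q'$ belong to $\cP'$. Under the stated hypothesis, $K'_\sharp$ is injective on all of $\cP(\cC')$, so nothing further is needed.
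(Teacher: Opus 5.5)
Your proposal is correct and follows essentially the same route as the paper's proof: rewrite $KQ = (K'T)Q$ using $K \sim_Q K'T$, apply functoriality to get $K'(TQ)$, equate this with $K'Q'$ via $P^\M = P^{\M'}$, and cancel $K'_\sharp$ by the measure-separating assumption. The only addition is that you spell out the integral argument for why a.e.-equal kernels have the same pushforward of $Q$, a step the paper leaves implicit.
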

\begin{proof}
    By substitution and functoriality,
    \begin{align*}
        K Q
        &=
        (K'T)(Q)
        \tag{Since $K \sim_Q K'T$}
        \\
        &=
        K' (T Q)
        \tag{Functoriality}.
    \end{align*}
    
    By assumption, $K Q = K'Q'$, hence the above gives that $K'(TQ) = K'(Q')$.
    By the assumption that $K'$ is measure-separating, $Q' = TQ$.
\end{proof}
\subsection[Proofs from Section 4.1]{Proofs from \Cref{sec:blackwell-reducibility}}\label{proofs:blackwell-reducibility}

Our first result in \Cref{sec:blackwell-reducibility} shows that feature equivalance and Blackwell equivalence coincide when we only consider mixing kernels from a Blackwell reducible class $\cK$.
We now prove this key result.
\begin{thm*}[Restatement of \Cref{thm:feature-implies-blackwell}]
    Let $\cM = \cP(\cC) \times \cK$ for some BR class of kernels $\cK$, and let $\M, \M' \in \cM$ with $\M = (Q, K)$ and $\M' = (Q', K')$.
    If $P^\M = P^{\M'}$, then $\M \Bequiv \M'$.

    If we also assume $\cK$ is a CBR class of kernels and $\supp(Q) = \supp(Q') = \cC$, then $\M$ is a Blackwell coarsening of $\M'$ under $\Dirac_\tau$ for some $\tau \in \MeasInj(\cC \to \cC)$.
\end{thm*}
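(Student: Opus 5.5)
The plan is to exploit the shared base kernel $B$ so that feature equivalence at the level of $\cZ$ collapses to equality of distributions on the intermediate space $\cI$, and then to build the transition out of the two injective embeddings. By Blackwell reducibility write $K = B\Dirac_\phi$ and $K' = B\Dirac_{\phi'}$ with $\phi,\phi' \in \MeasInj(\cC \to \cI)$. By functoriality, $KQ = B(\Dirac_\phi Q)$ and $K'Q' = B(\Dirac_{\phi'}Q')$. Both $R \defeq \Dirac_\phi Q$ and $R' \defeq \Dirac_{\phi'}Q'$ lie in $\MeasInj(\cC\to\cI)\circ\cP(\cC)$, and $B$ is measure-separating there. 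So $P^\M = P^{\M'}$ gives $R = R'$.

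\textbf{Constructing the transition.} Since $\cC$ and $\cI$ are standard Borel and $\phi'$ is injective and measurable, the Lusin--Souslin theorem makes $\phi'(\cC)$ Borel and $\phi'^{-1}\colon\phi'(\cC)\to\cC$ measurable; the same holds for $\phi$. The set $A \defeq \phi^{-1}(\phi'(\cC))$ is Borel with $Q(A) = R(\phi'(\cC)) = R'(\phi'(\cC)) = 1$. On $A$ set $\tau \defeq \phi'^{-1}\circ\phi$; on $\cC\setminus A$ let $\tau$ be any measurable map, for instance a constant. Put $T = \Dirac_\tau$.

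\textbf{Kernel coarsening.} For $c \in A$ we have $\phi'\circ\tau(c) = \phi(c)$, so $K'\Dirac_\tau = B\Dirac_{\phi'\circ\tau}$ agrees with $B\Dirac_\phi = K$ on $A$. Hence $K \sim_Q K'T$.

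\textbf{Measure refinement.} One can invoke \Cref{lemma:kernel-implies-measure}, but it is more direct to use injectivity of $\phi'$. We have $\Dirac_{\phi'}(\Dirac_\tau Q) = \Dirac_{\phi}Q = R = \Dirac_{\phi'}Q'$, and pushforward along an injective Borel map with Borel inverse is injective on $\cP(\cC)$. Therefore $Q' = \Dirac_\tau Q$ and $\M \Bleq \M'$. Swapping the roles of $\M$ and $\M'$ gives $\M' \Bleq \M$, so $\M \Bequiv \M'$.

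\textbf{Second statement.} Here $\tau$ must additionally be injective on all of $\cC$. On $A$, $\tau$ is already injective, since it is a composition of injections. The remaining problem is to redefine $\tau$ on the $Q$-null Borel set $N \defeq \cC\setminus A$ so that it stays injective and measurable. This is harmless for both Blackwell conditions, since they only see $\tau$ $Q$-a.e.

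This is the step I expect to be the main obstacle, and it is where continuity and full support enter. Because $\phi$ and $\phi'$ are continuous and $\supp(Q) = \supp(Q') = \cC$, we get $\supp(R) = \overline{\phi(\cC)}$ and $\supp(R') = \overline{\phi'(\cC)}$. From $R = R'$ the two images therefore have the same closure, which controls how large $N$ and its image can be. I would then aim to find a Borel injection of $N$ into the Borel set $\cC\setminus\tau(A)$ via the Kuratowski/Borel isomorphism theorem. This requires a cardinality comparison: $\cC\setminus\tau(A)$ must be uncountable, or at least as large as $N$.

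If that comparison fails, I would fall back on one of two options. The first is to shrink $A$ by a further null set to free up room: for example, remove from $A$ an uncountable $Q$-null Borel set, which exists when $\cC$ is uncountable, and reroute it. The second is to use the symmetric map $\phi^{-1}\circ\phi'$ to build a Borel bijection by a Schröder--Bernstein-type argument between the two full-measure pieces. The countable and discrete case of $\cC$ would be handled separately. There full support forces $N = \emptyset$, because every point then has positive $Q$-mass.
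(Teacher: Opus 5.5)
The first statement is handled correctly and follows the paper's argument. Your domain $A=\phi^{-1}(\phi'(\cC))$, made Borel by Lusin--Souslin, is actually the right one. The paper's $\phi^{-1}[\supp R]$ can contain points where $(\phi')^{-1}\circ\phi$ is undefined. Getting $Q'=E_\tau Q$ from injectivity of $E_{\phi'}$ rather than \Cref{lemma:kernel-implies-measure} is a harmless variation.

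\emph{The gap is in the second statement.} You never actually produce an injective $\tau$. Your case split covers uncountable $\cC$ and discrete $\cC$, but a countable Polish space need not be discrete, and that remaining case cannot be closed: there the statement is false. Here is a counterexample.
\begin{itemize}
\item Take the closed set $\cC=\{(m,0)\}_{m\ge1}\cup\{(m,1/k)\}_{m,k\ge1}\subset\bbR^2$, with $\cI=\cZ=\bbR^2$ and $B\colon z\mapsto\delta_z$.
\item Let $\phi$ be the inclusion. Let $\phi'$ act on the $m$-th column by $(m,1)\mapsto(1,1/m)$, $(m,1/k)\mapsto(m+1,1/(k-1))$ for $k\ge2$, and $(m,0)\mapsto(m+1,0)$. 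Both maps are continuous injections, and $\phi'(\cC)=\cC\setminus\{(1,0)\}$.
\item Let $R$ give positive mass to every point of $\cC\setminus\{(1,0)\}$. Let $Q$ be $R$ regarded on $\cC$, and set $Q'(\{c\})=R(\{\phi'(c)\})$. Then $E_\phi Q=R=E_{\phi'}Q'$.
\item $Q'$ charges every point, and $Q$ charges every point except $(1,0)=\lim_k(1,1/k)$, so both have full support.
\item Every point other than $(1,0)$ is a $Q$-atom. Hence $E_\phi\sim_Q E_{\phi'}E_\tau$ forces $\tau=(\phi')^{-1}$ on $\cC\setminus\{(1,0)\}$, which is already a bijection onto $\cC$. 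So $\tau((1,0))$ must collide, and no injective $\tau$ exists.
\end{itemize}
The reverse coarsening, under $\phi'$ itself, is injective, but the hypotheses are symmetric in $\M,\M'$. In this example $N=\{(1,0)\}$ and $\cC\setminus\tau(A)=\emptyset$. The equal-closure fact you derived, $\overline{\phi(\cC)}=\overline{\phi'(\cC)}=\supp(R)$, holds but gives nothing further. The paper's proof has the same hole: it asserts that $\phi_0$ and $\phi'_0$ ``have matching images,'' whereas $R=R'$ plus continuity yields only equal closures.

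What your plan does prove is a correct restricted version.
\begin{itemize}
\item If $\cC$ is uncountable, your first fallback works, even without continuity or full support. If $A$ is uncountable, it contains a Cantor set, which splits into continuum many disjoint Cantor sets; at least one of them, call it $U$, is $Q$-null. Map $N\cup U$ Borel-isomorphically (Kuratowski) onto the uncountable Borel set $\cC\setminus\tau(A\setminus U)$. If $A$ is countable, then $N$ and $\cC\setminus\tau(A)$ are already uncountable Borel sets.
\item If $\cC$ is discrete, full support gives $N=\emptyset$.
\end{itemize}
So the second statement needs an extra hypothesis, such as $\cC$ uncountable or discrete, or it should only require $\tau$ to be injective $Q$-a.e.
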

\begin{proof}
    By Blackwell reducibility, $K = B \Dirac_\phi$ and $K' = B \Dirac_{\phi'}$ for some base kernel $B \in \cK_\sep(\cI \to \cZ ; \cR)$ and injective functions $\phi$ and $\phi'$.
    Thus, by functoriality, $P^\M = B(E_\phi Q)$ and $P^{\M'} = B(E_{\phi'}Q')$.
    By definition, $\cR = \MeasInj(\cC \to \cI) \circ \cP(\cC)$, so $E_\phi, E_{\phi'} \in \cR$.
    Since $B$ is measure-separating on $\cR$, and $P^\M = P^{\M'}$, we have $E_\phi Q = E_{\phi'} Q' = R$ for some $R \in \cP(\cI)$.

    To address domain issues, we must define the restrictions of several objects:
    \begin{itemize}
        \item \textbf{Sets:} Let $\cI_0 = \supp(R)$, let $\cC_0 = \phi^{-1}[\cI_0]$, and let $\cC'_0 = (\phi')^{-1}[\cI_0]$.
        \item \textbf{Functions:} Let $\phi_0\colon \cC_0 \to \cI_0$ denote the restriction of $\phi$ to $\cC_0$, and let $\phi_0'\colon \cC'_0 \to \cI_0$ denote the restriction of $\phi'$ to $\cC'_0$.
        \item \textbf{Kernels:} Let $K_0 \in \cK(\cC_0 \to \cZ)$ denote the restriction of $K$ to $\cC_0$, and let $K'_0 \in \cK(\cC'_0 \to \cZ)$ denote the restriction of $K'$ to $\cC'_0$.
        Note that $K_0 = B E_{\phi_0}$ and $K'_0 = B E_{\phi'_0}$.
    \end{itemize}

    By construction, $\phi_0$ and $\phi_0'$ have matching images, so $\tau_0 \defeq (\phi'_0)^{-1} \circ \phi_0$ is a well-defined function from $\cC_0$ to $\cC'_0$, and $\tau_0$ is injective.
    Let $\tau$ be any measurable extension of $\tau_0$ onto all of $\cC$.
    We now show that $\M$ is Blackwell coarser than $\M'$ under $E_\tau$.


    \paragraph{Kernel Coarsening}
    By definition of $\tau_0$, we have $E_{\phi'_0} E_{\tau_0} = E_{\phi_0}$.
    Thus, by functoriality, $K'_0 E_{\tau_0} = (B E_{\phi'_0}) E_{\tau_0} = B(E_{\phi'_0} E_{\tau_0}) = B E_{\phi_0} = K_0$, i.e., $K_0 = K'_0 E_{\tau_0}$.
    Since $\tau$ only extends $\tau_0$ on a $Q$-null set, this gives $K \sim_Q K' \Dirac_\tau$.

    \paragraph{Measure refinement}
    By Blackwell reducibility, $K'$ is measure-separating, and we have just shown that $K \sim_Q K' \Dirac_\tau$.
    Thus, by \Cref{lemma:kernel-implies-measure}, we have $Q' = E_\tau Q$.

    Repeating the proof by swapping $\M$ and $\M'$ shows that $\M'$ is Blackwell coarser than $\M$ under any extension $\tau'$ of $\tau'_0 \defeq \phi_0^{-1} \circ \phi_0'$.
    Hence, $\M$ and $\M'$ are Blackwell equivalent.

    \paragraph{Special case of full supports and continuous Blackwell reducibility}
    By \Cref{claim:precomposition-full-measure}, since $R(\cI_0) = 1$, we have that $Q(\cC_0) = 1$.
    By definition, $\cI_0$ is a closed set.
    Thus, if $\phi$ is continuous, then $\cC_0 = \phi^{-1}[\cI_0]$ is a closed set.
    Hence, by definition of $\supp(Q)$, we have $\supp(Q) \subseteq \cC_0$.
    Similarly, $\supp(Q') \subseteq \cC'_0$.

    Under the assumption that $\supp(Q) = \supp(Q') = \cC$, we thus obtain $\cC_0 = \cC'_0 = \cC$.
    Hence, we avoid all domain issues: $\tau_0$ is already a well-defined injective function from $\cC$ to $\cC$.
\end{proof}

\subsubsection[Measure-separating base kernel under additive noise]{Measure-separating base kernel under additive noise}\label{appendix:measure-separating-additive-noise}

To prove that the base kernel in \Cref{example:br-additive-gaussian} is measure-separating, we use elementary properties of characteristic functions, which we now review; see \citet{lukacs1970characteristic} for reference.
Given a probability distribution $R$ over $\cU = \bbR^d$, we recall that the \defword{characteristic function} of $R$ is defined as $\phi_R(t) \defeq \bbE_{\bU \sim R}[e^{it^\top \bU}]$, and that the map $R \mapsto \phi_R$ is injective, i.e., characteristic functions are unique.
Further, given two distributions $R$ and $N$ over $\cU$, we have $\phi_{N * R}(t) = \phi_N(t) \cdot \phi_R(t)$.

\begin{claim}\label{claim:convolution-measure-separating}
    Let $N_\boldeps \in \cP(\cZ)$ such that $\phi_{N_\boldeps}$ is nowhere zero.
    Then the base kernel $B\colon z \mapsto N_\boldeps * \delta_z$ is measure-separating.
\end{claim}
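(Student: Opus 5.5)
The plan is to compute the pushforward $B_\sharp(R)$ for an arbitrary $R \in \cP(\cZ)$ and show that it equals the convolution $N_\boldeps * R$. Once this is established, injectivity of $B_\sharp$ reduces to cancelling the factor $\phi_{N_\boldeps}$ from the characteristic function $\phi_{N_\boldeps * R} = \phi_{N_\boldeps} \cdot \phi_R$.

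\textbf{Identify the pushforward with a convolution.} For a measurable $A \subseteq \cZ$, the definition of the pushforward gives
\[
(BR)(A) = \int_\cZ (N_\boldeps * \delta_z)(A)\, R(dz) = \int_\cZ N_\boldeps(A - z)\, R(dz).
\]
The right-hand side is exactly $(N_\boldeps * R)(A)$ by the definition of convolution of probability measures. Equivalently, $BR$ is the law of $\bU + \boldeps$ with $\bU \sim R$ and $\boldeps \sim N_\boldeps$ independent. The map $z \mapsto N_\boldeps(A - z)$ is measurable, being a section of the indicator of $\{(z,e) : z + e \in A\}$ integrated against $N_\boldeps$ (Fubini/Tonelli). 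This also confirms that $B$ is a genuine Markov kernel.

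\textbf{Take characteristic functions.} Suppose $R, R' \in \cP(\cZ)$ satisfy $BR = BR'$. The previous step turns this into $N_\boldeps * R = N_\boldeps * R'$. Using $\phi_{N * R} = \phi_N \cdot \phi_R$, we obtain $\phi_{N_\boldeps}(t)\,\phi_R(t) = \phi_{N_\boldeps}(t)\,\phi_{R'}(t)$ for every $t$. Since $\phi_{N_\boldeps}$ is nowhere zero, we may divide pointwise to get $\phi_R = \phi_{R'}$ everywhere. Uniqueness of characteristic functions then gives $R = R'$, so $B_\sharp$ is injective and $B$ is measure-separating.

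\textbf{Main obstacle.} The only step needing any care is the identification $BR = N_\boldeps * R$, in particular the measurability and Fubini justification. The characteristic-function step is immediate. For \Cref{example:br-additive-gaussian} one then notes that the characteristic function of $\cN(\bzero,\Sigma)$ is $e^{-t^\top \Sigma t/2}$, which is nowhere zero even when $\Sigma$ is singular.
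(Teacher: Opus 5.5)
Your proof is correct and follows the same route as the paper. You identify $BR$ with $N_\boldeps * R$, factor the characteristic function as $\phi_{N_\boldeps}\cdot\phi_R$, cancel the nowhere-zero factor, and invoke uniqueness of characteristic functions. The only difference is that you explicitly justify $BR = N_\boldeps * R$ and the measurability of $B$ via Tonelli, which the paper simply asserts.
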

\begin{proof}
    Consider $R, R' \in \cP(\cZ)$.
    Let $Q = BR = N_\boldeps * R$ and $Q' = BR' = N_\boldeps * R'$.
    Then we have $\phi_Q(t) = \phi_{N_\boldeps}(t) \cdot \phi_R(t)$ and $\phi_{Q'}(t) = \phi_{N_\boldeps}(t) \cdot \phi_{R'}(t)$.

    Suppose $Q = Q'$.
    Then $\phi_Q = \phi_{Q'}$, and since $\phi_{N_\eps}$ is nowhere zero, we must have $\phi_R = \phi_{R'}$.
    Since characteristic functions are unique, this entails $R = R'$, i.e., $B$ is measure-separating, as desired.
\end{proof}

The fact that the base kernel in \Cref{example:br-additive-gaussian} is measure-separating follows from \Cref{claim:convolution-measure-separating}, since the characteristic function of $N_\boldeps = \cN(0, \Sigma)$ is $\phi_{N_\boldeps}(t) = \exp(-t^\top \Sigma t/2)$, which is nowhere zero.

\subsubsection[Measure-separating base kernel under additive noise]{Measure-separating base kernel in Gaussian mixture}\label{appendix:measure-separating-mixture}

Recall that, in \Cref{example:br-gaussian-mixture}, we have the concept $\cC = [d]$, the index space $\cI = \bbR^p \times \PSD^p$, and the feature space $\cZ = \bbR^p$.
We defined the base kernel $B: (\mu, \Sigma) \mapsto \cN(\mu, \Sigma)$, and we wished to show that $B$ is measure-separating over measures in $\cR \defeq \{ E_\phi Q : \phi \in \MeasInj(\cC \to \cI), Q \in \cP(\cC) \}$.
In particular, $\cR$ contains all distributions of the form $R = \sum_{i=1}^r w_i \delta_{\mu_i,\Sigma_i}$, where $r \leq d$ and $(\mu_i, \Sigma_i) \neq (\mu_j, \Sigma_j)$ for $i \neq j$.

Consider $R, R' \in \cR$ with $R = \sum_{i=1}^r w_i \delta_{\mu_i,\Sigma_i}$ and $R' = \sum_{i=1}^{r'} w'_i \delta_{\mu'_i,\Sigma'_i}$.
Suppose $Q = B R$ and $Q' = B R'$ with $Q = Q'$.
Then, by linearity of the pushforward operator, $B_\sharp(R - R') = 0$, where $R - R' \in \cR - \cR$ is a \defword{signed measure} on $\cI$, and $\cR - \cR \defeq \{ R - R' : R, R' \in \cR \}$.
Thus, if $\operatorname{nullspace}(B_\sharp|_{\cR - \cR}) = \{ 0 \}$, we can conclude that $R - R' = 0$, i.e., $R = R'$.
This connection is essentially the same one described in the first theorem of \citet{yakowitz1968identifiability}, where $B$ is treated as an indexed family of distributions (rather than as a Markov kernel), and the result is stated in terms of linear independence (rather than in terms of the nullspace).
In particular, Proposition 2 of \citet{yakowitz1968identifiability} can be viewed as showing the more general result that $\operatorname{nullspace}(B_\sharp|_{\cS}) = \{ 0 \}$, where $\cS$ is the set of signed measures with finite (discrete) support (this result is more general since $\cR - \cR \subset \cS$).


\subsection[Proofs from Section 4.2]{Proofs from \Cref{sec:intersection}}\label{proofs:intersection}

In \Cref{sec:intersection}, we mentioned that any group $G$ of functions from $\cC$ to $\cC$ defines an equivalence relation $\sim_G$ over $\cM_\all(\cC, \cZ)$.
We now formally state and prove this result.
\begin{prop}
    Let $G$ be a group of functions on $\cC$, and define the binary relation $\Rel_G$ on $\cM_\all(\cC, \cZ)$ as follows: $\Rel_G(\M, \M')$ holds if and only $\M$ is Blackwell coarser than $\M'$ under some $\tau \in G$.

    Then, $\Rel_G$ is an equivalence relation.
\end{prop}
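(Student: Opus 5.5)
We check the three axioms of an equivalence relation directly, using the group structure of $G$ together with \Cref{prop:transitivity-blackwell} and \Cref{claim:composition-almost-everywhere}. Throughout, write $\M = (Q, K)$, $\M' = (Q', K')$ and $\M'' = (Q'', K'')$.

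\textbf{Reflexivity.} A group contains the identity, since $\tau \in G$ gives $\tau^{-1} \in G$ and hence $\tau^{-1} \circ \tau = \mathrm{id} \in G$. Now $\Dirac_{\mathrm{id}}$ is the identity kernel. So $Q = \Dirac_{\mathrm{id}} Q$ and $K = K \Dirac_{\mathrm{id}}$ hold trivially, and $\Rel_G(\M, \M)$ follows.

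\textbf{Transitivity.} Suppose $\M$ is Blackwell coarser than $\M'$ under $\Dirac_\tau$, and $\M'$ is Blackwell coarser than $\M''$ under $\Dirac_{\tau'}$, with $\tau, \tau' \in G$. By \Cref{prop:transitivity-blackwell}, $\M$ is Blackwell coarser than $\M''$ under $\Dirac_{\tau'} \Dirac_\tau$. Composition of Dirac kernels gives $\Dirac_{\tau'}\Dirac_\tau = \Dirac_{\tau' \circ \tau}$, and $\tau' \circ \tau \in G$ by closure.

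\textbf{Symmetry.} This is the main step. Suppose $Q' = \Dirac_\tau Q$ and $K \sim_Q K' \Dirac_\tau$ with $\tau \in G$. We claim $\M'$ is Blackwell coarser than $\M$ under $\Dirac_{\tau^{-1}}$, where $\tau^{-1} \in G$.
\begin{itemize}
    \item \emph{Measure refinement.} By functoriality, $\Dirac_{\tau^{-1}} Q' = \Dirac_{\tau^{-1}}\Dirac_\tau Q = \Dirac_{\tau^{-1}\circ\tau} Q = Q$.
    \item \emph{Kernel coarsening.} We need $K' \sim_{Q'} K \Dirac_{\tau^{-1}}$. Apply \Cref{claim:composition-almost-everywhere} with $R = Q'$, $J = \Dirac_{\tau^{-1}}$, $L_1 = K$ and $L_2 = K'\Dirac_\tau$. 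Its hypothesis requires $L_1 \sim_{JR} L_2$, and $JR = \Dirac_{\tau^{-1}} Q' = Q$, which holds by the measure refinement step just shown. So $K \sim_Q K'\Dirac_\tau$ yields $K\Dirac_{\tau^{-1}} \sim_{Q'} K'\Dirac_\tau \Dirac_{\tau^{-1}} = K' \Dirac_{\tau\circ\tau^{-1}} = K'$.
\end{itemize}

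The only delicate points are technical. First, the null set in $\sim_Q$ must transfer correctly to $Q'$. \Cref{claim:composition-almost-everywhere} handles this, but only because we established $\Dirac_{\tau^{-1}}Q' = Q$ first, so the order of the two bullets matters. Second, $\tau^{-1}$ must be measurable so that $\Dirac_{\tau^{-1}}$ is a kernel. This holds because $G$ consists of bijective measurable maps on a standard Borel space, and by Lusin–Souslin the inverse of such a map is Borel; it is also implicit in requiring $\tau^{-1} \in G$.
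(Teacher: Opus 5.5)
Your proof is correct and matches the paper's argument essentially step for step. Reflexivity uses the identity in $G$. Transitivity uses \Cref{prop:transitivity-blackwell} and closure of $G$. Symmetry uses $\tau^{-1}$, with \Cref{claim:composition-almost-everywhere} instantiated exactly as the paper does ($R = Q'$, $J = E_{\tau^{-1}}$, $L_1 = K$, $L_2 = K'E_\tau$, $JR = Q$). Your order $E_{\tau'}E_\tau = E_{\tau'\circ\tau}$ in the transitivity step is the correct one (the paper writes the product in reverse), and your remark on measurability of $\tau^{-1}$ is a harmless extra.
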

\begin{proof}
    We check the three axioms of an equivalence relation:
    
    \textbf{Reflexivity:} Since $G$ is a group, it contains the identity map $\text{id}_\cC\colon c \mapsto c$. 
    Clearly, $\M$ is a Blackwell coarsening of itself under the identity map, so $\Rel_G(\M, \M)$ holds.

    \textbf{Symmetry:} Let $\M = (Q, K)$ be Blackwell coarser than $\M' = (Q', K')$ under $\tau \in G$, i.e., $Q' = \Dirac_\tau Q$ and $K \sim_Q K'\Dirac_\tau$.
    Since $G$ is a group, we have $\tau^{-1} \in G$.

    We now show that $\M'$ is Blackwell coarser than $\M$ under $\tau^{-1}$.
    \begin{itemize}
        \item \emph{Measure refinement:} By substitution and functoriality, we have $E_{\tau^{-1}}(Q') = (E_{\tau^{-1}} E_\tau)(Q)$.
        Since $\tau$ is injective, this gives $E_{\tau^{-1}} Q' = Q$, as desired.
        \item \emph{Kernel coarsening:} 
        By \Cref{claim:composition-almost-everywhere}, $K \sim_Q K' \Dirac_\tau$ implies that $K \Dirac_{\tau^{-1}} \sim_{Q'} (K' \Dirac_\tau) \Dirac_{\tau^{-1}}$.\footnote{
        In particular, we use \Cref{claim:composition-almost-everywhere} with $R = Q'$, $J = E_{\tau^{-1}}$, $L_1 = K$, $L_2 = K' E_{\tau}$, and $R' = Q = E_{\tau^{-1}} Q'$.
        }
        By functoriality and injectivity of $\tau$, we obtain $K E_{\tau^{-1}} \sim_{Q'} K'$, as desired.
    \end{itemize}

    \textbf{Transitivity:} Let $\M$ be a Blackwell coarsening of $\M'$ under $E_\tau$ and let $\M'$ be a Blackwell coarsening of $\M''$ under $E_{\tau'}$.
    By transitivity of Blackwell coarsening (see \Cref{prop:transitivity-blackwell}), $\M$ is a Blackwell coarsening of $\M''$ under $E_{\tau''} = E_\tau E_{\tau'}$.
    Since $G$ is a group of functions, $\tau'' \in G$, hence $\Rel_G(\M, \M'')$ holds.
\end{proof}

At the end of \Cref{sec:intersection}, we mentioned that the valid transition set $\cT(\cK)$ takes a special form when $\cK$ contains only injective deterministic functions from some set $\cG$.
As in \citet{xi2023indeterminacy}, we need to assume that $\cG$ contains only \emph{image equivalent functions}, i.e., $g[\cC] = g'[\cC]$ for any $g, g' \in \cG$, so that $g^{-1} \circ g'$ is well-defined.
We now prove this claim.
\begin{prop}[Valid kernel transitions in the deterministic case]
    Let $\cG \subseteq \MeasInj(\cC \to \cZ)$ contain image equivalent functions, and assume $\cK = \{ \Dirac_g : g \in \cG \}$.
    Then $\cT(\cK) = \{ \tau : \tau \sim_\mu g^{-1} \circ g'\ \text{for some}\ g, g' \in \cG \}$.
\end{prop}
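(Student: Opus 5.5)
We prove the set equality by double inclusion, unfolding the definition $\cT(\cK) = \bigcup_{K \in \cK} \cT_K(\cK)$ and the definition of $\cT_K(\cK)$. The key reduction is that, for deterministic kernels, the kernel identity $K \sim_\mu K' \Dirac_\tau$ is the same as the pointwise functional identity $g = g' \circ \tau$ $\mu$-a.e. For any measurable $\tau$ we have $\Dirac_{g'} \Dirac_\tau = \Dirac_{g' \circ \tau}$, since $(\Dirac_{g'} \Dirac_\tau)(\cdot \mid c) = \int \dirac_{g'(c')} \, \dirac_{\tau(c)}(dc') = \dirac_{g'(\tau(c))}$. Moreover, the map $z \mapsto \dirac_z$ is injective, so the set $\{c : \Dirac_g(\cdot \mid c) \neq \Dirac_{g' \circ \tau}(\cdot \mid c)\}$ coincides with $\{c : g(c) \neq g'(\tau(c))\}$. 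This set is measurable by \Cref{claim:equalizer-set-measurable}. Hence $\Dirac_g \sim_\mu \Dirac_{g'} \Dirac_\tau$ if and only if $g = g' \circ \tau$ $\mu$-a.e.

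\textbf{Inclusion $\subseteq$.} Take $\tau \in \cT(\cK)$. Then there are $g, g' \in \cG$ with $g = g' \circ \tau$ $\mu$-a.e. Applying $(g')^{-1}$, which is defined on $g'[\cC]$, gives $\tau = (g')^{-1} \circ g$ $\mu$-a.e. This composition is well defined on all of $\cC$ because image equivalence gives $g[\cC] = g'[\cC]$. Relabeling the pair $(g', g)$ as $(g, g')$ puts $\tau$ in the right-hand set.

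\textbf{Inclusion $\supseteq$.} Suppose $\tau \in \MeasInj(\cC \to \cC)$ satisfies $\tau \sim_\mu g^{-1} \circ g'$ for some $g, g' \in \cG$. Then $g \circ \tau = g'$ $\mu$-a.e., that is, $\Dirac_{g'} \sim_\mu \Dirac_g \Dirac_\tau$. Taking $K = \Dirac_{g'}$ and the witness $\Dirac_g \in \cK$ shows $\tau \in \cT_{\Dirac_{g'}}(\cK) \subseteq \cT(\cK)$.

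\textbf{Main obstacle.} The delicate step is the measurability bookkeeping, not the algebra. First, $g^{-1} \circ g'$ must be a measurable map. An injective measurable map between standard Borel spaces has Borel image and a Borel-measurable inverse on that image (the Lusin--Souslin theorem, \citet{kechris2012classical}), and image equivalence ensures $g^{-1} \circ g'$ is defined everywhere on $\cC$. Second, the right-hand set must be read with $\tau$ ranging over $\MeasInj(\cC \to \cC)$. This matches the convention of \citet{xi2023indeterminacy}: any $\tau$ a.e.-equal to the injective map $g^{-1} \circ g'$ is admissible provided $\tau$ itself is injective. I would state this convention explicitly so that the two sides range over the same ambient class.
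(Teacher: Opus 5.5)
Your proof is correct and follows the paper's argument: a double inclusion that reduces $E_g \sim_\mu E_{g'}E_\tau$ to $g = g'\circ\tau$ $\mu$-a.e. One direction composes with $(g')^{-1}$; the other composes with $g$, taking target kernel $E_{g'}$ and witness $E_g$. Your extra bookkeeping fills in points the paper leaves implicit without changing the route: the measurability of $g^{-1}\circ g'$ via Lusin--Souslin, and reading the right-hand set with $\tau$ ranging over injective measurable maps.
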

\begin{proof}
    Let $\cT(\cG) = \{ \tau : \tau \sim_\mu g^{-1} \circ g'\ \text{for some}\ g, g' \in \cG \}$.
    We first prove the $\cT(\cK) \subseteq \cT(\cG)$, then that $\cT(\cG) \subseteq \cT(\cK)$.

    \textbf{First direction:} 
    Let $\tau \in \cT(\cK)$.
    In particular, there exist functions $g, g' \in \cG$ such that $K \sim_\mu K' E_\tau$ for the mixing kernels $K = \Dirac_g$ and $K' = \Dirac_{g'}$.
    It is straightforward that $K \sim_\mu K' E_\tau$ implies that $g \sim_\mu g' \circ \tau$.
    Composing both sides with $(g')^{-1}$, we obtain $(g')^{-1} \circ g \sim_\mu \tau$, i.e., $\tau \in \cT(\cG)$.

    \textbf{Second direction:} Let $\tau \in \cT(\cG)$.
    In particular, there exists functions $g, g' \in \cG$ such that $\tau \sim_\mu g^{-1} \circ g'$.
    Composing both sides with $g$, we obtain $g \circ \tau \sim_\mu g'$.
    Thus, $K' \sim_\mu K \Dirac_\tau$ for $K = E_g$ and $K' = E_{g'}$.
    Hence, $\tau \in \cT_{K'}(\cK)$, and since $\cT_{K'}(\cK) \subseteq \cT(\cK)$, we have $\tau \in \cT(\cK)$. 
\end{proof}

We note that, to drop the image equivalence assumption, one would need to define the set
\begin{equation*}
\cT(\cG) = \{ \tau : \tau \sim_\mu g^{-1} \circ g'\ \text{for some}\ g, g' \in \cG\ \text{such that}\ \supp(g_\sharp(\mu)) = \supp(g'_\sharp(\mu)) \}
\end{equation*}
and carefully handle domain issues, similar to the proof of \Cref{thm:feature-implies-blackwell}.

\section{DEFERRED PROOFS (APPLICATIONS)}

\subsection{Definition of the Canonical Stratified Measure}\label{appendix:canonical-stratified-measure}

For $J \subseteq [d]$, let $S_J \defeq \spann(\{ e_i : i \in J \})$.
For $|J| = k$, we let $\lambda_J$ denote the $k$-dimensional Lebesgue measure on $S_J$, i.e., $\lambda_J = (\iota_J)_\sharp \lambda^k$, where $\iota_J\colon \bbR^k \to S_J$ is the canonical isomorphism between $\bbR^k$ and $S_J$, and $\lambda^k$ is Lebesgue measure on $\bbR^k$.
Then, we define the \defword{canonical stratified measure} $\lambda^d_s$ referred to in \Cref{sec:dictionary-learning}  as
\begin{equation*}
    \lambda^d_s 
    \defeq 
    \sum_{\substack{J \subseteq [d] \\ |J| \leq s}} \lambda_J.
\end{equation*}

\subsection[Proofs from Section 5.1]{Proofs from \Cref{sec:dictionary-learning}}\label{proofs:dictionary-learning}

Before \Cref{claim:spark}, we cited the fact that any matrix with spark greater than $2s$ is injective on the set $\SparseVectors^d_s$.
This classical result can be found in many places, e.g. Theorem 2.4 of \citet{elad2010sparse}.
For completeness and consistency with our notation, we now formally state and prove this result.
\begin{claim}
    Let $\matG \in \bbR^{p \times d}$ with $\spark(\matG) > 2s$, and let $\tau\colon c \mapsto \matG \cdot c$.
    Then $\matG$ is injective on $\Sigma^d_s$.
\end{claim}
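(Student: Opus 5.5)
The plan is the standard difference argument. Take two $s$-sparse vectors $\bc, \bc' \in \Sigma^d_s$ with $\matG \cdot \bc = \matG \cdot \bc'$; I will show $\bc = \bc'$. By linearity, $\matG \cdot (\bc - \bc') = \bzero$. So the whole proof reduces to showing that $\bc - \bc'$ is a null vector of $\matG$ with small support, and that such vectors must vanish.

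\textbf{Step 1 (support bound).} The support of $\bc - \bc'$ lies inside $\supp(\bc) \cup \supp(\bc')$. Each of these supports has at most $s$ elements, so $\| \bc - \bc' \|_0 \leq 2s$.

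\textbf{Step 2 (contradiction via the spark).} Suppose $\bc - \bc' \neq \bzero$, and let $J$ be its support, so that $1 \leq |J| \leq 2s$. The identity $\matG \cdot (\bc - \bc') = \bzero$ reads
\begin{equation*}
    \sum_{j \in J} (c_j - c'_j) \, \matG_{:,j} = \bzero,
\end{equation*}
with every coefficient nonzero. This is a nontrivial linear dependence among the $|J|$ columns $\{\matG_{:,j}\}_{j \in J}$.

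By the definition of $\spark(\matG)$ as the smallest number of linearly dependent columns, this gives $\spark(\matG) \leq |J| \leq 2s$. That contradicts the assumption $\spark(\matG) > 2s$. Hence $\bc = \bc'$, and $\matG$, equivalently the map $\tau$, is injective on $\Sigma^d_s$.

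\textbf{Main obstacle.} There is essentially none. The only point needing care is the edge case where $J$ is empty, which is exactly the conclusion $\bc = \bc'$. A remark on continuity and measurability is also worth adding: the restricted map $\tau$ is linear, hence continuous and measurable. This is what the paper needs in order to place $\cK_\spark$ inside the CBR class $\{ E_g : g \in \ContMeasInj(\Sigma^d_s \to \bbR^p) \}$.
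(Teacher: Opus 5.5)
Your proof is correct and follows the paper's argument essentially verbatim: since both vectors are $s$-sparse, $\bc - \bc'$ is a null vector of $\matG$ with at most $2s$ nonzero entries, and $\spark(\matG) > 2s$ forces it to vanish. Restricting to the exact support $J$, so that the linear dependence is visibly nontrivial, makes your version slightly more careful than the paper's wording. Your remark that the linear map is continuous, which is what places $\cK_\spark$ in the CBR class, is also correct.
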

\begin{proof}
    Suppose $c, c' \in \Sigma^d_s$ and $\matG \cdot c = \matG \cdot c'$.
    Let $\Delta = c - c'$, so that $\matG \cdot \Delta = 0$.
    Since $\| c \|_0 \leq s$ and $\| c' \|_0 \leq s$, we have $\| \Delta \|_0 \leq 2s$.
    Thus, $\matG \cdot \Delta$ is a linear combination of at most $2s$ columns of $\matG$.
    Since $\spark(\matG) > 2s$, these columns are linearly independent, so $\matG \cdot \Delta = 0$ implies that $\Delta = 0$.
    This shows that $c = c'$, as desired.
\end{proof}

Our main claim in \Cref{sec:dictionary-learning} characterizes the valid transition set $\cT(\cK_\spark)$ for the set of mixing functions $\cK_\spark = \cK(\SparseVectors^d_s \to \bbR^p ; \LinearMix, \Spark_\sigma)$.
This characterization is the main difficulty in proving identifiability of $\cM_\spark$, and a version of this characterization appears in the proof of Theorem 3 from \citet{aharon2006uniqueness}.

Here, we hope to provide more intuition for the combinatorial nature of this proof.
For $s \leq d$, we define $\cJ(d, s)$ as the collection of all size-$s$ subsets of $[d]$, i.e.,
\begin{equation*}
    \cJ(d, s) \defeq \{ J \subseteq [d] : |J| \leq s \}.
\end{equation*}
For $i \in [d]$, we define $\cJ_i(d, s)$ as the collection of all size-$s$ subsets of $[d]$ which contain $i$, i.e.,
\begin{equation*}
    \cJ_i \defeq \{ J \in \cJ : i \in J \}.
\end{equation*}



\begin{claim}\label{claim:sparsity-preserving-implies-permutation}
    Let $\matT \in \bbR^{d \times d}$ be an invertible matrix.
    For $s \leq d - 1$, assume $\matT(\SparseVectors^d_s) \subseteq \SparseVectors^d_s$.
    Then $\matT \in \Relabelings(d) \circ \Scale(d)$.
\end{claim}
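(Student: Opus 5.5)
The plan is to show that $\matT$ maps each coordinate axis onto a coordinate axis. That means every column of $\matT$ has exactly one nonzero entry. Since $\matT$ is invertible, these entries must sit in distinct rows, so $\matT = \matS \matD$ for a permutation matrix $\matS$ and a diagonal $\matD$ with nonzero diagonal, i.e. $\matT \in \Relabelings(d) \circ \Scale(d)$. We work with the coordinate subspaces $S_J = \spann(\{ e_i : i \in J\})$ from \Cref{appendix:canonical-stratified-measure}. Note that $\SparseVectors^d_s = \bigcup_{|J| = s} S_J$ is a finite union of $s$-dimensional subspaces.

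\textbf{Step 1: subspaces go to subspaces.} For each $J$ with $|J| = s$, $\matT(S_J)$ is an $s$-dimensional linear subspace, by invertibility. By hypothesis it lies in the finite union $\bigcup_{|J'| = s} S_{J'}$. A vector space over $\bbR$ is not a finite union of proper subspaces. Hence $\matT(S_J) \subseteq S_{J'}$ for some $J'$, and equality holds by dimension. So $\matT$ induces a map $\pi$ on $\cJ(d,s)$ restricted to size-$s$ sets, with $\matT(S_J) = S_{\pi(J)}$. This map is injective because $\matT$ is injective, hence it is a bijection of the finite family of $s$-subsets.

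\textbf{Step 2: intersect to isolate axes.} This is the combinatorial core and the step I expect to be the main obstacle. For $i \in [d]$, the line $S_{\{i\}}$ is the intersection $\bigcap_{J \in \cJ_i} S_J$ over all $s$-subsets containing $i$. This uses $s \le d-1$: for any $j \neq i$ we can pick an $s$-set containing $i$ but not $j$, so the intersection contains no $e_j$ direction. Since $\matT$ is a bijection, it commutes with intersections, giving
\[
\matT(S_{\{i\}}) = \bigcap_{J \in \cJ_i} S_{\pi(J)} = S_{\bigcap_{J \in \cJ_i} \pi(J)} .
\]
The left side is one-dimensional, so $I_i \defeq \bigcap_{J \in \cJ_i} \pi(J)$ has exactly one element, say $\sigma(i)$. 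Therefore $\matT e_i \in S_{\{\sigma(i)\}}$, and $\matT e_i$ is nonzero by invertibility. Injectivity of $\sigma$ follows from injectivity of $\matT$ on lines. The only subtlety is justifying $\bigcap_J S_{J_k} = S_{\bigcap_k J_k}$ for coordinate subspaces, which is immediate from supports, and handling $s = 0$ trivially (then $\SparseVectors^d_0 = \{0\}$ and the hypothesis is vacuous, so we assume $s \ge 1$).

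\textbf{Step 3: conclude.} Set $\matS$ to be the permutation matrix of $\sigma$ and $\matD = \operatorname{diag}(d_1,\dots,d_d)$, where $\matT e_i = d_i e_{\sigma(i)}$ with $d_i \neq 0$. Then $\matT = \matS \matD$, as required.
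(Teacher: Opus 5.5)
Your proof is correct and follows essentially the same route as the paper's: show that each $s$-dimensional coordinate subspace maps onto another one, then intersect over the $s$-sets containing $i$ (using $s \le d-1$) to see that each axis maps to an axis. Your finite-union-of-subspaces argument also justifies the first step, which the paper only asserts.

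One small correction: the case $s=0$ is not ``handled trivially.'' There every invertible $T$ satisfies the hypothesis, so for $d \geq 2$ the claim is false, and $s \geq 1$ is a genuine implicit assumption that the paper's proof also needs.
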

\begin{proof}
    For $i \in [d]$, let $e_i$ denote the $i$\textsuperscript{th} canonical basis vector.
    For $J \subseteq [d]$, let $S_J$ denote the coordinate-aligned subspace spanned by the corresponding basis vectors, i.e., $S_J \defeq \spann(\{ e_i : i \in J \})$, and let $W_J \defeq T[S_J]$.
    We note that, for $s \leq d-1$, any $s$-dimensional subspace $S$ of $\Sigma^d_s$ is coordinate-aligned, i.e., if $\dim(S) = s$, then $S = S_J$ for some $J \in \cJ(d, s)$.

    \textbf{Induced mapping on size-$s$ coordinate-aligned subspaces:}
    Let $J \in \cJ(d, s)$.
    Since $\matT$ is invertible, we have $\dim(W_J) = s$, and since $\matT(\Sigma^d_s) \subseteq \Sigma^d_s$, we have $W_J = S_{J'}$ for some $J' \in \cJ$.
    Let $\phi\colon \cJ(d,s) \to \cJ(d,s)$ be the map which sends $J \in \cJ(d,s)$ to $J' \in \cJ(d,s)$ such that $W_J = S_{J'}$.

    \textbf{Induced mapping on axes:} 
    Let $i \in [d]$.
    Since $\matT$ is invertible, we have $\dim(W_{\{i\}}) = 1$.
    Note that $S_{\{i\}} = \bigcap_{J \in \cJ_i(d,s)} S_J$, so by definition of the image and $\phi$, we have $W_{\{i\}} = \bigcap_{J \in \cJ_i(d,s)} S_{\phi(J)}$, i.e., $W_{\{i\}}$ must be a coordinate-aligned subspace.
    Thus, since $\dim(W_{i}) = 1$, we must have $W_{\{i\}} = S_j$ for some $j \in [d]$.

    \textbf{Conclusion:}
    The induced mapping on axes shows that, for each $i \in [d]$, there is one nonzero entry in the $i$\textsuperscript{th} column of $\matT$.
    Since $\matT$ is invertible, it must be a generalized permutation matrix, i.e., $\matT \in \Relabelings(d) \circ \Scale(d)$.
\end{proof}

Using this result, the characterization of $\cT(\cK_\spark)$ becomes straightforward:
\begin{claim*}[Restatement of \Cref{claim:spark}]
    Let $K, K' \in \cK_\spark$ defined in \Cref{eqn:m-spark}, and assume $\sigma > 2s$.
    If $K \sim_{\lambda^d_s} K' E_\tau $ for some $\tau \in \MeasInj(\SparseVectors^d_s \to \SparseVectors^d_s)$, then  $\tau \in \Relabelings(d) \circ \Scale(d)$.

    More concisely, $\cT(\cK_\spark) = \Relabelings(d) \circ \Scale(d)$.
\end{claim*}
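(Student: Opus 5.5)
Since $K=\Dirac_g$ and $K'=\Dirac_{g'}$ with $g(\bc)=\matG\bc$ and $g'(\bc)=\matG'\bc$, I will reduce the claim to \Cref{claim:sparsity-preserving-implies-permutation}. The plan has three steps. First, show that $\tau$ agrees $\lambda^d_s$-a.e.\ with an invertible linear map $\matT$. Second, show that $\matT$ maps $\SparseVectors^d_s$ into itself. Third, invoke the earlier claim.

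\textbf{Step 1 (linearity).} The hypothesis $K\sim_{\lambda^d_s}K'\Dirac_\tau$ says $\matG\bc=\matG'\tau(\bc)$ for $\lambda^d_s$-a.e.\ $\bc$. Fix a support $J$ with $|J|=s$. Since $\spark(\matG')>2s$, $\matG'$ is injective on $\SparseVectors^d_s$, so $\tau(\bc)$ is the unique $s$-sparse preimage of $\matG\bc$, for $\lambda_J$-a.e.\ $\bc\in S_J$. The set $\matG'[\SparseVectors^d_s]$ is a finite union of the $s$-dimensional subspaces $\matG'[S_{J'}]$. The image $\matG[S_J]$ is an $s$-dimensional subspace, because $\spark(\matG)>2s$. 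A $\lambda_J$-full-measure subset of $S_J$ has Zariski-dense image in $\matG[S_J]$, and that image lies in this finite union. So $\matG[S_J]\subseteq\matG'[S_{J'}]$ for a single $J'$, and by dimension count the two are equal.

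On $S_J$ we then get $\tau=(\matG'|_{S_{J'}})^{-1}\circ\matG|_{S_J}$ a.e., which is a linear isomorphism $S_J\to S_{J'}$. Call it $\matT_J$.

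\textbf{Step 2 (gluing).} I need the $\matT_J$ to agree on overlaps. For $J_1,J_2$ sharing an index $i$, both $\matT_{J_1}e_i$ and $\matT_{J_2}e_i$ are $s$-sparse vectors $v$ with $\matG'v=\matG e_i$. Injectivity of $\matG'$ on $\SparseVectors^d_s$ then forces them to be equal. (This needs $s\ge1$; the case $s=0$ is trivial.) This lets me define $\matT e_i$ consistently and obtain a single linear $\matT$ with $\matG'\matT=\matG$ on every $S_J$, hence $\matG'\matT=\matG$ as matrices.

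$\matT$ is invertible because the swapped argument gives a linear $\matT'$ with $\matG\matT'=\matG'$. Then $\matG'\matT\matT'e_i=\matG'e_i$, and since both $\matT\matT'e_i$ (by construction, being $s$-sparse) and $e_i$ are $s$-sparse, injectivity of $\matG'$ on $\SparseVectors^d_s$ gives $\matT\matT'e_i=e_i$.

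By construction $\matT[S_J]=S_{J'}\subseteq\SparseVectors^d_s$. Also, $\tau=\matT$ $\lambda_J$-a.e.\ for every $|J|\le s$. For lower strata I apply the same argument with $|J|<s$, or note that their contributions are handled identically. So $\tau\sim_{\lambda^d_s}\matT$.

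\textbf{Step 3.} If $s\le d-1$, \Cref{claim:sparsity-preserving-implies-permutation} gives $\matT\in\Relabelings(d)\circ\Scale(d)$. If $s=d$, then $\sigma>2d$ is impossible because spark is at most $d+1$, so the class is empty or the claim is vacuous. As usual, $\tau$ is identified with $\matT$ up to $\mu$-a.e.\ equality.

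For the reverse inclusion, given $\matT=\matS\matD$, pick any $\matG'$ with $\spark(\matG')\geq\sigma$ and set $\matG=\matG'\matT$. Column scaling and permutation preserve spark, so $\matG$ is admissible and $\tau\in\cT(\cK_\spark)$.

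The main obstacle is Step 1: passing from an a.e.\ pointwise identity to the statement that a whole stratum maps into a single subspace $S_{J'}$. I will handle it via the finite-union-of-subspaces argument, using that a proper subspace of $\matG[S_J]$ has measure zero there.
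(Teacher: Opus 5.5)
\textbf{Overall route.} Your route is the paper's. You show that $\tau$ agrees $\lambda^d_s$-a.e.\ with an invertible linear $\matT$ satisfying $\matT[\SparseVectors^d_s]\subseteq\SparseVectors^d_s$, and then invoke \Cref{claim:sparsity-preserving-implies-permutation}. The paper simply asserts that $\tau$ ``must be linear'' and invertible because $\matG,\matG'$ are linear and injective on $\SparseVectors^d_s$. Your stratum-wise argument is what actually justifies that sentence, and it is correct: a full-measure subset of $S_J$ cannot have its image inside a finite union of proper subspaces of $\matG[S_J]$, and the $\matT_J$ glue via injectivity of $\matG'$ on $\SparseVectors^d_s$. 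The lower strata are easier than you suggest. Once $\matG'\matT=\matG$ and $\matT[\SparseVectors^d_s]\subseteq\SparseVectors^d_s$, injectivity of $\matG'$ on $\SparseVectors^d_s$ gives $\tau(\bc)=\matT\bc$ at every $\bc$ where $\matG\bc=\matG'\tau(\bc)$.

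\textbf{The gap: invertibility.} The step that does not go through as written is invertibility via ``the swapped argument''. The hypothesis is one-directional. You know $\matG\bc=\matG'\tau(\bc)$ a.e., but you have nothing of the form $\matG'\bc=\matG\tau'(\bc)$ a.e. So Step 1 cannot be rerun with the roles of $\matG$ and $\matG'$ exchanged: that would require $\matG'[S_{J'}]\subseteq\matG[\SparseVectors^d_s]$ for every $J'$, which is exactly what is not yet known.

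\textbf{The repair.} The fix is short. The map $\phi\colon J\mapsto J'$ from Step 1 is injective. Indeed, $\phi(J_1)=\phi(J_2)$ forces $\matG[S_{J_1}]=\matG[S_{J_2}]$. For $i\in J_1\setminus J_2$, this gives a nonzero vector in $\ker\matG$ supported on $J_2\cup\{i\}$, i.e.\ on at most $s+1\le 2s$ coordinates, contradicting $\spark(\matG)>2s$. Hence $\phi$ is a bijection on the $s$-subsets of $[d]$. The image of $\matT$ therefore contains every $S_{J'}$ and hence every $e_i$, so $\matT$ is invertible. This also retroactively justifies your $\matT'$, if you prefer that route.
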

\begin{proof}
    First, we unpack definitions.
    Since $K, K' \in \cK(\Sigma^d_s \to \bbR^p ; \LinearMix, \Spark_\sigma)$, there exist matrices $\matG, \matG' \in \bbR^{p \times d}$, with $\spark(\matG) \geq \sigma > 2s$ and $\spark(\matG') \geq \sigma > 2s$, such that $K\colon c \mapsto \dirac_{\matG \cdot c}$ and $K'\colon c \mapsto \dirac_{\matG' \cdot c}$.

    Since $\matG$ and $\matG'$ are both injective on $\SparseVectors^d_s$ and linear, and $\tau$ is injective, $\tau$ must also be linear.
    Thus, we represent $\tau$ by an invertible matrix $\matT \in \bbR^{d \times d}$ such that $\tau\colon c \mapsto \matT \cdot c$.
    Moreover, since $\tau \in \MeasInj(\SparseVectors_s^d \to \SparseVectors_s^d)$, $\matT$ must be sparsity-preserving, i.e., $\| \matT \cdot c \|_0 \leq s$ for any $c$ such that $\| c \|_0 \leq s$.

    Applying \Cref{claim:sparsity-preserving-implies-permutation} on $\matT$, we obtain the result.
\end{proof}

\subsection[Proofs from Section 5.2]{Proofs from \Cref{sec:ica}}\label{proofs:ica}

In \Cref{sec:ica}, we recounted the classic result that $\cT(\cK) \cap \cT(\cQ_\indQ^d) = O(d)$ when $\cK = \cK(\bbR^d \to \bbR^p ; \LinearMix)$, where $O(d)$ denotes the orthogonal group on $\bbR^d$.
For completeness, we give a brief proof.
\begin{claim}
    For $d \leq p$, let $\cQ^d_\indQ = \cP(\bbR^d ; \Ind, \UnitVariance)$ and let $\cK = \cK(\bbR^d \to \bbR^p; \LinearInj)$.
    Then $\cT(\cK) \cap \cT(\cQ^d_\indQ) = O(d)$.
\end{claim}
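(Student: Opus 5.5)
We prove the two inclusions separately. Throughout, $\mu = \lambda$ is Lebesgue measure on $\bbR^d$, and every $Q \in \cQ^d_\indQ$ is assumed to have finite second moments so that $\UnitVariance$ makes sense.

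\textbf{Step 1: $\cT(\cK) = \GL(d)$ (up to $\lambda$-a.e.\ equality).}
Suppose $\tau \in \cT(\cK)$. Then there are injective linear maps $\matG, \matG' \in \bbR^{p\times d}$ with $\matG c = \matG' \tau(c)$ for $\lambda$-a.e.\ $c$.

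Let $\matH'$ be a left inverse of $\matG'$. Applying it gives $\tau(c) = \matH'\matG c$ a.e. Since $\tau$ is injective, the matrix $\matT \defeq \matH'\matG$ must be invertible on a set of full measure, hence invertible.

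Conversely, any $\matT \in \GL(d)$ is realized by taking $\matG' = \matG\matT^{-1}$, which is again injective.

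\textbf{Step 2: $O(d) \subseteq \cT(\cQ^d_\indQ)$.}
Take $Q = \cN(\bzero, \matI_d)$. This $Q$ is a product distribution with identity covariance, so $Q \in \cQ^d_\indQ$.

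For $\matT \in O(d)$, the pushforward is $E_\matT Q = \cN(\bzero, \matT\matT^\top) = Q \in \cQ^d_\indQ$. Hence $\matT \in \cT_Q(\cQ^d_\indQ) \subseteq \cT(\cQ^d_\indQ)$. Combined with Step 1, this gives $O(d) \subseteq \cT(\cK)\cap\cT(\cQ^d_\indQ)$.

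\textbf{Step 3: the intersection is contained in $O(d)$.}
Let $\tau \in \cT(\cK)\cap\cT(\cQ^d_\indQ)$. By Step 1, $\tau$ agrees a.e.\ with some $\matT \in \GL(d)$.

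By membership in $\cT(\cQ^d_\indQ)$, there is $Q \in \cQ^d_\indQ$ with $E_\tau Q \in \cQ^d_\indQ$. If $Q \ll \lambda$, then $E_\tau Q = E_\matT Q$. The covariance then transforms as $\Cov(E_\matT Q) = \matT\,\Cov(Q)\,\matT^\top = \matT\matT^\top$. The $\UnitVariance$ predicate forces this to equal $\matI_d$, so $\matT \in O(d)$.

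\textbf{Main obstacle: the a.e.\ subtlety.}
The definition of $\cT(\cQ)$ does not require $Q \ll \lambda$, yet $\tau$ is only pinned down $\lambda$-a.e., and $Q$ could charge the exceptional null set.

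I would handle this by reading $\cT(\cK)$ modulo $\lambda$-a.e.\ equality, consistent with the paper's convention for $\cT(\cK)$ in the deterministic case. Under that reading, $\tau$ is identified with the linear map $\matT$ itself, and Step 3 applies verbatim to $E_\matT Q$ with no absolute-continuity assumption on $Q$.

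If instead we insist on pointwise $\tau$, the claimed equality should be understood as holding up to this identification, and I would state that caveat explicitly.
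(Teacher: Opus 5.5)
Your proposal is correct and follows the paper's proof. As there, $\Cov(E_\tau Q) = \matT\,\Cov(Q)\,\matT^\top$ together with $\UnitVariance$ forces $\matT\matT^\top = \matI_d$, and rotation invariance of $\cN(\bzero,\matI_d)$ gives the reverse inclusion.

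The paper merely asserts $\cT(\cK) = \GL(d)$ and writes $\tau\colon \bc \mapsto \matT\cdot\bc$ outright. Your Step 1 and your a.e.\ caveat therefore make explicit an identification the paper uses silently. The caveat is genuine. Take $Q$ to be a product of Rademacher laws, which is a unit-variance product law supported on the Lebesgue-null set $\{\pm 1\}^d$. An injective $\tau$ that agrees with $\bc \mapsto 2\bc$ off a countable set but fixes $\{\pm 1\}^d$ then lies in the literal intersection without being orthogonal.
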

\begin{proof}
    First, we show that $\cT(\cK) \cap \cT(\cQ_\indQ) \subseteq O(d)$.
    Recall that $\cT(\cK) = \GL(d)$.

    Let $\tau\colon \bc \to \matT \cdot \bc$ for some invertible matrix $\matT \in \bbR^{d \times d}$, so that $\tau \in \cT(\cK)$.
    Further, assume $Q, Q' \in \cQ_\indQ$ with $Q' = \tau_\sharp(Q)$, so that $\tau \in \cT(\cQ_\indQ)$.

    By definition of the $\UnitVariance$ concept predicate, we have $\Cov(Q) = \Cov(Q') = \matI_d$.
    By linearity of expectation, we have $\Cov(Q') = \matT \cdot \Cov(Q) \cdot \matT^\top$.
    Thus, we obtain $\matT \cdot \matT^\top = \matI_d$, i.e., $\matT \in O(d)$.

    Now, we show that $O(d) \subseteq \cT(\cK) \cap \cT(\cQ_\indQ)$.
    Let $\tau \in O(d) \subseteq \cT(\cK)$.
    Then, for $Q = Q' = \cN(\bzero, \matI_d)$, we have $Q' = \tau_\sharp(Q)$, i.e., $\tau \in \cT(\cQ_\indQ)$, completing the proof.
\end{proof}

Our main claim in \Cref{sec:ica} is a minor adaptation of Theorem 11 in \citet{comon1994independent}, which shows that any orthogonal transformation of a product distribution with non-Gaussian components will no longer be a product distribution, unless the transformation is a generalized permutation.\footnote{As will be evident from the proof, one Gaussian component is allowed, and deterministic components need to be ruled out; these are built into the definition of the $\NonGaussian$ mixing predicate.}
The proof is a corollary of Darmois's theorem, which we restate here for convenience:
\begin{thm*}[Darmois's theorem]
    Let $(C_j)_{j=1}^d$ be a collection of independent random variables.
    Let $C'_1 = \sum_{j=1}^d a_j \cdot C_j$, and let $C'_2 = \sum_{j=1}^d b_j \cdot C_j$.
    Finally, let $\cI = \{ j \in [d] : a_j \cdot b_j \neq 0 \}$.

    If $C'_1$ and $C'_2$ are independent, then all $(C_j)_{j \in \cI}$ have Gaussian distributions (possibly with zero variance).
\end{thm*}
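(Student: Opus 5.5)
The plan is to follow the classical characteristic-function proof of the Darmois--Skitovich theorem, as presented in \citet{lukacs1970characteristic}. The proof splits into a local functional equation, a finite-difference argument, and a local-to-global step that uses Marcinkiewicz's theorem.

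\textbf{Step 1 (local functional equation).} Let $\varphi_j$ be the characteristic function of $C_j$. By independence of the $C_j$, the joint characteristic function of $(C'_1, C'_2)$ is $\prod_j \varphi_j(a_j s + b_j t)$. Independence of $C'_1$ and $C'_2$ says this equals $\prod_j \varphi_j(a_j s)\prod_j \varphi_j(b_j t)$. Since each $\varphi_j$ is continuous with $\varphi_j(0)=1$, there is a neighborhood $U$ of the origin on which every factor is nonzero. On $U$ we take continuous logarithms $\psi_j = \log \varphi_j$, normalized by $\psi_j(0)=0$. Taking logarithms gives
\begin{equation*}
    \sum_{j=1}^d \psi_j(a_j s + b_j t) = \sum_{j=1}^d \psi_j(a_j s) + \sum_{j=1}^d \psi_j(b_j t), \qquad (s,t) \in U.
\end{equation*}
For $j \notin \cI$ one of $a_j, b_j$ is zero, so the $j$-th term on the left cancels against the matching term on the right. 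What remains is an equation involving only the indices $j \in \cI$, where both coefficients are nonzero.

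\textbf{Step 2 (polynomiality via finite differences).} The plan is Linnik's difference trick. The right-hand side is a function of $s$ alone plus a function of $t$ alone, so one mixed difference in $(s,t)$ annihilates it. Next, group together indices $j \in \cI$ whose direction vectors $(a_j,b_j)$ are proportional. For each group other than a fixed target group $j_0$, apply a difference operator in a direction orthogonal to $(a_j,b_j)$; this annihilates every term of that group. What survives is an iterated difference of the single function $\psi^{(j_0)}(u) = \sum_{j \in \text{group}} \psi_j(\lambda_j u)$. The result is that some $(m+1)$-th difference, with $m \le |\cI|$ and small step sizes, of the continuous function $\psi^{(j_0)}$ vanishes identically near $0$. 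By the classical Fréchet-type lemma, a continuous function whose $(m+1)$-th differences vanish locally is a polynomial. Hence $\psi^{(j_0)}$, and therefore $\varphi^{(j_0)} = \prod_{j \in \text{group}} \varphi_j(\lambda_j\,\cdot)$, equals $\exp(\text{polynomial})$ on a neighborhood of $0$.

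\textbf{Step 3 (local to global).} I expect this step to be the main obstacle, because Step 2 only gives the identity near the origin. The plan is to use the theory of analytic characteristic functions (\citet[Ch.~7]{lukacs1970characteristic}). A characteristic function that agrees with an entire function on a neighborhood of $0$ has all moments, and it is an entire characteristic function equal to that entire function everywhere. So $\varphi^{(j_0)} = \exp(P)$ globally. Marcinkiewicz's theorem then forces $\deg P \le 2$, so $\varphi^{(j_0)}$ is Gaussian, possibly degenerate.

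\textbf{Step 4 (splitting the group).} $\varphi^{(j_0)}$ is the characteristic function of a sum of independent scaled $C_j$. By Cramér's decomposition theorem, each summand of an independent sum that is Gaussian is itself Gaussian, possibly with zero variance. Hence every $C_j$ with $j \in \cI$ is Gaussian. Degenerate variables are allowed throughout, consistent with the ``possibly with zero variance'' clause in the statement.
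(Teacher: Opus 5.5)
The paper gives no proof of this statement. Darmois's theorem is restated as a classical black box, and the only argument the paper supplies is the short deduction of the subsequent corollary, and from that the non-Gaussianity claim. Your outline is a correct rendering of the standard characteristic-function proof of the Darmois--Skitovich theorem, so the comparison is between citing the result and proving it.

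What your route buys is transparency about where each hypothesis enters. The condition $a_j b_j \neq 0$ is exactly what makes the pure-$s$ and pure-$t$ differences, which kill the right-hand side, act with nonzero step on every surviving term. Grouping proportional directions and then appealing to Cram\'er's theorem is genuinely necessary, because Darmois--Skitovich itself implies Cram\'er's theorem. Indeed, if $X+Y$ is Gaussian with variance $\sigma^2$ and $Z \sim \cN(0,\sigma^2)$ is independent of $(X,Y)$, then $X+Y+Z$ and $X+Y-Z$ are independent. The cost is that your argument rests on three nontrivial analytic facts: the local Fr\'echet polynomial lemma, analytic continuation of characteristic functions together with Marcinkiewicz's theorem, and Cram\'er's theorem. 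That is presumably why the paper defers to the literature.

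When you write it out, make three small points explicit:
\begin{enumerate}
\item In Step 1, the two sides are continuous logarithms of the same nonvanishing function on a connected neighborhood, and both vanish at the origin, so they are equal.
\item In Step 2, each surviving step size is a fixed nonzero multiple of a free parameter. You may therefore set them all equal to an arbitrary small $\eta$ and obtain $\Delta_\eta^{m+1}\psi^{(j_0)} = 0$ near $0$, which is the hypothesis the polynomial lemma needs.
\item In Step 3, the precise fact is that a characteristic function agreeing near $0$ with a function analytic in $|z|<R$ extends analytically to the strip $|\mathrm{Im}\,z|<R$. With $R=\infty$ it is entire, and by the identity theorem it coincides with $\exp(P)$ everywhere.
\end{enumerate}
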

As a corollary of Darmois's theorem, we obtain the following:
\begin{corollary}\label{cor:darmois}
    Let $Q, Q' \in \cP(\bbR^d ; \Ind)$.
    Suppose $Q' = \tau_\sharp(Q)$ for $\tau\colon c \mapsto \matT \cdot c$, where $\matT$ has at least two nonzero entries in the $k$\textsuperscript{th} column, i.e., $\matT_{uk} \neq 0$ and $\matT_{vk} \neq 0$ for $u \neq v$.
    Then the $k$\textsuperscript{th} component of $Q$ has a Gaussian distribution.
\end{corollary}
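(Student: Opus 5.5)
We derive \Cref{cor:darmois} by applying Darmois's theorem to two rows of $\matT$. Let $\bC \sim Q$, so that the coordinates $C_1, \ldots, C_d$ are independent because $\Ind(Q)$ holds. Set $\bC' = \matT \cdot \bC$. Then $\bC' \sim \tau_\sharp(Q) = Q'$, and since $\Ind(Q')$ holds, the coordinates $C'_1, \ldots, C'_d$ are independent as well. Here we only use the row-$u$ and row-$v$ coordinates:
\begin{equation*}
    C'_u = \textstyle\sum_{j=1}^d \matT_{uj} \cdot C_j, \qquad C'_v = \textstyle\sum_{j=1}^d \matT_{vj} \cdot C_j.
\end{equation*}
Because $u \neq v$, $C'_u$ and $C'_v$ are distinct coordinates of a product distribution, so they are independent.

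Next we apply Darmois's theorem to $C'_u$ and $C'_v$, taking $a_j = \matT_{uj}$ and $b_j = \matT_{vj}$. The index set is $\cI = \{ j : \matT_{uj}\matT_{vj} \neq 0 \}$. By hypothesis, $\matT_{uk} \neq 0$ and $\matT_{vk} \neq 0$, so $k \in \cI$. Darmois's theorem then says every $C_j$ with $j \in \cI$ is Gaussian, possibly with zero variance. In particular $C_k$, which is the $k$\textsuperscript{th} marginal of $Q$, is Gaussian.

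There is no real obstacle in this argument. One subtlety is that independence of all coordinates of $Q'$ gives the needed pairwise independence of $C'_u$ and $C'_v$. The other is that the conclusion includes degenerate Gaussians, which matches the statement's phrase ``has a Gaussian distribution''. Invertibility of $\matT$ is not needed for the corollary. It becomes relevant only later, when the corollary is combined with $\NonGaussian$ and the orthogonality of $\matT$ to show that each column of $\matT$ has a single nonzero entry.
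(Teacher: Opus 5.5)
Your proposal is correct and follows essentially the same route as the paper: take $\bC \sim Q$, set $\bC' = \matT \cdot \bC \sim Q'$, use $\Ind(Q')$ to get independence of $C'_u$ and $C'_v$, and apply Darmois's theorem with $a_j = \matT_{uj}$, $b_j = \matT_{vj}$, noting that $k \in \cI$. Your side remarks, that degenerate Gaussians are allowed and that invertibility of $\matT$ is not needed here, are also accurate.
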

\begin{proof}
    Define random variables $C \sim Q$ and let $C' = \matT \cdot C$, so that $C' \sim Q'$.
    We have $C'_u = \sum_{j=1}^d \matT_{uj} \cdot C_j$ and $C'_v = \sum_{j=1}^d \matT_{vj} \cdot C_j$.
    By the assumption that $Q' \in \cP(\bbR^d ; \Ind)$, $C'_u$ and $C'_v$ are independent.
    By the premise of the corollary, $\matT_{uk} \cdot \matT_{vk} \neq 0$.
    Hence, by Darmois's theorem, $C_k$ has a Gaussian distribution.
\end{proof}

Applying this corollary and using properties of orthogonal matrices, we obtain the main result:
\begin{claim*}[Restatement of \Cref{claim:nongaussianity}]
    $\cT(\cQ^d_\indng) \cap O(d) = \Relabelings(d)$.
\end{claim*}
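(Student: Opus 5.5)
The plan is to prove the two inclusions separately. The inclusion $\supseteq$ is routine; the substance lies in $\subseteq$, which I will derive from \Cref{cor:darmois} together with orthogonality.

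\textbf{Inclusion $\supseteq$.} Take $\tau\colon \bc \mapsto \matS \cdot \bc$ with $\matS$ a permutation matrix, and take any $Q \in \cQ^d_\indng$, for instance a product of $d$ standardized Laplace distributions. Then $E_\tau Q$ has the following properties.
\begin{itemize}
\item It is a product distribution, since it only reorders independent coordinates.
\item It has covariance $\matS \matS^\top = \matI_d$.
\item Its coordinates are the coordinates of $Q$ permuted, so at most one of them is Gaussian.
\item It remains in the measure class of Lebesgue measure, since $\tau$ is a linear bijection.
\end{itemize}
Hence $\tau \in \cT(\cQ^d_\indng)$, and clearly $\tau \in O(d)$.

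\textbf{Inclusion $\subseteq$.} Let $\tau\colon \bc \mapsto \matT \cdot \bc$ with $\matT \in O(d)$, and suppose $Q \in \cQ^d_\indng$ satisfies $Q' = E_\tau Q \in \cQ^d_\indng$. Fix any column $k$ of $\matT$ with at least two nonzero entries. Since both $Q$ and $Q'$ satisfy $\Ind$, \Cref{cor:darmois} shows that the $k$\textsuperscript{th} component of $Q$ is Gaussian. By $\NonGaussian$, at most one component of $Q$ is Gaussian, so at most one column $k_0$ of $\matT$ has two or more nonzero entries. Every other column has exactly one nonzero entry, because $\matT$ is invertible and so has no zero column. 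Since the columns are unit vectors, each column $j \neq k_0$ equals $\pm e_{\pi(j)}$. These columns are orthonormal, so the indices $\pi(j)$ for $j \neq k_0$ are distinct and occupy $d-1$ coordinates. Column $k_0$ is orthogonal to all of them, so it is supported on the single remaining coordinate. This contradicts its having two nonzero entries. Therefore every column has exactly one nonzero entry equal to $\pm 1$, i.e.\ $\matT$ is a signed permutation matrix.

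\textbf{Main obstacle: the signs.} A coordinate sign flip $c_i \mapsto -c_i$ preserves independence, unit variance, non-Gaussianity and the Lebesgue measure class. So the argument above really yields the group of signed permutations $\Relabelings(d) \circ \{\operatorname{diag}(\pm 1)\}$, which is $(\Relabelings(d) \circ \Scale(d)) \cap O(d)$, and not pure relabelings. I do not see any predicate in $\cQ^d_\indng$ that rules out these sign flips. To match the statement, I would therefore read $\Relabelings(d)$ here as relabeling up to sign, which is consistent with $\sim_\scale$ in the dictionary-learning section. Everything else is bookkeeping: checking that $\Cov(Q) = \matI_d$ makes the unit-norm step valid, and that $\NonGaussian$ already excludes degenerate (zero-variance Gaussian) components.
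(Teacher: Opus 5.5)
Your route is the paper's: the $\supseteq$ direction by direct verification, and the $\subseteq$ direction by applying \Cref{cor:darmois} to each column of $\matT$ with at least two nonzero entries and then invoking $\NonGaussian$.

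Your sign objection is correct, and it exposes an error in the paper rather than in your proof. The map $\tau\colon \bc \mapsto \mathrm{diag}(-1,1,\ldots,1)\cdot\bc$ is orthogonal and lies in $\cT_Q(\cQ^d_\indng)$ for every $Q \in \cQ^d_\indng$. It is not in $\Relabelings(d)$ as defined (permutation matrices), so the statement as written is false.

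The paper's proof fails at its assertion that an orthogonal matrix which is not a permutation matrix must have two columns, each with at least two nonzero entries. This is false for signed permutations. For orthogonal matrices that are not signed permutations the assertion is true, but the paper does not justify it; your orthonormality counting argument supplies that step.

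What actually holds, and what you prove, is $\cT(\cQ^d_\indng) \cap O(d) = (\Relabelings(d) \circ \Scale(d)) \cap O(d)$, the signed permutations. Correspondingly, the downstream linear ICA claim should give identifiability up to $\sim_{\textnormal{scale}}$ rather than $\sim_\lbl$.

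One minor correction to your bookkeeping remark: the unit-norm columns follow directly from $\matT \in O(d)$, not from $\Cov(Q) = \matI_d$.
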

\begin{proof}
    The direction $\Relabelings(d) \subseteq \cT(\cQ_\indng) \cap O(d)$ is straightforward, since non-Gaussianity and independence are unaffected by relabeling.

    To show that $\cT(\cQ_\indng) \cap O(d) \subseteq \Relabelings(d)$, let $\tau\colon c \mapsto \matT \cdot c$ for some orthogonal matrix $\matT$ which is not a permutation matrix.
    Then there must be at least two columns $j, k \in [d]$ such that $\matT$ has at least two nonzero entries in each of these columns.
    Next, suppose that $Q' = E_\tau Q$ for $Q, Q' \in \cP(\bbR^d ; \Ind)$.
    By \Cref{cor:darmois}, the $j$\textsuperscript{th} and $k$\textsuperscript{th} component of $Q$ must be Gaussian.
    Since this applies for any $Q$ and $Q'$, we have $\tau \not\in \cT(\cQ_\indng)$.
    Thus, $\cT(\cQ_\indng) \cap O(d) \subseteq \Relabelings(d)$.
\end{proof}


\section{METHODOLOGICAL IMPLICATIONS FOR SPARSE AUTOENCODERS}\label{appendix:saes-methods}

\subsection{Post-hoc Checks}\label{appendix:post-hoc-checks}

Compared to the model class $\cM_\spark(d, p ; s, \sigma)$, the model class $\cM_\sae(d, p ; s)$ lacks the spark constraint over the mixing kernel $K$.
As discussed in \Cref{sec:dictionary-learning}, the spark constraint ensures injectivity of the mixing kernel, which is required for the class of mixing kernels $\cK_\spark$ to be Blackwell reducible, and thus for the proof of \Cref{claim:dictionary-learning-id}.
This raises an important question: if one learns a model $\M \in \cM_\sae(d, p ; s)$, what can be concluded about the identifiability of this model?

As a partial answer to this question, we suggest performing a post-hoc check to determine whether $\M \in \cM_\spark(d, p ; s, \sigma)$.
In particular, after training a sparse autoencoder, one may check whether the matrix $\matG$ associated with its decoder obeys the desired spark constraint, i.e., whether $\spark(\matG) > 2s$.
If this condition holds, then one can make the following conclusion from \Cref{claim:dictionary-learning-id}: the model $\M$ is identifiable \emph{within} the model class $\cM_\spark(d, p ; s, \sigma)$.
This conclusion reflects a generally useful strategy: for computational reasons, one may avoid enforcing some constraints during optimization (e.g. the spark constraint), and instead rely on post-hoc checks to determine identifiability guarantees.
However, one must be mathematically careful in such cases: identifiability guarantees must be stated \emph{with respect to a particular model class} (\textit{c.f.} the definition of identifiability in \Cref{sec:intersection}).
Finally, we note that for such checks, computing the spark of a matrix is NP-hard, but there are computationally tractable lower bounds (e.g. based on mutual incoherence) \citep{elad2010sparse}.

\subsection{Amortized Posterior Inference}\label{appendix:amortized-posterior-inference}

When concept extraction needs to be performed efficiently, the posterior inference methods described in \Cref{sec:concept-extraction} may be too slow, e.g. even for a deterministic decoder $g_\psi\colon c \mapsto \matG_\psi \cdot c$, exactly computing the inverse $g_\psi^{-1}(z)$ via matching pursuit may be too expensive.
Hence, one may instead wish to learn a parameterized concept extractor (i.e., an encoder) $H_\phi$ for some parameters $\phi \in \Phi$, e.g. by training on a loss function that enforces $H_\phi \approx H_{\theta^*}$.

Here, we discuss two options for such amortization: \emph{post-hoc amortization}, where $\phi$ is optimized after training the generative model $\M_{\thetafix} = (Q_\omegafix, K_\psifix)$, and \emph{end-to-end amortization}, where $\phi$ is optimized at the same time as $\theta = (\omega, \psi)$.
As a reminder, for a model $\M_\theta = (Q_\omega, K_\psi)$, we let $J_\theta(c, z) = K_\psi(z \mid c) \cdot Q_\omega(c)$ denote its associated joint distribution over concepts and features, and we let $P_\theta(z) = \int K_\psi(z \mid dc) \cdot Q_\omega(dc)$ denote its associated marginal distribution over features.
Further, we let $H_\theta(c \mid z) = \frac{K_\psi(z \mid c) \cdot Q_\omega(c)}{P_\theta(z)}$ denote the induced posterior.
Finally, we will see that many approaches can be interpreted as performing an amortized version of \emph{maximum a posteriori} (MAP) inference, so we define the \defword{induced MAP function} $\barh_\theta\colon \cZ \to \cC$ as follows:
\begin{equation*}
    \barh_\theta(z) 
    \defeq 
    \argmax_{c \in \cC} H_\theta(c \mid z)
    =
    \argmax_{c \in \cC} \log K_\psi(z \mid c) + \log Q_\omega(c),
\end{equation*}
where for simplicity we assume there is a unique maximizer for all $z \in \cZ$.

\paragraph{SAE Running Example}
To connect the maximum likelihood perspective (common in generative modeling) with the regularized reconstruction perspective (common for sparse autoencoders), we follow \citet{geadah2024sparse}, who (in our terminology) consider mixing kernels of the form $K_\psi(\cdot \mid c) = \cN(\matG_\psi \cdot c, \sigma^2 I)$, and who consider a fixed concept distribution $Q$, e.g. a product of Laplace distributions $Q(c_i) \propto \exp(-\alpha |c_i|)$ over each component $i$ of $\rvC$.
Under these choices, one obtains
\begin{equation}\label{eqn:geadah-likelihoods}
    \log K_\psi(z \mid c) \simeq -\frac{1}{2 \sigma^2} \| z - \matG_\psi \cdot c \|_2^2
    \quad\text{and}\quad 
    \log Q(c) \simeq -\alpha \| c \|_1,
\end{equation}
where $\simeq$ denotes equality up to an absolute constant.

\subsubsection{Post-hoc amortization}
Assuming evaluation access to $H_\phi$, a natural choice is the cross entropy loss
\begin{equation*}
    \cL_{\text{CE}}(\phi ; \thetafix)
    \defeq
    \bbE_{(\rvC, \rvZ) \sim J_\thetafix} \left[
        -\log H_\phi(\rvC \mid \rvZ)
    \right],
\end{equation*}
which by well-known theory is optimized when $H_\phi(c \mid z) = H_\thetafix(c \mid z)$ almost everywhere on $P_\thetafix$ (assuming $\{ H_\phi \}_{\phi \in \Phi}$ has universal approximation capability).
This approach would resemble recent \emph{neural posterior estimation} approaches in simulation-based inference \citep{lueckmann2017flexible,deistler2025simulation} and \emph{inference compilation} approaches in probabilistic programming \citep{le2017inference}, with roots in the ``wake-sleep" algorithm \citep{hinton1995wake}.
As an illuminating example, one may consider optimizing over concept extractors of the form $H_\phi(\cdot \mid z) = \cN(h_\phi(z), \sigma^2 I)$, giving
\begin{equation*}
    \cL_{\text{CE}}(\phi ; \thetafix)
    \simeq
    \bbE_{(\rvC, \rvZ) \sim J_\thetafix} \left[
        -\frac{1}{2\sigma^2} \| \rvC - h_\phi(\rvZ) \|_2^2
    \right],
\end{equation*}
a least-squares objective for reconstructing the sampled concept $\rvC$ which generated the associated $\rvZ$.

\paragraph{Amortized MAP inference}
As described in \citet{geadah2024sparse}, the use of a \emph{deterministic} encoder can be thought of as performing maximum a posteriori (MAP) inference.
Hence, to learn an amortized MAP function $h_\lambda$ (with the aim that $h_\phi \approx \barh_\theta)$, one may minimize the objective
\begin{equation*}
    \cL_{\text{MAP}}(\phi ; \thetafix)
    \defeq
    \bbE_{\rvZ \sim P^*} \left[
        -\log K_\psifix(\rvZ \mid h_\phi(\rvZ))
        - \log Q_\omegafix(h_\phi(\rvZ))
    \right].
\end{equation*}
Thus, assuming the log-likelihoods in \Cref{eqn:geadah-likelihoods}, one obtains
\begin{equation*}
    \cL_{\text{MAP}}(\phi ; \thetafix)
    \simeq
    \bbE_{\rvZ \sim P^*} \left[ 
    \frac{1}{2 \sigma^2} \| \rvZ - \matG_\psifix \cdot h_\phi(\rvZ) \|_2^2
    + \alpha \| h_\phi(\rvZ) \|_1
    \right],
\end{equation*}
which is precisely the standard SAE objective, but with the decoder parameters $\psi$ frozen to $\psi = \psifix$.

\subsubsection{End-to-end amortization}
As a final option, the concept extractor $H_\phi$ could be trained in parallel with the generative model $\M_\theta$.
When the prior is learnable, such end-to-end training could be performed using a general form of the evidence lower bound (ELBO)\footnote{For consistency with other loss functions, we present the ELBO as an upper bound on the \emph{negative} log likelihoood.}, as found in \citet{tomczak2018vae}:
\begin{align*}
    \bbE_{\rvZ \sim P^*} [-\log P^*(\rvZ)]
    &\leq 
    \cL_{\text{GELBO}}(\phi, \psi, \omega),\ \text{where}
    \\
    \cL_{\text{GELBO}}(\phi, \psi, \omega)
    &\defeq
    \bbE_{\rvZ \sim P^*, \rvC \sim H_\psi(\cdot \mid \rvZ)} \left[ 
    -\log K_\psi(\rvZ \mid \rvC)
    -\log Q_\omega(\rvC)
    +\log H_\phi(\rvC \mid \rvZ)
    \right]
\end{align*}
When the prior is fixed to $Q$ (e.g. a Laplace distribution as describe above), this bound reduces to the more standard ELBO:
\begin{equation*}
    \cL_{\text{ELBO}}(\phi, \psi) \defeq 
    \bbE_{\rvZ \sim P^*, \rvC \sim H_\psi(\cdot \mid \rvZ)} \left[ 
    \log K_\psi(\rvZ \mid \rvC) + \log \frac{Q(\rvC)}{H_\phi(\rvC \mid \rvZ)}
    \right].
\end{equation*}
To connect our framework with traditional SAEs (which use deterministic encoders), one may take the regularization perspective described in \citet{ghosh2019variational}.
Alternatively, to continue with the maximum likelihood perspective, one may consider the \defword{deterministic variational family} $H_\phi = \Dirac_{h_\phi}$.
Formally, this choice cannot be interpreted as a density, but can be considered as a certain limit of the variational distributions $H_\phi^\beta(c \mid z) = \cN(h_\phi(z), \beta^{-1} I)$ as $\beta \to \infty$\footnote{This reduction can also be seen more intuitively, though less rigorously: under $H_\phi = E_{h_\phi}$, the expectation is only evaluated at $\rvC = h_\phi(\rvZ)$, where $H_\phi(\rvC \mid \rvZ) = \infty$ and the second term vanishes.}, yielding
\begin{equation*}
    \cL_{\text{ELBO-lim}}(\phi, \psi)
    \defeq
    \bbE_{\rvZ \sim P^*} \left[
    -\log K_\psi(\rvZ \mid h_\phi(\rvZ))
    \right].
\end{equation*}
Under the likelihood in \Cref{eqn:geadah-likelihoods}, we thus have
\begin{equation*}
    \cL_{\text{ELBO-lim}}(\phi, \psi)
    \simeq
    \bbE_{\rvZ \sim P^*} \left[
        \frac{1}{2\sigma^2} \| \rvZ - \matG_\psi \cdot h_\phi(\rvZ) \|_2^2
    \right],
\end{equation*}
the classic reconstruction loss for SAEs.
Hence, the final distinction between existing SAE approaches is their choice of the deterministic variational family: many SAEs restrict to \emph{projection encoders} (as described below), whereas matching pursuit SAE \citep{costa2025flat} and other methods \citep{tolooshams2021stable,xiao2025sc} use variational family which more closely resemble classical methods for computing $g_\psi$, such as matching pursuit or the iterative shrinkage and thresholding algorithm \citep{daubechies2004iterative,gregor2010learning}.

\subsection{Comparison to the Projective Assumptions Framework}\label{appendix:hindupur-comparison}

\citet{hindupur2025projecting} discuss several SAE architectures, including ReLU SAEs \citep{cunningham2023sparse}, TopK SAEs \citep{makhzani2013k}, JumpReLU SAEs \citep{rajamanoharan2024jumping,lieberum2024gemma}, and their newly-introduced SparseMax SAEs.
They show that each of these architectures uses a \defword{constraint set} $\cS \subseteq \bbR^d$ and a \defword{projection encoder} $h\colon \cZ \to \cS$ of the form
\begin{equation}\label{eqn:projection-encoder}
    h_\phi(z) = \Pi_\cS(\matW_\phi \cdot z + b_\phi)
\end{equation}
for some matrix $\matW_\phi \in \bbR^{d \times p}$ and some vector $b_\phi \in \bbR^{d}$, where $\Pi_\cS$ denotes the projection operator onto $\cS$.\footnote{
Strictly speaking, the projection operator $\Pi_\cS(v) = \argmin_{v' \in \cS} \| v - v' \|_2^2$ is only well-defined when $\cS$ is closed and convex, so that the minimizer exists and is unique.
This point was not explicitly mentioned in \citet{hindupur2025projecting}, but applies to the constraint sets they considered.
}
As noted in \citet{hindupur2025projecting}, the form of $h_\phi$ entails implicit assumptions over (1) the geometry of the concept space $\cC$ and (2) the functional relationship between concepts and features.
Our framework proposes to make these assumptions more transparent by (1) explicitly defining the concept space $\cC$ and (2) explicitly stating assumptions on the model class $\cM$.

One straightforward implication of assuming a projection encoder is that the concept space $\cC$ is constrained by the constraint set $\cS$ of the projection operator.
For example, since ReLU SAEs project onto the nonnegative orthant (i.e., $\ReLU(v) = \Pi_{\cS}(v)$ for $\cS = \bbR^d_{\geq 0}$), they impose $\cC \subseteq \bbR^d_{\geq 0}$; since TopK SAEs project onto the the set of nonnegative sparse vectors, they impose $\cC \subseteq \bbR^d_{\geq 0} \cap \SparseVectors^d_s$; and so on, as summarized in \Cref{tab:explicit-constraints}.

\begin{table}[h]
    \centering
    \begin{tabular}{c|c}
         Model & $\cS$ \\
         \hline
         ReLU SAE & $\bbR^d_{\geq 0}$ \\
         TopK & $\bbR^d_{\geq 0} \cap \SparseVectors^d_s$ \\
         Heaviside & $\{ 0, 1 \}^d$  \\
         JumpReLU & $\bbR^d_{\geq 0}$  \\
         SparseMax & $\Simplex^{d-1}$ 
    \end{tabular}
    \caption{Constraint sets used by different SAE architectures.}
    \label{tab:explicit-constraints}
\end{table}

However, the assumption of a projection encoder imposes a less straightforward assumption on the model class $\cM$.
In general, this assumption cannot be decomposed into separate assumptions on the concept distribution $Q$ and the mixing kernel $K$.
Instead, the implicit model class used by an SAE with a linear decoder and the projection encoder in \Cref{eqn:projection-encoder} is as follows:
\begin{equation*}
    \cM_\projenc(d, p ; \cS) = \cM_\all(\cC, \cZ ; \ProjEnc_\cS),
\end{equation*}
where $\ProjEnc_\cS\colon \cM_\all(\cC, \cZ) \to \{ \True, \False \}$ is a \emph{model predicate}, which is $\True$ for $\M = (Q, K)$ if and only if the following conditions hold:
\begin{enumerate}
    \item $K\colon c \mapsto \matG \cdot c$ for some matrix $\matG \in \bbR^{p \times d}$.
    \item $\matG$ is injective on $\supp(Q)$, with a left-inverse of the form $h(z) = \Pi_\cC(\matW \cdot z + b)$
\end{enumerate}
In particular, the second condition involves \emph{both} the concept distribution $Q$ and the mixing kernel $K$.
Due to this lack of product structure on the model class, \Cref{thm:intersection} cannot be applied to these settings, and we propose that the use of projection encoders should be treated as an \emph{approximation} to the true left-inverse $h$ (as discussed in \Cref{sec:methodological}, rather than as additional assumption.
More generally, this discrepancy reflects well-known dichotomies, e.g. the dichotomy between \emph{generative} and \emph{discrimative} models \citep{ng2001discriminative}, and the dichotomy between causal and anticausal learning \citep{kugelgen2020semi}.
Indeed, in the general case of a stochastic mixing kernel (decoder), we have
\begin{equation*}
    H^\M(c \mid z) = \frac{K(z \mid c) \cdot Q(c)}{\int_\cC K(z \mid dc) \cdot Q(c)}
\end{equation*}
by Bayes' rule, indicating that assumptions directly on $H^\M$ should typically be expected to involve assumptions on \emph{both} $Q$ and $K$.
Given our focus on the generative perspective, we do not explore this issue further here, but note it as an interesting direction for future elaboration upon the framework presented here.

\section{ADDITIONAL APPLICATION: FINITE MIXTURE MODELS}\label{appendix:finite-mixture-models}

As a final example, we consider a model class with stochastic mixing functions.
Broadly speaking, a \emph{finite mixture model} is a latent generative concept model where $\cC = [d]$.
We endow $\cC$ with the standard counting measure $\mu$, and we employ the concept predicate $\MeasureClass_\mu$, so that $\MeasureClass_\mu(Q)$ holds if and only if $Q(\{i\}) > 0$ for all $i \in [d]$.
Other than this constraint, we make no restrictions on the concept distribution.

For simplicity, we consider the case of \emph{Gaussian} mixture models.
In particular, we define the mixing predicate $\DistinctGaussian$, where $\DistinctGaussian(K)$ holds if and only $K(\cdot \mid c)$ is a multivariate Gaussian distribution for all $c \in \cC$, and $K(\cdot \mid c) \neq K(\cdot \mid c')$ for any $c \neq c'$.

Putting these constraints together, we obtain the model class
\begin{equation*}
    \cM_\gmm(d, p)
    \defeq
    \cP([d] ; \MeasureClass_\mu)
    \times
    \cK([d] \to \bbR^p ; \DistinctGaussian).
\end{equation*}

As discussed in \Cref{example:br-gaussian-mixture}, $\cK([d] \to \bbR^p ; \DistinctGaussian)$ is Blackwell reducible, so we can apply \Cref{cor:intersection-class}, which gives the result almost immediately.
Here, we use $\sim_\perm$ as shorthand for the equivalence relation $\sim_G$ with $G = \Permutations(d)$, where $\Permutations(d)$ denotes the set of all bijective functions $\tau\colon [d] \to [d]$.

\begin{claim}
    $\cM_\gmm$ is identifiable up to $\sim_\perm$.
\end{claim}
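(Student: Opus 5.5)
The plan is to apply \Cref{cor:intersection-class} with $\cQ = \cP([d] ; \MeasureClass_\mu)$, $\cK = \cK([d] \to \bbR^p ; \DistinctGaussian)$, and $G = \Permutations(d)$. Three things must be verified: that $\cQ$ is a full support measure class, that $\cK$ is CBR, and that $\cT(\cQ) \cap \cT(\cK) \subseteq \Permutations(d)$.

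\textbf{Full support.} Take the counting measure $\mu$ as the base measure. Since $[d]$ carries the discrete topology, $\supp(\mu) = [d]$. Every $Q \in \cQ$ puts positive mass on each singleton, so $Q \ll \mu$ and $\mu \ll Q$, i.e., $Q \sim \mu$.

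\textbf{CBR.} Use the representation of \Cref{example:br-gaussian-mixture}: set $\cI = \bbR^p \times \PSD^p$, let $B\colon (\mu, \Sigma) \mapsto \cN(\mu, \Sigma)$, and write $K = B \Dirac_\phi$ with $\phi\colon c \mapsto (\mu_c, \Sigma_c)$.
\begin{itemize}
    \item $\phi$ is injective because $\DistinctGaussian$ requires the components $K(\cdot \mid c)$ to be pairwise distinct, and a Gaussian is determined by its parameters.
    \item $\phi$ is automatically continuous, since its domain $[d]$ is discrete.
    \item $B$ is measure-separating on $\cR = \MeasInj([d] \to \cI) \circ \cP([d])$ by the argument of \Cref{appendix:measure-separating-mixture}. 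That argument invokes Yakowitz--Spragins: Gaussian families are linearly independent, so finite signed combinations in $\cR - \cR$ lie in the nullspace of $B_\sharp$ only if they vanish.
\end{itemize}
I expect this to be the main obstacle. The paper only sketches this step and defers to \citet{yakowitz1968identifiability}, so I would rely on that result and not reprove linear independence of distinct Gaussians. A minor point to check is that the paper's \Cref{example:br-gaussian-mixture} assumes distinct means, while $\DistinctGaussian$ only assumes distinct distributions. This is harmless, because injectivity of $\phi$ needs only distinct $(\mu_c, \Sigma_c)$ pairs.

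\textbf{Intersection.} Any $\tau \in \MeasInj([d] \to [d])$ is an injective map of a finite set into itself. It is therefore a bijection, so $\tau \in \Permutations(d)$. Hence $\cT(\cQ) \cap \cT(\cK) \subseteq \MeasInj([d] \to [d]) = \Permutations(d)$ with no further work.

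\textbf{Conclusion.} $\Permutations(d)$ is a transition group, being closed under inverses and composition. \Cref{cor:intersection-class} therefore yields $\ID(\cM_\gmm, \sim_\perm)$.
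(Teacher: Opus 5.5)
Your proposal is correct and follows the paper's route: apply \Cref{cor:intersection-class}, take Blackwell reducibility from \Cref{example:br-gaussian-mixture} (deferring measure-separation to Yakowitz--Spragins), and note that every injective self-map of $[d]$ is a permutation, which is the paper's one-line observation $\cT(\cQ) = \Permutations(d)$. Your additional checks fill in details the paper leaves implicit. These are the full support of the counting measure, the automatic continuity of $\phi$ on the discrete space $[d]$, and the fact that distinct component distributions (rather than distinct means) already make $\phi$ injective.
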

\begin{proof}
    Let $\cQ = \cP([d] ; \MeasureClass_\mu)$.
    Then $\cT(\cQ) = \Permutations(d)$, so we immediately have the result.
\end{proof}

As this result shows, the main obstacle to showing identifiability of certain classes of finite mixture models is showing that the kernel class $\cK$ is Blackwell reducible.
Indeed, showing this property is the essence of several other classic identifiability results for finite mixture models, e.g. \citet{yakowitz1968identifiability}.

\printbibliography[heading=mybibhead,title={Additional References}]
\end{refsection}

\end{document}